\theoremstyle{definition}
\newtheorem{theorem}{Theorem}[section]
\newtheorem{assumption}{Assumption}[section]
\newtheorem{lemma}{Lemma}[section]
\newtheorem{definition}{Definition}[section]
 \def\section{\@startsection {section}{1}{\z@}{3.5ex plus -1ex minus -.2ex}{2.3 ex plus .2ex}{\bf}}
 \def\@seccntformat#1{\csname the#1\endcsname.\ }
 \def\subsection{\@startsection {subsection}{1}{\z@}{3.5ex plus -1ex minus -.2ex}{2.3 ex plus .2ex}{\bf}}
 \def\@seccntformat#1{\csname the#1\endcsname.\ }
\numberwithin{equation}{section} 
\newcommand{\argmin}{\operatornamewithlimits{argmin}}
\newcommand{\argmax}{\operatornamewithlimits{argmax}}
\newcommand{\LLL}{\mbox{1}\hspace{-0.25em}\mbox{l}}
\begin{document}
\begin{center}
{\Large Active Learning for Level Set Estimation Using Randomized Straddle Algorithms} \vspace{5mm} 

Yu Inatsu$^{1,\ast}$ \  Shion Takeno$^{2,3}$ \  Kentaro Kutsukake$^{4,5}$ \      Ichiro Takeuchi$^{2,3}$ \vspace{3mm}   

$^1$ Department of Computer Science, Nagoya Institute of Technology    \\
$^2$ Department of Mechanical Systems Engineering, Graduate School of Engineering, Nagoya University \\
$^3$ RIKEN Center for Advanced Intelligence Project \\
$^4$ Institute of Materials and Systems for Sustainability, Nagoya University \\
$^5$ Department of Materials Process Engineering, Graduate School of Engineering, Nagoya University \\
$^\ast$ E-mail: inatsu.yu@nitech.ac.jp
\end{center}

\vspace{5mm} 

\begin{abstract}
Level set estimation (LSE), the problem of identifying the set of input points where a function takes value above (or below) a given threshold, is important in practical applications. 
When the function is expensive-to-evaluate and black-box, the \textit{straddle} algorithm, which is a representative heuristic for LSE based on Gaussian process models, and its extensions having theoretical guarantees have been developed. 
However, many of existing methods include a confidence parameter $\beta^{1/2}_t$ that must be specified by the user, and methods that choose $\beta^{1/2}_t$ heuristically do not provide theoretical guarantees. 
In contrast, theoretically guaranteed values of $\beta^{1/2}_t$ need to be increased depending on the number of iterations and candidate points, and are conservative and not good for practical performance. 
In this study, we propose a novel method, the \textit{randomized straddle} algorithm, in which $\beta_t$ in the straddle algorithm is replaced by a random sample from the chi-squared distribution with two degrees of freedom. 
The confidence parameter in the proposed method has the advantages of not needing adjustment, not depending on the number of iterations and candidate points, and not being conservative. 
Furthermore, we show that the proposed method has theoretical guarantees that depend on the sample complexity and the number of iterations. 
Finally, we confirm the usefulness of the proposed method through numerical experiments using synthetic and real data.
\end{abstract}

\section{Introduction}
In various practical applications, including engineering, level set estimation (LSE) the estimation of the region where the value of a function is above (or below) a given threshold, $\theta$ is important. 
A specific example of LSE is the estimation of defective regions in materials for quality control.  
For instance, in silicon ingots, which are used in solar cells, the carrier lifetime value a measure of the ingot's quality is observed at each point on the ingot's surface before shipping, allowing identification of regions that can or cannot be used as solar cells. 
Since many functions encountered in practical applications, such as the carrier lifetime in the silicon ingot example, are black-box functions with high evaluation costs, it is desirable to identify the desired region without performing an exhaustive search of these black-box functions.

Bayesian optimization (BO) \citep{shahriari2015taking} is a powerful tool for optimizing black-box functions with high evaluation costs. 
BO predicts black-box functions using surrogate models and adaptively observes the function values based on a criterion called acquisition functions (AFs). 
Many studies have focused on BO, particularly on developing new AFs. 
Among these, BO based on the AF known as Gaussian process upper confidence bound (GP-UCB) \citep{Srinivas:2010:GPO:3104322.3104451} offers a theoretical guarantee for finding the optimal solution and is a useful method that is flexible and extendable to various problem settings. 
GP-UCB-based methods have been proposed in various settings, such as the LSE algorithm \citep{gotovos2013active}, multi-fidelity BO \citep{kandasamy2016gaussian,kandasamy2017multi}, multi-objective BO \citep{zuluaga2016pal,pmlr-v238-inatsu24a}, high-dimensional BO \citep{kandasamy2015high,rolland2018high}, parallel BO \citep{contal2013parallel}, cascade BO \citep{kusakawa2022bayesian}, and robust BO \citep{pmlr-v108-kirschner20a}. 
These GP-UCB-based methods, like the original GP-UCB-based BO, provide some theoretical guarantee for optimality in each problem setting. 

However, GP-UCB and its related methods require the user to specify a confidence parameter, $\beta^{1/2}_t$, to adjust the trade-off between exploration and exploitation, where $t$ is the number of iterations in BO. 
As a theoretical value for GP-UCB, \cite{Srinivas:2010:GPO:3104322.3104451} proposes that $\beta^{1/2}_t$ should increase with the iteration $t$, but this value is conservative, and \cite{pmlr-v202-takeno23a} has pointed out that it results in poor practical performance. 
To solve this issue, \cite{berk2021randomised} proposed a randomized GP-UCB (RGP-UCB), replacing $\beta_t$ with a ramdom sample from a gamma distribution. 
They demonstarated through numerical experiments that the practical performance of RGP-UCB is better than that of the original GP-UCB. 
Althought they also provided the theoretical regret analysis, their theoretical results and  proofs of theorems contain some non-ignorable technical issues (see, Appendix C in \cite{pmlr-v202-takeno23a}).
Recently, \cite{pmlr-v202-takeno23a} proposed an improved RGP-UCB (IRGP-UCB), which uses an AF that randomizes $\beta_t$ in GP-UCB by replacing it with a random sample from a two-parameter exponential distribution.   
IRGP-UCB does not require parameter tuning, and the realized values from the exponential distribution are less conservative than the theoretical values in GP-UCB, resulting in better practical performance. 
Furthermore, it has been shown that IRGP-UCB provides a tighter bound for the Bayesian regret, one of the optimality measures in BO, than existing methods. 
However, it is not clear whether IRGP-UCB can be extended to various methods, including LSE. 
This study proposes a new method for LSE based on the randomization used in IRGP-UCB.  

\subsection{Related Work}\label{sec:related_work} 
GPs \citep{GPML} are often used as surrogate models in BO\footnote{
Although both BO and LSE for black-box functions adaptively select the next input point,  their objectives are fundamentally different and we will leave the detailed description of BO to a comprehensive survey \citep{shahriari2015taking}. 
}, and methods using GPs for LSE have also been proposed. 
A representative heuristic using GPs is the straddle heuristic by \cite{bryan2005active}. 
The straddle method balances the trade-off between the absolute value of the difference between the GP model's predicted mean and the threshold value, and the uncertainty of the prediction. However, no theoretical analysis has been performed on this method. 
An extension of the straddle heuristic to cases where the black-box function is a composite function was proposed by \cite{bryan2008actively}, but this too is a heuristic method that lacks theoretical analysis. 

As a GP-UCB-based method using GPs, \cite{gotovos2013active} proposed the LSE algorithm. 
The LSE algorithm uses the same confidence parameter, $\beta^{1/2}_t$ as GP-UCB and is based on the degree of violation from the threshold relative to the confidence interval determined by the GP prediction model. It has been shown that the LSE algorithm returns an $\epsilon$-accurate solution for the true set with high probability. 
\cite{bogunovic2016truncated} proposed the truncated variance reduction (TRUVAR) method, which can handle both BO and LSE. 
TRUVAR also accounts for situations where the observation cost varies across observation points and is designed to maximize the reduction in uncertainty in the uncertain set for each observation point per unit cost. 
Additionally, \cite{shekhar2019multiscale} proposed a chaining-based method, which handles the case where the input space is continuous. 
As an expected improvement-based method, \cite{zanette2019robust} proposed the maximum improvement for level-set estimation (MILE) method.  
MILE is an algorithm that selects the input point with the highest expected number of points estimated to be in the super-level set, one step ahead, based on data observation. 

LSE methods have also been proposed for different settings of black-box functions. 
For example, \cite{letham2022look} introduced a method for cases where the observation of the black-box function is binary. 
In the robust BO framework, where the inputs of black-box functions are subject to uncertainty, LSE methods for various robust measures have been developed. 
\cite{iwazaki2020bayesian} proposed LSE for probability threshold robustness measures, and \cite{inatsu2021active} introduced LSE for distributionally robust probability threshold robustness measures both of which are acquisition functions based on MILE.
Additionally, \cite{hozumi2023adaptive} proposed a straddle-based method within the framework of transfer learning, where a large amount of data for similar functions is available alongside the primary black-box function to be classified. \cite{10.1162/neco_a_01332} introduced a MILE-based method for the LSE problem in settings where the uncertainty of the input changes depending on the cost. 
\cite{pmlr-v151-mason22a} addressed the LSE problem in the context where the black-box function is an element of a reproducing kernel Hilbert space. 

The straddle method, LSE algorithm, TRUVAR, chaining-based algorithm, and MILE, which have been proposed under settings similar to those considered in this study, have the following issues. 
The straddle method is not an acquisition function proposed based on GP-UCB, but it includes the confidence parameter $\beta^{1/2}_t$, which is essentially the same as in GP-UCB. However, the value of this parameter is determined heuristically, resulting in a method without theoretical guarantees. 
The LSE algorithm and TRUVAR have been theoretically analyzed, but, like GP-UCB, they require increasing the theoretical value of the confidence parameter according to the iteration $t$, which makes them conservative. 
The chaining-based algorithm can handle continuous spaces through discretization, but it involves many adjustment parameters. The recommended theoretical values depend on model parameters, including kernel parameters of the surrogate model, and are known only for specific settings. 
MILE is designed for cases with a finite number of candidate points and does not support continuous settings like the chaining-based algorithm. 

\subsection{Contribution}
This study proposes a novel straddle AF called the \textit{randomized straddle}, which introduces the confidence parameter randomization technique used in IRGP-UCB and solves the problems described in Section \ref{sec:related_work}. 
Figure \ref{fig:confidence_parameter} shows a comparison of the confidence parameters in the proposed AF and those in the LSE algorithm. 
The contributions of this study are as:

\begin{itemize}
\item This study proposes a randomized straddle AF, which replaces $\beta_t$ in the straddle heuristic with a random sample from the chi-squared distribution with two degrees of freedom 
(one-parameter exponential distribution with parameter $1/2$). 
We emphasize that unlike the LSE algorithm, the confidence parameter in the randomized straddle does not need to increase with the iteration $t$. 
Additionally, $\beta^{1/2}_t$ in the LSE algorithm depends on the number of candidate points $|\mathcal{X}|$, and $\beta^{1/2}_t$ increases as $|\mathcal{X}|$ increases, while $\beta^{1/2}_t$ in 
 the randomized straddle does not depend on $|\mathcal{X}|$, and can be applied even when $\mathcal{X}$ is an infinite set. 
Furthermore, the expected value of the realized value of $\beta^{1/2}_t$ in the randomized straddle is $\sqrt{2 \pi}/2 \approx 1.25$, which is less conservative than the theoretical value in the LSE algorithm. 
\item We show that the randomized straddle guarantees that the expected loss for misclassification in LSE converges to 0. 
In particular, for the misclassification loss $r_t =\frac{1}{|\mathcal{X} |} \sum_{ {\bm x} \in \mathcal{X}} l_t ({\bm x} ) $,  the randomized straddle guarantees $\mathbb{E} [r_t] = O( \sqrt{ \gamma_t/t })$, where $l_t({\bm x} ) $ is 0 if the input point ${\bm x}$ is correctly classified, and $| f({\bm x} ) -\theta|$, if misclassified, and  $\gamma_t$ is the maximum information gain which is a commonly used sample complexity measure.
\item Additionally, we conducted numerical experiments using synthetic and real data, which confirmed that the proposed method has performance equal to or better than existing methods. 
\end{itemize}

\begin{figure}[tb]
\begin{center}
 \begin{tabular}{ccc}
 \includegraphics[width=0.33\textwidth]{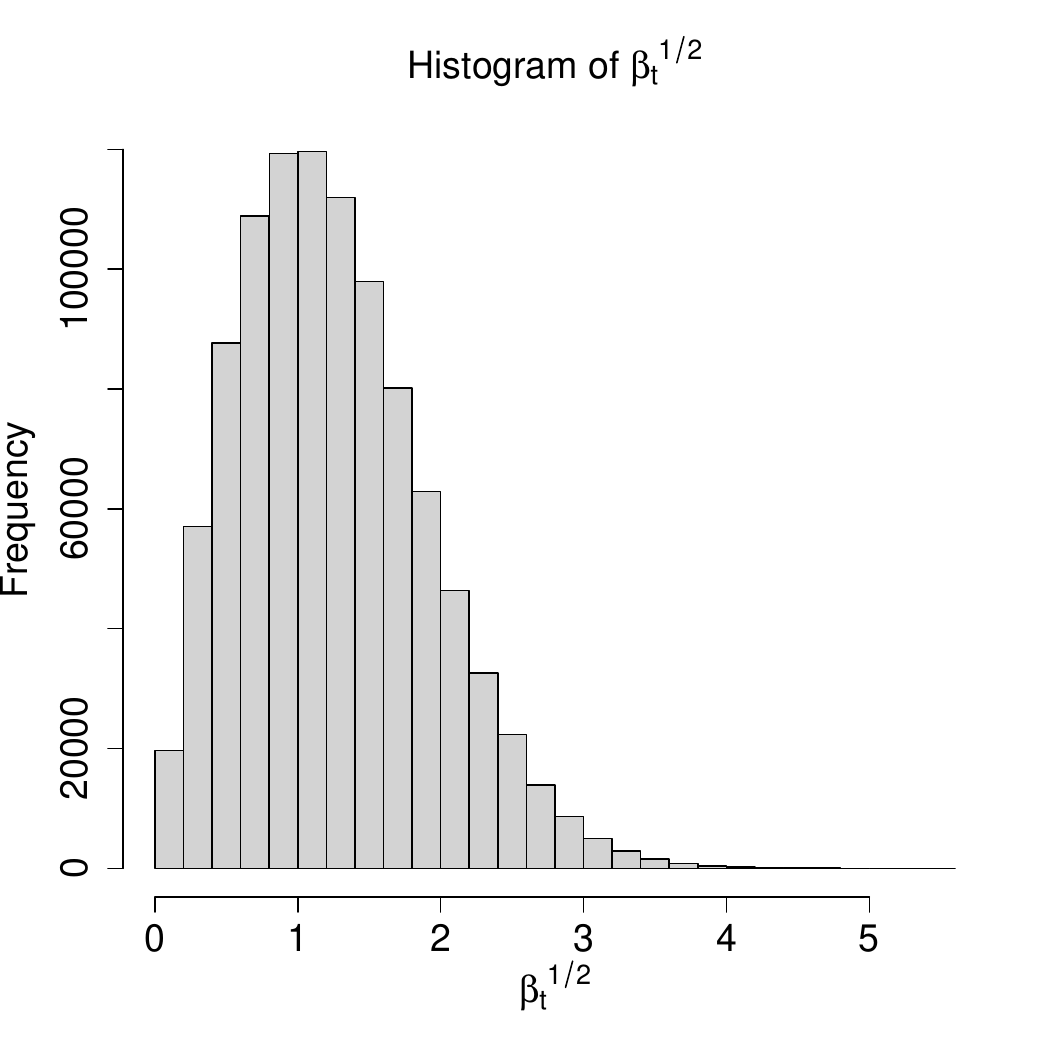} &
 \includegraphics[width=0.33\textwidth]{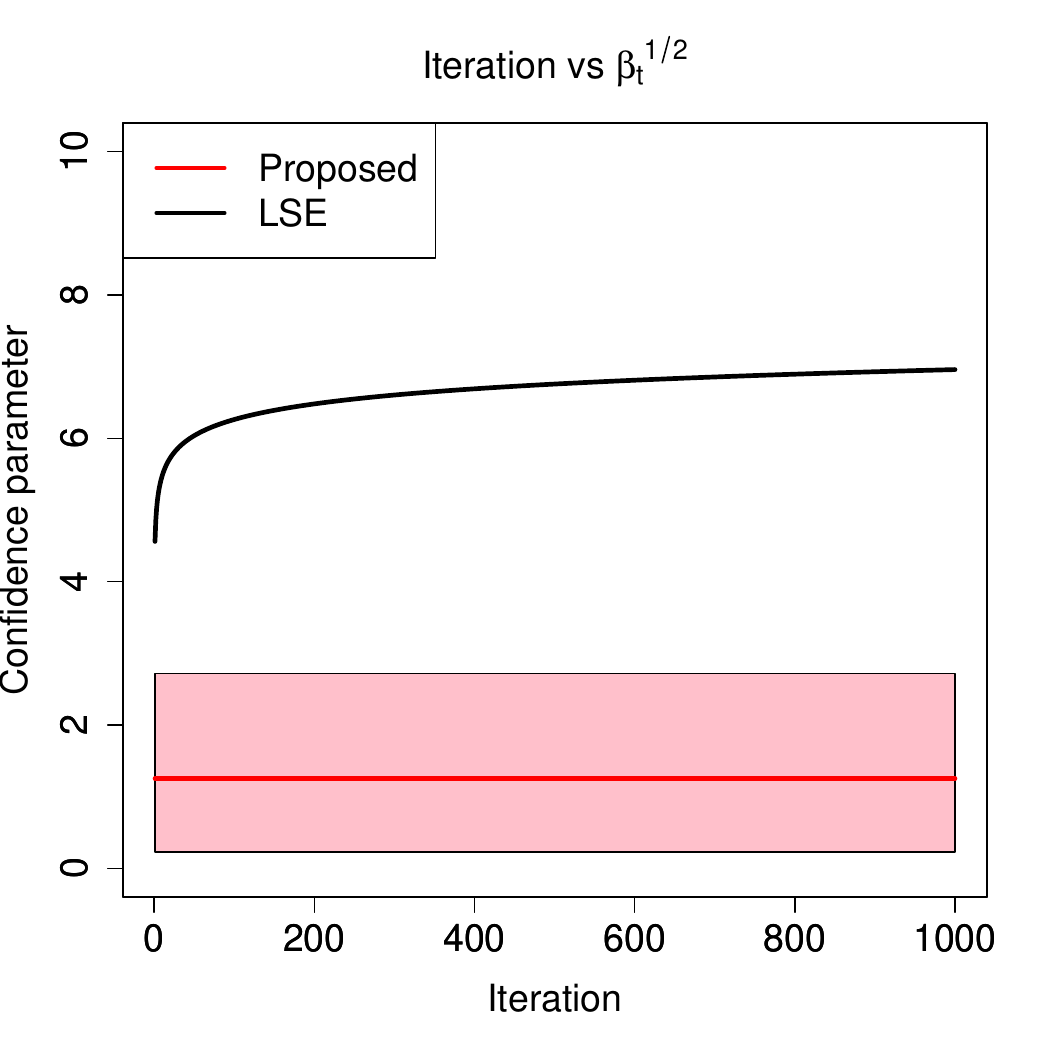} &
 \includegraphics[width=0.33\textwidth]{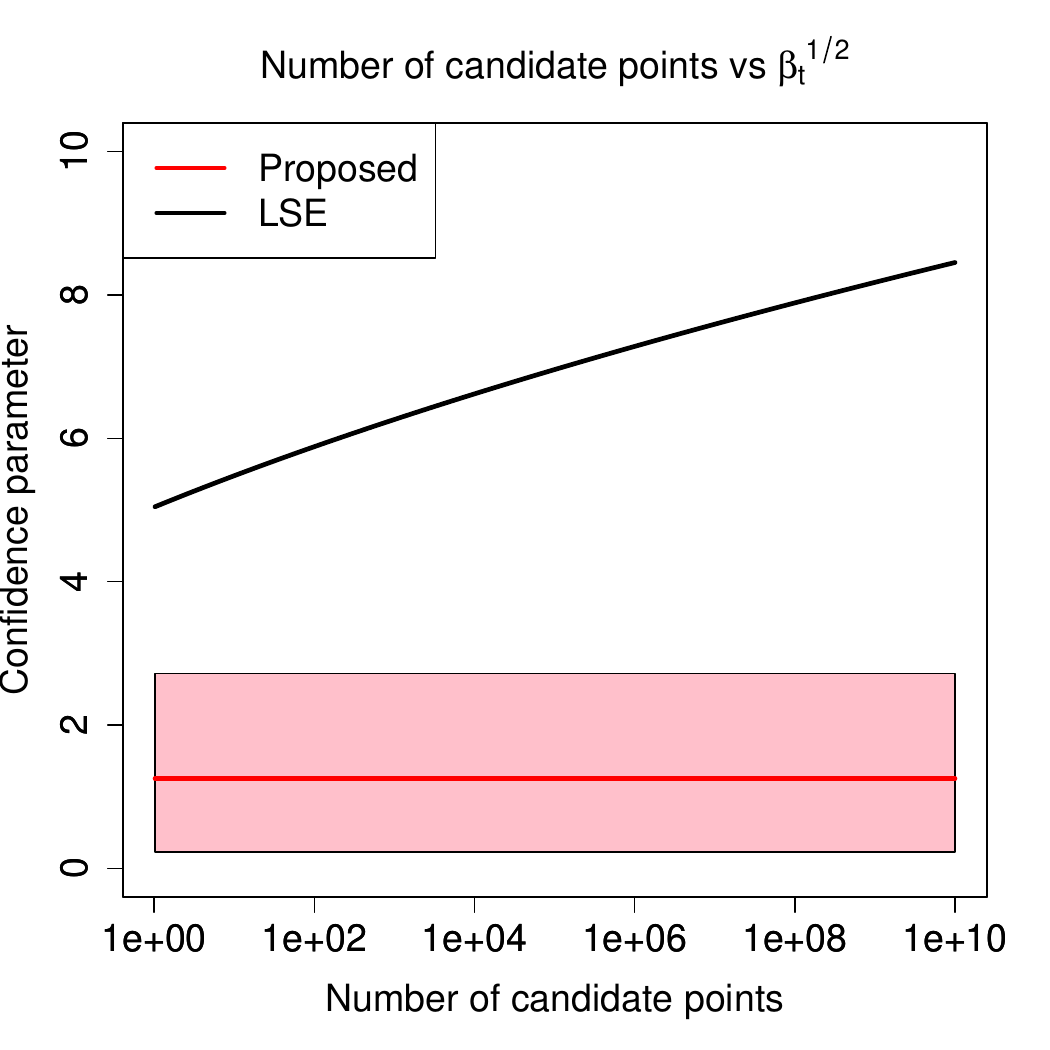} 
 \end{tabular}
\end{center}
 \caption{
Comparison of the confidence parameter $\beta^{1/2}_t$ in the randomized straddle and LSE algorithms. 
The left-hand side figure shows the histogram of $\beta^{1/2}_t$ when $\beta_t$ is sampled 1,000,000 times from the chi-squared  distribution with two degrees of freedom. 
The red line in the center and right figure denotes $\mathbb{E} [\beta^{1/2}_t] = \sqrt{2 \pi }/2 \approx 1.25$, the shaded area denotes the $95 \%$ confidence interval of $\beta^{1/2}_t$, and the black line denotes the theoretical value of $\beta^{1/2}_t$ in the LSE algorithm given by $\beta^{1/2}_t = \sqrt{2 \log (|\mathcal{X}| \pi ^2 t^2 /(6 \delta) ) }$, where  $\delta =0.05$. 
The figure in the center shows the behavior of $\beta^{1/2}_t$ as the number of iterations $t$ increases when the number of candidate points $|\mathcal{X}|$ is fixed at 1000, whereas the figure on the right shows the behavior of $\beta^{1/2}_t$ as the number of candidate points $|\mathcal{X}|$ increases when the number of iterations $t$ is fixed at 100.
}
\label{fig:confidence_parameter}
\end{figure}
\section{Preliminary}\label{sec:2}
Let $f: \mathcal{X} \to \mathbb{R}$ be an expensive-to-evaluate black-box function, where $\mathcal{X} \subset \mathbb{R}^d $ is a finite set, or an infinite compact set with positive Lebesgue measure ${\rm Vol} (\mathcal{X}) $. 
Also let $\theta \in \mathbb{R}$ be a known threshold given by the user.  
The aim of this study is to efficiently identify subsets $H^\ast $ and $L^\ast$ of $\mathcal{X}$ defined as 
$$
H^\ast = \{  {\bm x} \in \mathcal{X} \mid f({\bm x} ) \geq \theta \}, \ 
L^\ast = \{  {\bm x} \in \mathcal{X} \mid f({\bm x} ) < \theta \}. 
$$
For each iteration $t \geq 1$, we can query ${\bm x}_t \in \mathcal{X}$, and $f ({\bm x}_t)$ is observed with noise as 
 $y_{t} = f ({\bm x}_t )  + \varepsilon_t $, where $\varepsilon _t$ follows the normal distribution with mean 0 and variance 
$\sigma^2_{{\rm noise}}$. 
In this study, we assume that $f $ is a sample path from  a GP $\mathcal{G} \mathcal{P} (0, k )$, where $\mathcal{G} \mathcal{P} (0, k )$ is the zero mean GP with a kernel function $k(\cdot,\cdot)$. 
Moreover, we assume that $k (\cdot,\cdot)$ is a positive-definite kernel that satisfies $k({\bm x} ,{\bm x} )  \leq 1$ for all ${\bm x} \in \mathcal{X}$, and $f, \varepsilon_1 , \ldots , \varepsilon_t $ are mutually independent. 

\paragraph{Gaussian Process Model}
We use a GP surrogate model $\mathcal{G} \mathcal{P} (0, k )$ for the black-box function. 
Given a dataset $D_{t} = \{({\bm x}_j,y_{j} \}_{j=1}^t $, where $t \geq 1$ is the number of iterations, the posterior distribution of $f$ is again a GP. 
Then, its posterior mean $\mu_{t} ({\bm x} ) $ and posterior variance $\sigma^2_{t} ({\bm x}) $ can be calculated as:
\begin{equation}
\begin{split}
\mu_t ({\bm x} ) &=  {\bm k} _t ({\bm x} ) ^\top  ({\bm K}_t + \sigma^2_{{\rm noise}} {\bm I}_t )^{-1}  {\bm y}_t , \\
\sigma^2_{t} ({\bm x}  ) &=  k({\bm x} ,{\bm x} ) -
 {\bm k} _t ({\bm x} )^\top   ({\bm K}_t + \sigma^2_{{\rm noise}} {\bm I}_t )^{-1} {\bm k} _t ({\bm x} ),
\end{split} \label{eq:post_mean_var}
\end{equation}
where ${\bm k}_t ({\bm x} )$ is the $t$-dimensional vector whose $i$-th element is $k({\bm x},{\bm x}_i )$, 
 ${\bm y}_t  =(y_1,\ldots,y_t)^\top  $, ${\bm K}_t $ is the $t \times t$ matrix whose $(j,k)$-th element is $k({\bm x}_j,{\bm x}_k)$, 
${\bm I}_t$ is the $t \times t$ identity matrix, with a superscript $\top$ that indicates the transpose of vectors or matrices. 
In addition, we define $D_0 = \emptyset$, $\mu_0 ({\bm x} ) =0$ and $\sigma^2_0 ({\bm x} ) = k({\bm x},{\bm x} )$. 
\section{Proposed Method}
In this section, we describe a method for estimating $H^\ast$ and $L^\ast$ based on the GP posterior and an AF for determining the next evaluation. 

\subsection{Level Set Estimation} 
First, we propose a method to estimate $H^\ast$ and $L^\ast$.  
While an existing study \citep{gotovos2013active} proposes an estimation method using the lower and upper bounds of a credible interval of $f({\bm x})$, this study proposes an estimation method using the posterior mean instead of the credible interval. 
\begin{definition}[Level Set Estimation] \label{def:LSE}
For each $t \geq 1$, we estimate $H^\ast$  and $L^\ast$ as: 
\begin{equation}
H_{t} = \{   {\bm x} \in \mathcal{X} \mid \mu_{t-1} ({\bm x} ) \geq \theta \} ,  \ 
L_{t} = \{   {\bm x} \in \mathcal{X} \mid \mu_{t-1} ({\bm x} ) < \theta \} . 
 \label{eq:LSE}
\end{equation}
\end{definition}
By definition \ref{def:LSE}, any ${\bm x} \in \mathcal{X}$ belongs to either $H_{t}$ or $L_{t}$, and $H_{t} \cup L_t = \mathcal{X}$. 
Therefore, the unknown set, as in existing study \citep{gotovos2013active}, is not defined in this study.

\subsection{Acquisition Function} 
In this section, we propose an AF for determining the next point to be evaluated. 
For each $t \geq 1$ and ${\bm x} \in \mathcal{X}$, we define the upper bound ${\rm ucb}_{t-1} ({\bm x} )$ and lower bound 
${\rm lcb}_{t-1} ({\bm x} )$ in the credible interval of  $f({\bm x} )$ as 
$$
{\rm ucb}_{t-1} ({\bm x} ) = \mu_{t-1} ({\bm x} ) + \beta^{1/2}_t \sigma_{t-1} ({\bm x} ) , \ 
{\rm lcb}_{t-1} ({\bm x} ) = \mu_{t-1} ({\bm x} ) - \beta^{1/2}_t \sigma_{t-1} ({\bm x} ) , \ 
$$
where $\beta^{1/2}_t \geq 0$ is a user-specified confidence parameter. 
Here, the straddle heuristic ${\rm STR}_{t-1} ({\bm x} )$ proposed by \cite{bryan2005active} is defined as:
$$
{\rm STR}_{t-1} ({\bm x} ) = \beta^{1/2}_t \sigma_{t-1} ({\bm x} ) - |\mu_{t-1} ({\bm x} ) -\theta |.
$$ 
Thus, by using ${\rm ucb}_{t-1} ({\bm x} ) $ and ${\rm lcb}_{t-1} ({\bm x} ) $, 
${\rm STR}_{t-1} ({\bm x} )$ can be rewritten as 
$$
{\rm STR}_{t-1} ({\bm x} ) =  \min  \{   {\rm ucb}_{t-1} ({\bm x} )  - \theta , \theta - {\rm lcb}_{t-1} ({\bm x} )  \}.
$$
We consider sampling $\beta_t$ of the straddle heuristic from a probability distribution. 
In the framework of black-box function maximization, \cite{pmlr-v202-takeno23a} uses a sample from a two-parameter exponential distribution as the confidence parameter of the original GP-UCB. 
The two-parameter exponential distribution considered by \cite{pmlr-v202-takeno23a} can be expressed as $2 \log (|\mathcal{X}|/2) + s_t$, where $s_t$ follows the chi-squared  distribution with two degrees of freedom.   
Therefore, we use a similar argument and consider $\beta_t$ of the straddle heuristic as a sample from the chi-squared distribution with two degrees of freedom, and propose the following randomized straddle AF. 
\begin{definition}[Randomized Straddle] \label{def:AF}
For each $t \geq 1$, let $\beta_t $ be a sample from the chi-squared distribution with two degrees of freedom, 
where $\beta_1 ,\ldots , \beta_t, \varepsilon_1,\ldots, \varepsilon_t, f$ are mutually independent. 
Then,  the randomized straddle $a_{t-1} ({\bm x} )$ is defined as follows:
\begin{equation}
a_{t-1} ({\bm x} ) =   \max \{  \min  \{   {\rm ucb}_{t-1} ({\bm x} )  - \theta , \theta - {\rm lcb}_{t-1} ({\bm x} )  \}  ,  0 \}. \label{eq:AF_def}
\end{equation}
\end{definition}
Hence, using $a_{t-1} ({\bm x} ) $, the next point to be evaluated is selected by ${\bm x}_t = \argmax_{ {\bm x} \in \mathcal{X} } a_{t-1} ({\bm x} )$.
Figure \ref{fig:AF_diff_para} shows the difference in the input points selected when using $a_{t-1} ({\bm x} )$ with different $\beta^{1/2}_t$.  
\cite{pmlr-v202-takeno23a} adds a constant $2 \log (|\mathcal{X}|/2)$ , which depends on the number of elements in $\mathcal{X}$, to the sample from the chi-squared distribution with two degrees of freedom. In contrast, the random sample proposed in this study does not require the addition of such a constant. 
As a result, the confidence parameter in the randomized straddle does not depend on the number of iterations $t$ or the number of candidate points. 
The only difference between the straddle heuristic ${\rm STR}_{t-1} ({\bm x} ) $ and 
\eqref{eq:AF_def} is that $\beta^{1/2}_t$ is randomized, and \eqref{eq:AF_def} performs a max operation with 0. 
We describe in Section \ref{sec:4} that this modification leads to theoretical guarantees. 
Finally, we give the pseudocode of the proposed algorithm in Algorithm \ref{alg:1}.

\begin{figure}[t]
\begin{center}
 \begin{tabular}{ccc}
 \includegraphics[width=0.33\textwidth]{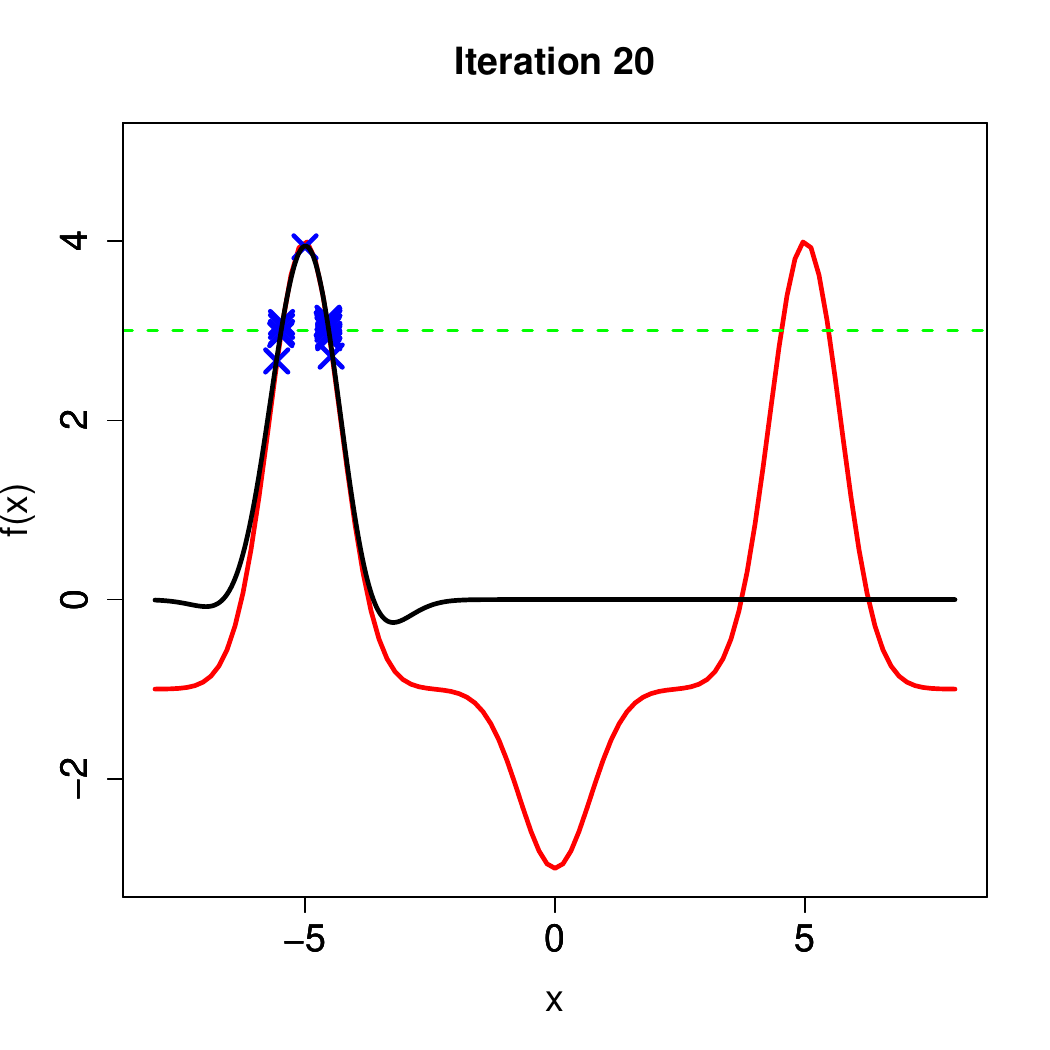} &
 \includegraphics[width=0.33\textwidth]{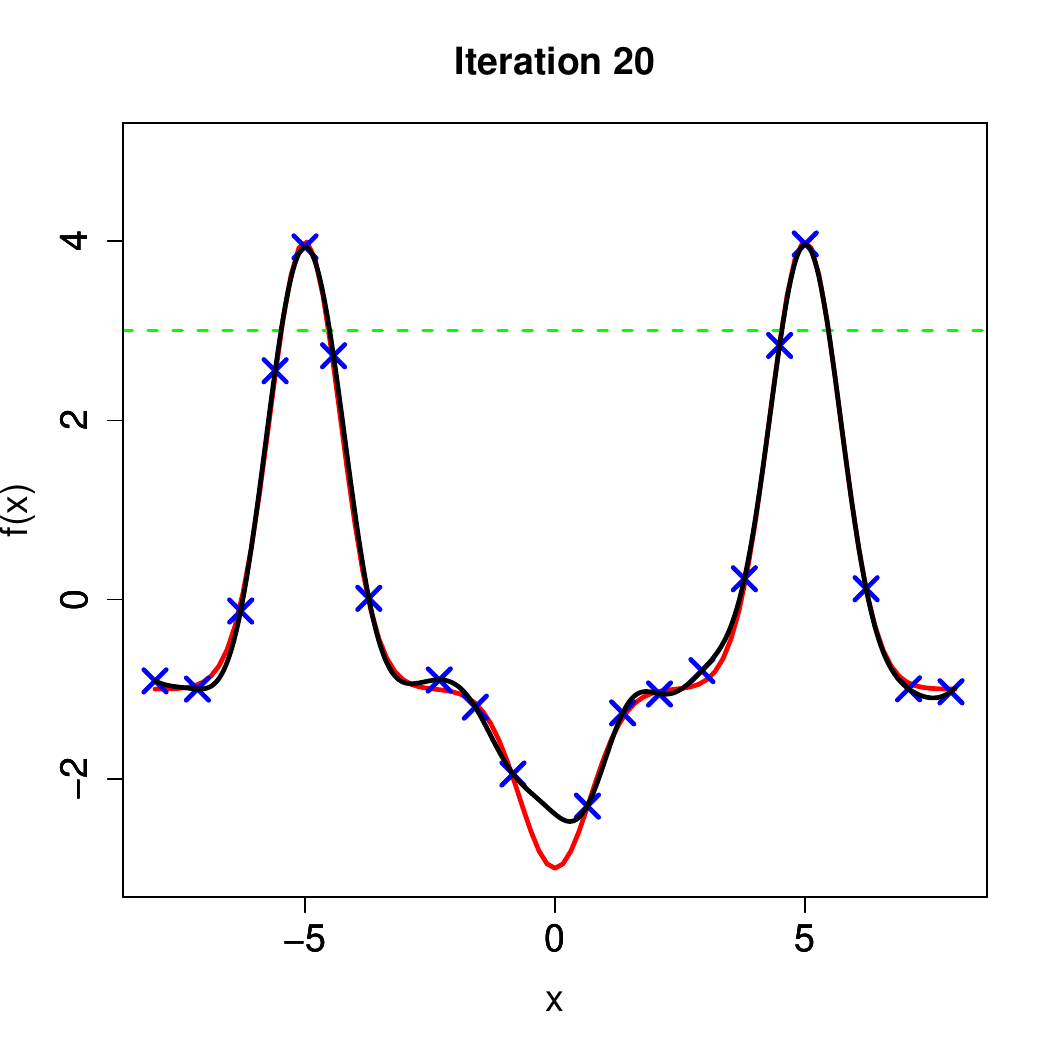} &
 \includegraphics[width=0.33\textwidth]{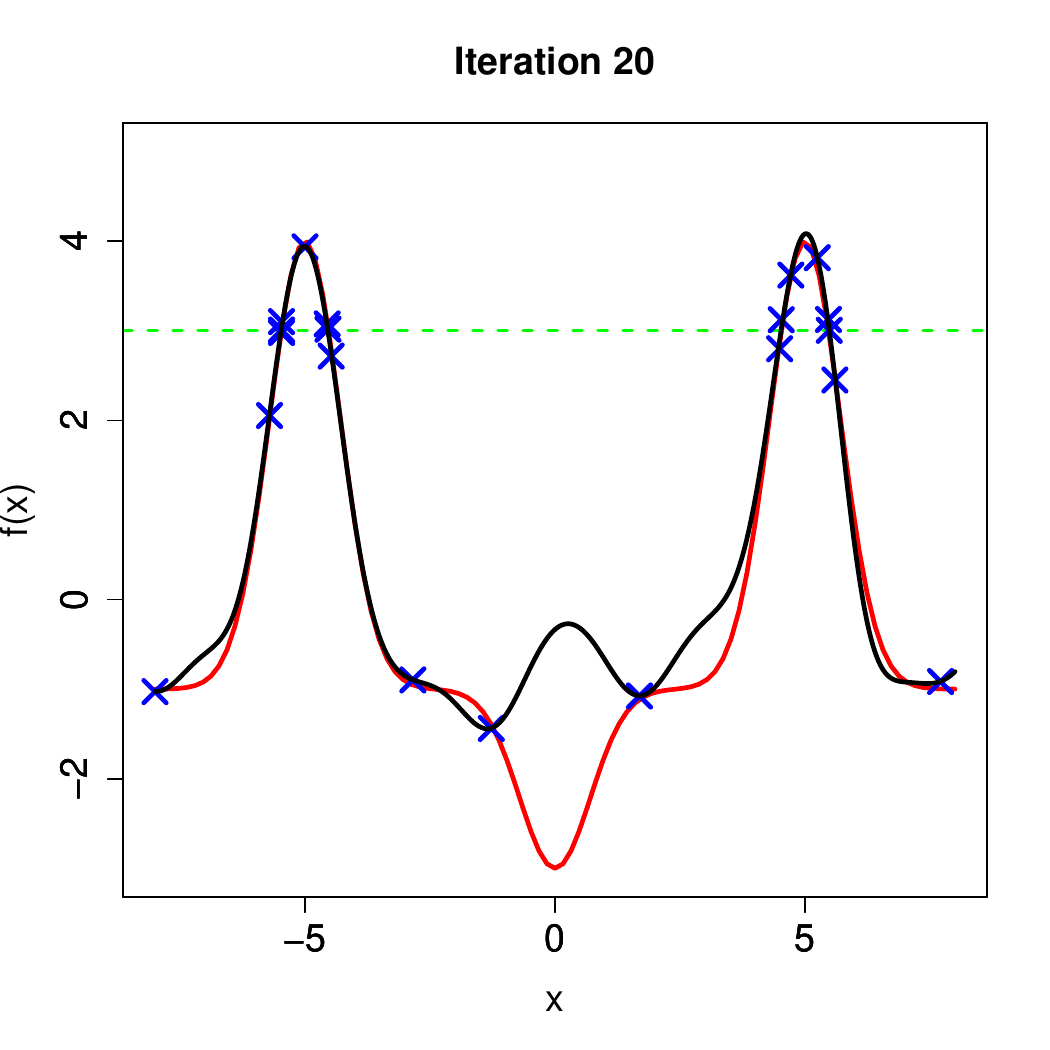} \\
$\beta^{1/2}_t =1$ & $\beta^{1/2}_t =10$ & Proposed
 \end{tabular}
\end{center}
 \caption{
Comparison of points selected by $a_{t-1} ({ x} )$ with different $\beta^{1/2}_t$. 
The red line represents the true black-box function $f(x)=5\exp(-(x+5)^2)+5\exp(-(x-5)^2)-2\exp(-x^2)-1$, the black line represents the posterior mean, and the blue crosses represent the observed points. 
The figures on the left, center and right show the differences for 20 observation points  when using, respectively, $\beta^{1/2}_t =1$, $\beta^{1/2}_t =10$ and $\beta_t$ which follows the chi-squared distribution with two degrees of freedom, in the calculation of $a_{t-1} ({ x} )$, where $x=-5$ is chosen as the initial point under the observation noise $\sigma^2_{{\rm noise}} =10^{-2}$ and threshold $\theta =3$. 
Since ${\rm STR}_{t-1} (x)$ is represented as ${\rm STR}_{t-1} (x) = \beta^{1/2}_t \sigma_{t-1} (x) - |  \mu_{t-1} (x) - \theta  |$, when $\beta^{1/2}_t =1$, $\beta^{1/2}_t$ is small so the second term of ${\rm STR}_{t-1} (x)$ dominates, and as a result, it can be seen that only values whose posterior mean are close to the threshold are observed. 
Conversely, when $\beta^{1/2}_t =10$, $\beta^{1/2}_t$ is large so the first term of ${\rm STR}_{t-1} (x)$ dominates, resulting in the AF that is almost the same as uncertainty sampling, and it can be seen that the selected inputs are spaced almost equally apart.
On the other hand, these behaviors are not observed with the proposed method.
}
\label{fig:AF_diff_para}
\end{figure}

\begin{algorithm}[t]
    \caption{Active Learning for Level Set Estimation Using Randomized Straddle Algorithms}
    \label{alg:1}
    \begin{algorithmic}
        \REQUIRE GP prior $\mathcal{GP}(0, k)$, threshold $\theta \in \mathbb{R}$
        \FOR { $ t= 1,2,\ldots , T$}
            \STATE Compute $\mu_{t-1} ({\bm x} ) $ and $\sigma^2_{t-1} ({\bm x} )$ for each ${\bm x} \in \mathcal{X}$ by \eqref{eq:post_mean_var}
		\STATE Estimate $H_t$ and $L_t$ by \eqref{eq:LSE}
            \STATE Generate $\beta_t$ from the chi-squared distribution with two degrees of freedom 
		\STATE Compute  ${\rm ucb}_{t-1} ({\bm x} )$, ${\rm lcb}_{t-1} ({\bm x} )$ and $a_{t-1} ({\bm x} )$
            \STATE Select the next evaluation point $\bm{x}_{t}$ by ${\bm x}_t = \argmax_{{\bm x} \in \mathcal{X}} a_{t-1} ({\bm x} )$  
            \STATE Observe $y_{t} = f(\bm{x}_{t}) + \varepsilon_{t}$  at the point $\bm{x}_{t}$ 
            \STATE Update GP by adding the observed data
        \ENDFOR
        \ENSURE Return $H_T$ and $L_T$ as the estimated sets 
    \end{algorithmic}
\end{algorithm}
\section{Theoretical Analysis}\label{sec:4}
In this section, we give theoretical guarantees for the proposed model. 
First, we define the loss $l_t ({\bm x} )$ for each ${\bm x} \in \mathcal{X}$ and $t \geq 1$ as 
\begin{align*}
l_t ({\bm x} ) = \left \{
\begin{array}{ll}
0 & {\rm if} \ {\bm x} \in H^\ast , {\bm x} \in H_t , \\
0 & {\rm if} \ {\bm x} \in L^\ast , {\bm x} \in L_t , \\
f( {\bm x} ) - \theta & {\rm if} \ {\bm x} \in H^\ast , {\bm x} \in L_t , \\
\theta - f({\bm x} ) & {\rm if} \ {\bm x} \in L^\ast , {\bm x} \in H_t  
\end{array}
\right . .
\end{align*}
Then, the loss $r(H_t,L_t)$ for the estimated sets $H_t$ and $L_t$ is defined as \footnote{The discussion of the case where the loss is defined based on the maximum value $r(H_t,L_t) = \max_{{\bm x} \in \mathcal{X}} l_t ({\bm x} )$ is given in Appendix \ref{app:max_loss}.}:
\begin{align*}
r (H_t ,L_t) &=  \left \{
\begin{array}{ll}
 \frac{1}{|  \mathcal{X} |} \sum_{ {\bm x} \in \mathcal{X}}   l_t ({\bm x} )     & {\rm if} \ \mathcal{X} \ {\rm is \ finite} \\
 \frac{1}{ {\rm Vol} (  \mathcal{X} )} \int_{  \mathcal{X}}   l_t ({\bm x} )  {\rm d} {\bm x}   & {\rm if} \ \mathcal{X} \ {\rm is \ infinite} 
\end{array}
\right . \\
& \equiv r_t.
\end{align*}
We also define the cumulative loss as $R_t = \sum_{i=1} ^t r_i $. 
Let $\gamma_t $ be a maximum information gain\footnote{
According to \eqref{eq:def_MIG_using_sup}, $\gamma_t $ should be called the supremum information gain, but since the term maximum information gain is also used when using a sup operator (see, e.g., \cite{vakili2023delayed}), in the rest of this paper we will continue to call $\gamma_t $ the maximum information gain.
}, where $\gamma_t$ is one of indicators for measuring the sample complexity. 
The maximum information gain $\gamma_t$ is often used in theoretical analysis of BO and LSE using GP \citep{Srinivas:2010:GPO:3104322.3104451,gotovos2013active}, and $\gamma_t$ is given by
\begin{equation}
\gamma_t = \frac{1}{2}   \sup_{  \{\tilde{\bm x}_1, \ldots , \tilde{\bm x}_t \} \subset \mathcal{X} }  \log {\rm det } ( {\bm I}_t + \sigma^{-2}_{{\rm noise}} \tilde{\bm K}_t), \label{eq:def_MIG_using_sup}
\end{equation}
where $\tilde{\bm x}_1, \ldots , \tilde{\bm x}_t $ are any elements of $\mathcal{X}$, and $\tilde{\bm K}_t$ is the $t \times t $ matrix whose $(j,k)$-th element is $k  (    \tilde{\bm x}_j, \tilde{\bm x}_k        )$. 
Then, the following theorem holds.
\begin{theorem}\label{thm:1}
Assume that $f$ follows  $ \mathcal{G} \mathcal{P} (0, k )$, where  $k (\cdot, \cdot)$ is a positive-definite kernel satisfying 
$k ({\bm x} ,{\bm x} ) \leq 1$ for any ${\bm x} \in \mathcal{X}$. 
For each $t \geq 1$, let $\beta_t$ be a sample from the chi-squared distribution with two degrees of freedom, where 
$\beta_1,\ldots,\beta_t,\varepsilon_1,\ldots.\varepsilon_t,f$ are mutually independent. 
Then, the following inequality holds: 
$$
\mathbb{E} [  R_t ] \leq \sqrt{  C_1 t  \gamma_t},
$$
where $C_1 =   4/ \log(1+ \sigma^{-2}_{{\rm noise}})$, and the expectation is taken with all randomness including $f$, $\varepsilon_t$ and $\beta_t$. 
\end{theorem}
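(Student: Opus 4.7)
The plan is to reduce the cumulative expected loss to a sum of acquisition values $a_{i-1}(\bm{x}_i)$ and then apply the Cauchy--Schwarz plus maximum-information-gain argument familiar from GP-UCB analyses. Throughout I condition on the history $D_{i-1}$; this renders $\mu_{i-1}(\cdot)$ and $\sigma_{i-1}(\cdot)$ deterministic, while $Z_{\bm{x}} := (f(\bm{x}) - \mu_{i-1}(\bm{x}))/\sigma_{i-1}(\bm{x}) \sim N(0,1)$ and the fresh $\beta_i \sim \chi^2_2$ are independent.

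The core per-point inequality I want is $\mathbb{E}[l_i(\bm{x}) \mid D_{i-1}] \leq \mathbb{E}[a_{i-1}(\bm{x}) \mid D_{i-1}]$. Assume $\mu_{i-1}(\bm{x}) < \theta$ (the other case is symmetric) and set $c = (\theta - \mu_{i-1}(\bm{x}))/\sigma_{i-1}(\bm{x}) \geq 0$. Then $\bm{x} \in L_i$ by Definition \ref{def:LSE}, so $l_i(\bm{x}) = (f(\bm{x}) - \theta)_+ = \sigma_{i-1}(\bm{x})(Z_{\bm{x}} - c)_+$. Unpacking \eqref{eq:AF_def} similarly shows that the two candidates for the inner min are $\sigma_{i-1}(\bm{x})(\beta^{1/2}_i - c)$ and $\sigma_{i-1}(\bm{x})(\beta^{1/2}_i + c)$, so $a_{i-1}(\bm{x}) = \sigma_{i-1}(\bm{x})(\beta^{1/2}_i - c)_+$. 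Thus it suffices to prove $\mathbb{E}[(Z - c)_+] \leq \mathbb{E}[(\beta^{1/2}_i - c)_+]$ for every $c \geq 0$. Since $(Z - c)_+ \leq (|Z| - c)_+$ pointwise, and since $\beta_i = Y_1 + Y_2$ with $Y_1, Y_2$ i.i.d.\ $\chi^2_1$ while $Z^2 \sim \chi^2_1$, the bound $\beta_i \geq Y_1$ together with $Y_1$ being equal in distribution to $Z^2$ shows that $\beta^{1/2}_i$ stochastically dominates $|Z|$; applying this dominance to the nondecreasing map $x \mapsto (x - c)_+$ closes the step.

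Given the per-point inequality, averaging over $\bm{x}$ (by $\frac{1}{|\mathcal{X}|}\sum$ or $\frac{1}{\mathrm{Vol}(\mathcal{X})}\int$) and using that the average is at most the maximum, which equals $a_{i-1}(\bm{x}_i)$ by the greedy choice of $\bm{x}_i$, yields $\mathbb{E}[r_i \mid D_{i-1}] \leq \mathbb{E}[a_{i-1}(\bm{x}_i) \mid D_{i-1}]$. Combining \eqref{eq:AF_def} with $\min\{u, v\} \leq (u+v)/2$ also gives $a_{i-1}(\bm{x}) \leq \beta^{1/2}_i \sigma_{i-1}(\bm{x})$ pointwise. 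Summing over $i = 1, \dots, t$, taking the unconditional expectation, and applying Cauchy--Schwarz gives
\[
\mathbb{E}[R_t] \leq \mathbb{E}\!\left[\sum_{i=1}^t \beta^{1/2}_i \sigma_{i-1}(\bm{x}_i)\right] \leq \mathbb{E}\!\left[\sqrt{\sum_{i=1}^t \beta_i}\,\sqrt{\sum_{i=1}^t \sigma^2_{i-1}(\bm{x}_i)}\right].
\]
The standard sum-of-posterior-variances bound (using $k(\bm{x},\bm{x}) \leq 1$) gives $\sum_i \sigma^2_{i-1}(\bm{x}_i) \leq (2/\log(1+\sigma^{-2}_{\mathrm{noise}}))\gamma_t$ almost surely; pulling out this deterministic factor, applying Jensen to $\sqrt{\cdot}$, and using $\mathbb{E}[\beta_i] = 2$ then assembles into exactly $\sqrt{C_1 t \gamma_t}$.

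The main obstacle is the stochastic-dominance comparison in the second paragraph: without the fact that $\beta^{1/2}_i$ stochastically dominates $|Z|$, one would be forced to split on a concentration event and control a residual term of the form $|f(\bm{x}) - \mu_{i-1}(\bm{x})|$ on its failure, which does not obviously sum to $O(\sqrt{t\gamma_t})$. Everything after that step is routine: the deterministic bound on $\sum \sigma^2_{i-1}(\bm{x}_i)$ combined with a Cauchy--Schwarz assembly adapted to the random $\beta_i$.
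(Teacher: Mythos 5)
Your proposal is correct, and the final assembly (bounding $a_{i-1}(\bm{x}_i)\le\beta_i^{1/2}\sigma_{i-1}(\bm{x}_i)$, Cauchy--Schwarz, $\mathbb{E}[\beta_i]=2$, and the maximum-information-gain bound on $\sum_i\sigma_{i-1}^2(\bm{x}_i)$) matches the paper's ending step for step. Where you genuinely diverge is the key per-round inequality $\mathbb{E}_t[l_t(\bm{x})]\le\mathbb{E}_{\beta_t}[a_{t-1}(\bm{x}_t)]$. The paper follows the Takeno-et-al.\ randomized-confidence-parameter route: it first proves the \emph{high-probability} pointwise bound $l_t(\bm{x})\le a_{t-1,\delta}(\bm{x})$ on the event $|f(\bm{x})-\mu_{t-1}(\bm{x})|\le\beta_\delta^{1/2}\sigma_{t-1}(\bm{x})$ with $\beta_\delta=2\log(1/\delta)$, then converts this into an expectation bound by applying the generalized inverse of the conditional CDF of $l_t(\bm{x})$ and integrating $\delta$ over $\mathrm{Unif}(0,1)$, using that $2\log(1/\delta)\sim\chi^2_2$. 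You instead compute both sides in closed form as $\sigma_{i-1}(\bm{x})\,\mathbb{E}[(Z-c)_+]$ and $\sigma_{i-1}(\bm{x})\,\mathbb{E}[(\beta_i^{1/2}-c)_+]$ and compare them via the stochastic dominance $\beta_i^{1/2}\succeq_{\mathrm{st}}|Z|$, which your coupling $\beta_i=Y_1+Y_2\ge Y_1\overset{d}{=}Z^2$ establishes cleanly (equivalently, it is the Gaussian tail bound $1-\Phi(v)\le\tfrac12 e^{-v^2/2}$). Your route is more elementary and transparent, avoids the CDF-inversion device entirely, and yields the inequality pointwise in $\bm{x}$ before any maximization; it also makes explicit why $\chi^2_2$ with no additive constant suffices. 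The paper's route has the advantage that it extends with only a change of $\beta_\delta$ to $2\log(|\mathcal{X}|/\delta)$ when a union bound over $\mathcal{X}$ is needed (as in the max-value-loss results of Appendix A), whereas your pointwise dominance argument would need extra work to control a supremum over $\bm{x}$. For Theorem 4.1 as stated, both arguments are complete and correct.
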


From Theorem \ref{thm:1}, the following theorem holds.
\begin{theorem}\label{thm:2}
Under the assumptions of Theorem \ref{thm:1}, the following inequality holds:
$$
\mathbb{E}[ r_t] \leq \sqrt{\frac{C_1 \gamma_t}{t}},
$$
where $C_1$ is given in Theorem \ref{thm:1}.
\end{theorem}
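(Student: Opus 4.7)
The plan is to derive Theorem \ref{thm:2} as an immediate corollary of Theorem \ref{thm:1} by proving that the sequence $\{\mathbb{E}[r_i]\}_{i \geq 1}$ is non-increasing in $i$. Once monotonicity is in hand, we have
\begin{align*}
t \, \mathbb{E}[r_t] \leq \sum_{i=1}^t \mathbb{E}[r_i] = \mathbb{E}[R_t] \leq \sqrt{C_1 t \gamma_t},
\end{align*}
and dividing both sides by $t$ yields the claim. So the whole task reduces to establishing the monotonicity $\mathbb{E}[r_t] \leq \mathbb{E}[r_{t-1}]$.

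To prove monotonicity, I would show that $(H_t, L_t)$ is the Bayes-optimal partition with respect to the loss $l_t$, conditional on $D_{t-1}$. Rewrite
\begin{align*}
l_t({\bm x}) = \mathbf{1}\{{\bm x}\in L_t\}\,(f({\bm x})-\theta)^+ \;+\; \mathbf{1}\{{\bm x}\in H_t\}\,(\theta-f({\bm x}))^+.
\end{align*}
Because $(f-\theta)^+ - (\theta-f)^+ = f-\theta$ pointwise, the difference in posterior expected pointwise loss between assigning ${\bm x}$ to $H_t$ versus $L_t$ is
\begin{align*}
\mathbb{E}\bigl[(\theta-f({\bm x}))^+ \mid D_{t-1}\bigr] - \mathbb{E}\bigl[(f({\bm x})-\theta)^+ \mid D_{t-1}\bigr] = \theta - \mu_{t-1}({\bm x}).
\end{align*}
Hence the minimizer of the posterior expected loss is precisely ${\bm x} \in H_t$ iff $\mu_{t-1}({\bm x}) \geq \theta$, which matches Definition \ref{def:LSE}. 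Since $(H_{t-1}, L_{t-1})$ is $D_{t-1}$-measurable (it depends only on $D_{t-2} \subset D_{t-1}$), it is one admissible but generally suboptimal partition when the loss is evaluated under the $D_{t-1}$ posterior. Therefore $\mathbb{E}[r_t \mid D_{t-1}] \leq \mathbb{E}[r_{t-1} \mid D_{t-1}]$, and taking outer expectation gives $\mathbb{E}[r_t] \leq \mathbb{E}[r_{t-1}]$ for every $t \geq 2$.

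The main subtlety to watch for is the adaptive, randomized choice of query points: although ${\bm x}_1, \ldots, {\bm x}_{t-1}$ were selected using the randomized straddle with the auxiliary variables $\beta_1, \ldots, \beta_{t-1}$, the conditional law of $f$ given $D_{t-1}$ is still the standard GP posterior in \eqref{eq:post_mean_var}, since each $\beta_i$ is independent of $f$ and of the noises $\varepsilon_j$ and the selection rule at step $i$ depends only on $D_{i-1}$ and $\beta_i$. This independence structure is what legitimizes the Bayes-optimality calculation above. A minor additional bookkeeping point, for the infinite-$\mathcal{X}$ case, is that one must invoke Tonelli's theorem to swap $\mathbb{E}$ with $\int_{\mathcal{X}} \cdot\, {\rm d}{\bm x}$ when computing $\mathbb{E}[r_t \mid D_{t-1}]$, which is permitted since $l_t \geq 0$.
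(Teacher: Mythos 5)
Your proof is correct, and its overall architecture coincides with the paper's: the paper likewise reduces Theorem \ref{thm:2} to $t\,\mathbb{E}[r_t]\le\mathbb{E}[R_t]$ by showing that, conditionally on $D_{t-1}$, the partition $(H_t,L_t)$ incurs expected loss no larger than any earlier partition $(H_i,L_i)$ (Lemma \ref{lem:thm2_lem1} combined with Lemma \ref{lem:thm2_lem3}, phrased as ``$\hat t=t$'' for $\hat t=\argmin_{1\le i\le t}\mathbb{E}_t[r_i]$), and then invokes Theorem \ref{thm:1}. The one genuine difference is how this optimality step is established. The paper computes $\mathbb{E}_t[l_i({\bm x})]$ in closed form for the Gaussian posterior (Lemma \ref{lem:thm2_lem2}), obtaining $\sigma_{t-1}({\bm x})\left[\phi(\alpha)-\alpha\{1-\Phi(\alpha)\}\right]$ versus $\sigma_{t-1}({\bm x})\left[\phi(-\alpha)+\alpha\{1-\Phi(-\alpha)\}\right]$ and comparing the two for $\alpha\ge 0$. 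You instead use the distribution-free identity $(\theta-f)^{+}-(f-\theta)^{+}=\theta-f$, so the difference of the two posterior expected pointwise losses is exactly $\theta-\mu_{t-1}({\bm x})$ and the Bayes-optimality of the rule in Definition \ref{def:LSE} is immediate. Your route is shorter and reveals that the optimality of the posterior-mean classifier needs only that $\mu_{t-1}$ is the posterior mean, not Gaussianity of the posterior; the paper's explicit formula buys nothing extra here since it is not reused elsewhere. Your comparison of consecutive times (giving monotonicity of $\mathbb{E}[r_i]$) rather than comparing $t$ against every $i\le t$ under the single $D_{t-1}$ posterior is an immaterial variation, and your remarks on the independence of the $\beta_i$ (so that the conditional law of $f$ given $D_{t-1}$ remains the standard GP posterior) and on Tonelli/Fubini for infinite $\mathcal{X}$ correspond to points the paper handles implicitly and explicitly, respectively.
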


Note that Theorem \ref{thm:1} and \ref{thm:2} hold whether $\mathcal{X}$ is a finite or infinite set. 
By the definition of the loss $l_t ({\bm x} )$, $l_t ({\bm x} )$ represents how far $f({\bm x})$ is from the threshold when ${\bm x} $ is misclassified, and $r_t$ represents the average value of $l_t ({\bm x} )$ across all candidate points. 
Under mild assumptions, it is known that $\gamma_t$ is sublinear \citep{Srinivas:2010:GPO:3104322.3104451}. 
Therefore, by Theorem \ref{thm:1}, it is guaranteed that $R_t$ is also sublinear in the expected value sense. 
Furthermore, by Theorem \ref{thm:2}, it is guaranteed that $r_t$ converges to 0 in the expected value sense. 
Here, we must emphasize that Theorem \ref{thm:1} and \ref{thm:2} cannot be derived by simply randomizing the confidence parameters of existing methods. 
First, although the proposed method is similar to existing methods in that it randomly samples the confidence parameters of the AF, several issues had to be resolved in order to derive theoretical guarantees in the LSE setting addressed in this paper. 
The proposed method is inspired by IRGP-UCB in \cite{pmlr-v202-takeno23a}, but they deal with maximization problems in the first place and consider regret $ f({\bm x}^\ast) - f({\bm x}_t)$, which is the difference between the maximum value $f({\bm x}^\ast)$  and the function value $f({\bm x}_t)$ at the observation point ${\bm x}_t $. 
They consider a Bayesian cumulative (or simple) regret as an evaluation index for theoretical analysis, and the theoretical validity of their method is based on the fact that $f({\bm x}^\ast)$ can be bounded from above with high probability, and as a result, the expected value of $f({\bm x}^\ast)$ can be bounded above by a certain expected value (Lemma 4.1 and 4.2 in \cite{pmlr-v202-takeno23a}), which is achieved by using UCB. 
On the other hand, the losses $l_t ({\bm x})$ and $r_t (H_t,L_t)$ of the LSE addressed in this paper are essentially different from the regret $ f({\bm x}^\ast) - f({\bm x}_t)$ and Bayesian cumulative (or simple) regret in maximization problems. Therefore, it was not clear whether claims similar to Lemma 4.1 and 4.2 in \cite{pmlr-v202-takeno23a} could be derived by simply randomizing the confidence parameters of some AF of LSE. 
In addition, in existing LSE studies such as \cite{gotovos2013active}, the classification rule includes the posterior mean, posterior standard deviation, and $\beta_t$, and the loss function depends on the classification rule. Therefore, since the classification rule includes $\beta_t$, theoretical analysis based on randomization was difficult. On the other hand, although the loss of the proposed method is based on the classification rule, unlike theirs, the classification rule itself does not include $\beta_t$ because the classification rule uses only the posterior mean, and as a result, randomization analysis became possible.

On the other hand, it is challenging to directly compare the proposed method with GP-based methods such as the LSE algorithm and TRUVAR in terms of theoretical analysis. 
This difficulty arises because, first, the proposed method and these methods use different approaches to estimate $H^\ast$ and $L^\ast$, and second, the criteria for evaluating the quality of the estimated sets differ. 
However, it is important to note that the proposed method has theoretical guarantees, and the confidence parameter $\beta^{1/2}_t$ does not depend on the number of iterations $t$ or the input space $\mathcal{X}$, making it applicable whether $\mathcal{X}$ is finite or infinite.
Additionally, since $\mathbb{E} [ \beta^{1/2}_t ]  = \sqrt{2 \pi}/2 \approx 1.25$, the realized values of $\beta^{1/2}_t$ are not conservative. To the best of our knowledge, no existing method satisfies all of these properties.
Moreover, we confirm in Section \ref{sec:5} that the practical performance of the proposed method is equal to or better than existing methods.

Finally, we give a theorem on high-probability bounds for $R_t$ and $r_t$ when using the proposed method. 
Since Theorem \ref{thm:1} and \ref{thm:2} provide bounds on the expected values of $R_t $ and $r_t$, we can easily derive high-probability bounds by using Markov's inequality, 
$\mathbb{P} (|X| \geq a) \leq \frac{\mathbb{E}[|X|]}{a}$, where $X$ is a random variable and $a$ is a positive number. 
If $R_t$ is used as $X$, then since $|R_t|=R_t$, using Theorem \ref{thm:1} and Markov's inequality the inequality $ \mathbb{P} ( R_t \geq  a ) \leq \frac{\sqrt{C_1 t \gamma_t }}{a}$ holds. Therefore, for a given $ \delta \in (0,1)$, if we set $a = \delta^{-1} \sqrt{C_1 t \gamma_t }$, then $R_t \leq \delta^{-1} \sqrt{C_1 t \gamma_t }$ holds with probability at least $1-\delta$. Similarly, for $r_t$, if we set $a = \delta^{-1} \sqrt{C_1 \gamma_t /t}$, then $r_t \leq \delta^{-1} \sqrt{C_1 \gamma_t /t}$ holds with probability at least $1-\delta$. 
We summarize these results in Theorem \ref{thm:hpb}.
\begin{theorem}\label{thm:hpb}
Let $\delta \in (0,1)$ and $C_1 = 4/ \log ( 1+\sigma^{-2}_{{\rm noise}}  )$. 
For each $t \geq 1$, under the assumptions of Theorem \ref{thm:1}, the following inequalities hold with probability at least $1- \delta$:
\begin{align*}
R_t \leq \delta^{-1}   \sqrt{  C_1 t \gamma_t }, \ r_t \leq \delta^{-1} \sqrt{\frac{ C_1 \gamma_t  }{t}} .
\end{align*}
\end{theorem}
From Theorem \ref{thm:hpb}, it is possible to derive high-probability bounds for $R_t$ and $r_t$, but the problem of the right-hand side not being tight remains. 
Specifically, the term $\delta^{-1}$ remains in the right-hand side. 
On the other hand, in many studies that deal with high-probability bounds such as \cite{Srinivas:2010:GPO:3104322.3104451}, the term $\sqrt{\log (\delta^{-1})}$ appears in the bound, which is tighter than $\delta^{-1}$. 
Therefore, one of the issues for the future is to improve $\delta^{-1}$ in Theorem \ref{thm:hpb} to $\sqrt{\log (\delta^{-1})}$.

\section{Numerical Experiments}\label{sec:5}
We confirm the practical performance of the proposed method using synthetic functions and real-world data. 
\subsection{Synthetic Data Experiments when $\mathcal{X}$ is Finite}\label{sec:5_1}
In this section, the input space $\mathcal{X}$ was defined as a set of grid points that uniformly cut the region  
$[l_1,u_1] \times [l_2,u_2]$ into $50 \times 50$. 
In all experiments, we used the following Gaussian kernel:
$$
k({\bm x},{\bm x}^\prime ) = \sigma^2_f\exp \left ( -\frac{\| {\bm x} -{\bm x}^\prime \|_2^2 }{L} \right ) .
$$
As black-box functions, we considered the following three synthetic functions:
\begin{itemize}
\item [Case 1] The black-box function $f(x_1,x_2)$ is a sample path from $\mathcal{G} \mathcal{P} (0,k)$, where $k(\cdot, \cdot)$ is given by
$$
k({\bm x},{\bm x} ^\prime ) = \exp (  - \| {\bm x} -{\bm x}^\prime \|^2_2 /2 ). 
$$
\item [Case 2]  The black-box function $f(x_1,x_2)$ is the following sinusoidal function: 
$$
f (x_1,x_2 ) =  \sin (10 x_1 ) + \cos (4 x_2 ) - \cos (3 x_1 x_2 ). 
$$
\item [Case 3] The black-box function $f(x_1,x_2)$ is the following shifted negative Himmelblau function: 
$$
f(x_1,x_2 ) = -(x^2_1+x_2 -11)^2- (x_1+x^2_2-7)^2+100.
$$
\end{itemize}
Furthermore, we used the normal distribution with mean 0 and variance $\sigma^2_{{\rm noise}}$ for the observation noise. 
The threshold $\theta $ and the parameters used for each setting are summarized in Table \ref{tab:exp_para}. 
The settings for the sinusoidal and Himmelblau functions are the same as those used in \cite{zanette2019robust}. 
The performance was evaluated using the loss $r_t$ and ${\rm Fscore}_t$, where ${\rm Fscore}_t$ is the F-score calculated by 
$$
{\rm Pre}_t = \frac{ | H_t \cap H^\ast  |   }{| H_t   |} , {\rm Rec}_t = \frac{ | H_t \cap H^\ast  |   }{| H^\ast   |} , {\rm Fscore}_t = \frac{ 2 \times  {\rm Pre}_t \times {\rm Rec}_t}{  {\rm Pre}_t + {\rm Rec}_t }.
$$
Then, we compared the following six AFs:
\begin{itemize}
\item [(Random)] Select ${\bm x}_t$ by using random sampling.  
\item [(US)] Perform uncertainty sampling, that is, ${\bm x}_t = \argmax_{  {\bm x} \in \mathcal{X} }  \sigma^2_{t-1} ({\bm x} )$. 
\item [(Straddle)] Perform the straddle heuristic proposed by \cite{bryan2005active}, that is, ${\bm x}_t = \argmax_{  {\bm x} \in \mathcal{X} }  {\rm STR}_{t-1} ({\bm x} )$. 
\item [(LSE)] Perform the LSE algorithm using the LSE AF $a_{t-1}^{({\rm LSE})} ({\bm x} ) $ proposed by \cite{gotovos2013active}, that is, ${\bm x}_t = \argmax_{  {\bm x} \in \mathcal{X} }  a_{t-1}^{({\rm LSE})} ({\bm x} )$. 
\item [(MILE)] Perform the MILE algorithm proposed by \cite{zanette2019robust}, that is, 
${\bm x}_t = \argmax_{  {\bm x} \in \mathcal{X} }  a_{t-1}^{({\rm MILE})} ({\bm x} )$, 
where, $a_{t-1}^{({\rm MILE})} ({\bm x} )$ is the same as the robust MILE, another AF proposed by \cite{zanette2019robust}, with the tuning parameters $\epsilon$ and $\gamma$ set to $0$ and $- \infty$, respectively.
\item [(Proposed)] Select ${\bm x}_t$ by using \eqref{eq:AF_def}, that is, ${\bm x}_t = \argmax_{  {\bm x} \in \mathcal{X} }  a_{t-1} ({\bm x} )$. 
\end{itemize}
In all experiments, the classification rules were the same for all six methods, and only the AF was changed. 
We used $\beta^{1/2}_t=3$ as the confidence parameter required for MILE and Straddle, and $\beta^{1/2}_t = \sqrt{2 \log (2500 \times \pi^2 t^2 /(6 \times 0.05 ) ) }$ for LSE. 
Under this setup, one initial point was taken at random and the algorithm was run until the 
number of iterations reached 300. 
This simulation was repeated 100 times, and the average $r_t$ and ${\rm Fscore}_t$ at each iteration were calculated, 
where in Case 1, $f$ was generated for each simulation from $\mathcal{G} \mathcal{P} (0,k)$. 

As shown in Fig. \ref{fig:syn1}, the proposed method consistently performs as well as or better than the comparison methods in all three cases, in terms of both the loss $r_t$ and the ${\rm Fscore}_t$.

\begin{table}[tb]
  \centering
    \caption{Experimental parameters for each setting in Section \ref{sec:5_1}}
  \begin{tabular}{c||c|c|c|c|c|c|c|c} \hline 
Black-box function & $l_1$  & $u_1$ & $l_2$ & $u_2$ & $\sigma^2_f$ & $L$ & $\sigma^2_{{\rm noise} }$ & $\theta$ \\ \hline \hline
GP sample path & $-5$ & 5 & $-5$ & 5 & 1 & 2 & $10^{-6}$ & $0.5$ \\ \hline
Sinusoidal function & 0 & 1 & 0 & 2 & $\exp (2)$ & $2 \exp (-3)$ & $\exp (-2)$ & 1 \\ \hline 
Himmelblau's function & -5 & 5 & -5 & 5 & $\exp (8)$ & 2 & $\exp(4)$ & 0 \\ \hline \hline
  \end{tabular}
\label{tab:exp_para}
\end{table}

\begin{figure}[tb]
\begin{center}
 \begin{tabular}{ccc}
 \includegraphics[width=0.33\textwidth]{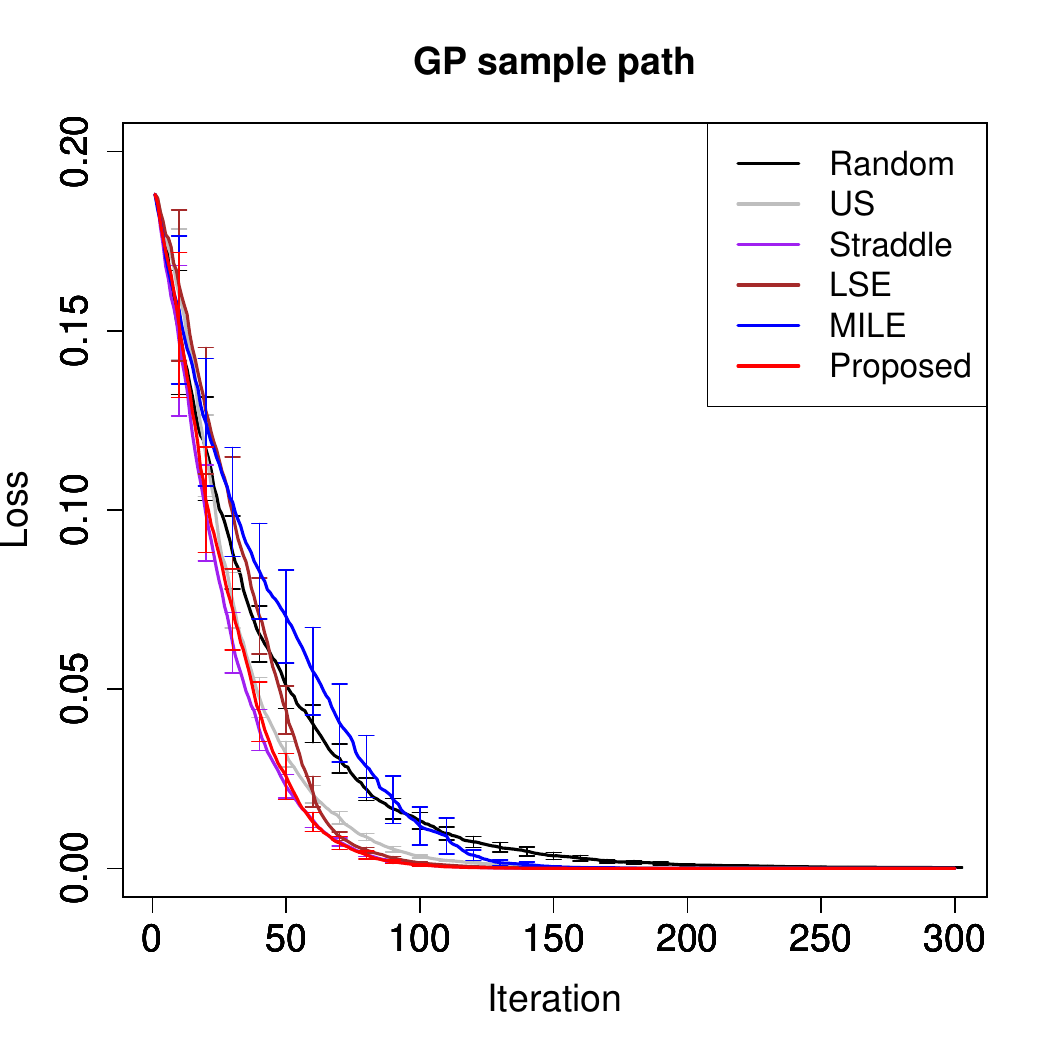} &
 \includegraphics[width=0.33\textwidth]{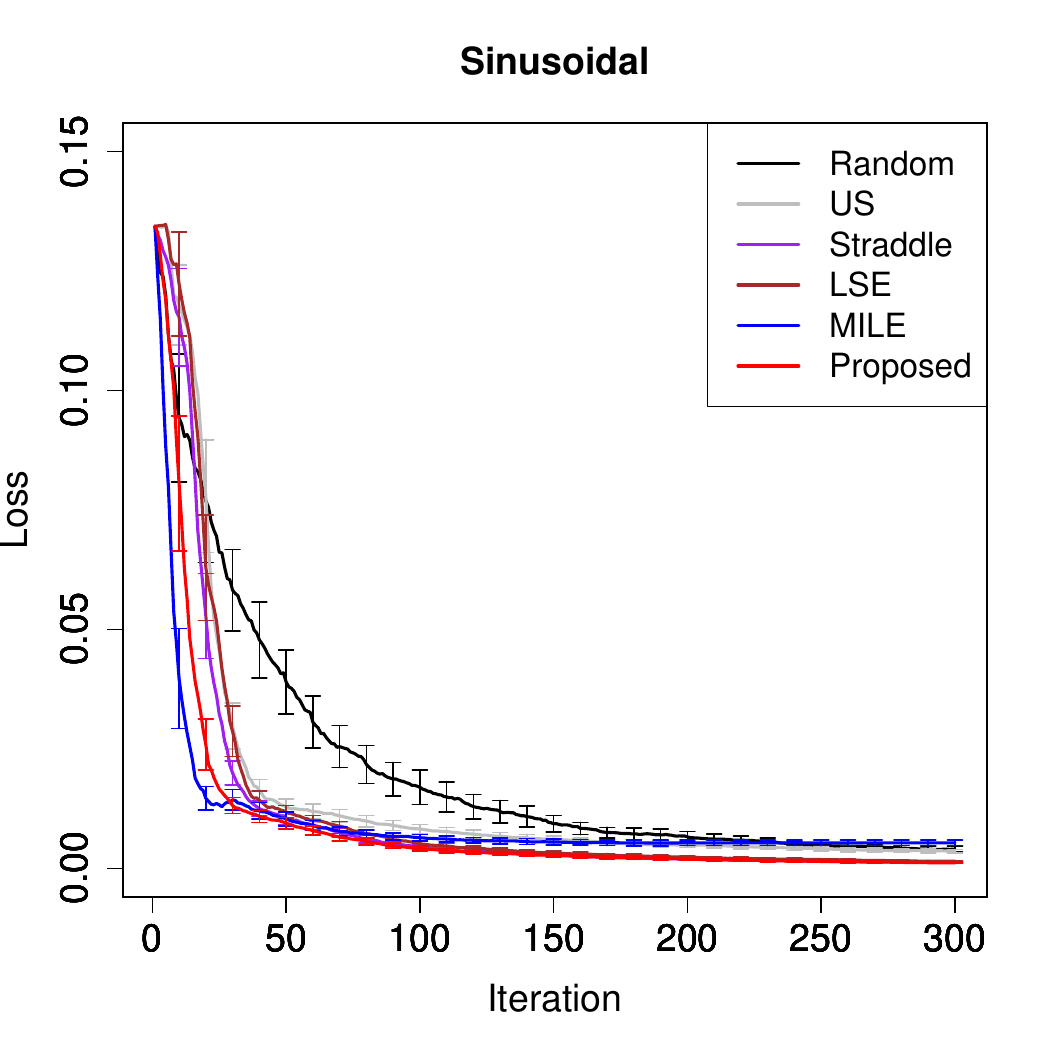} &
 \includegraphics[width=0.33\textwidth]{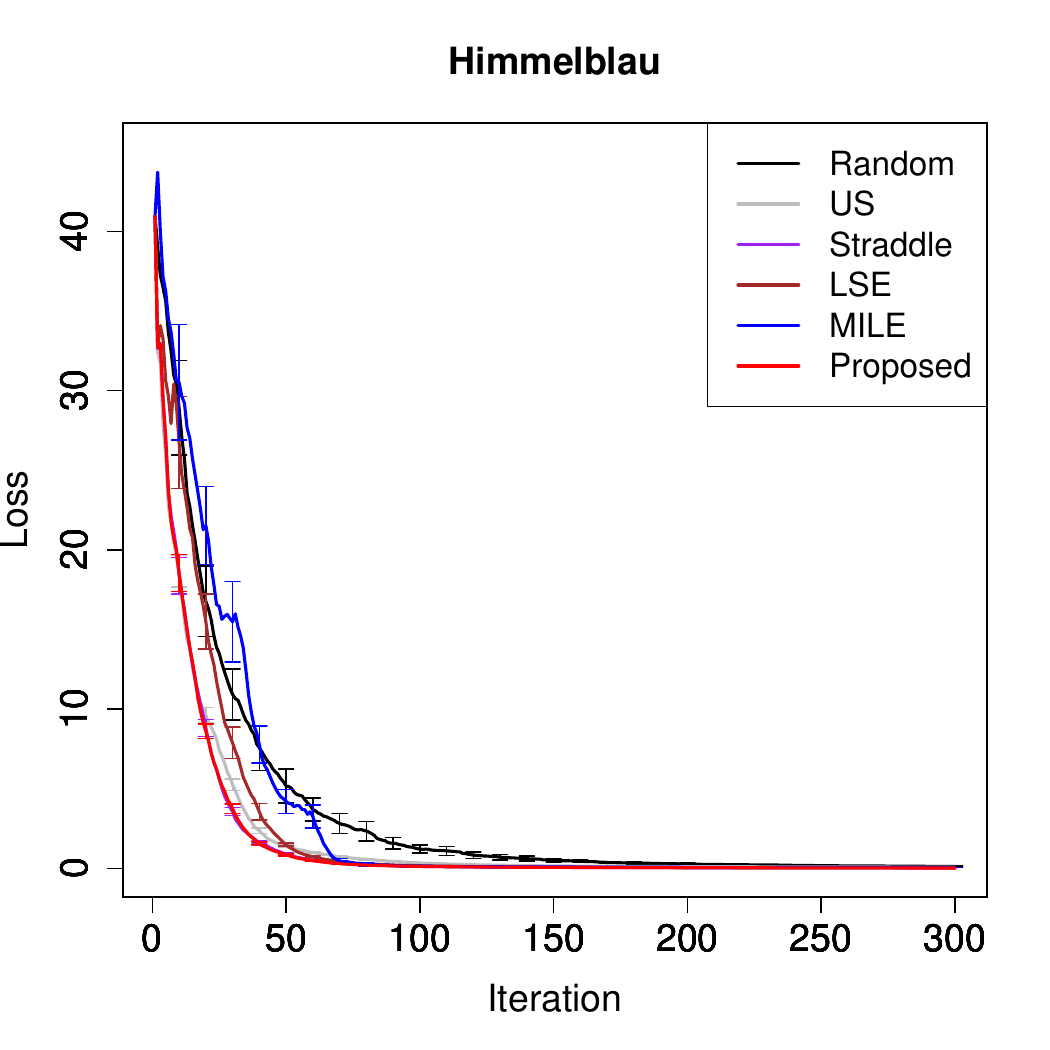} \\
 \includegraphics[width=0.33\textwidth]{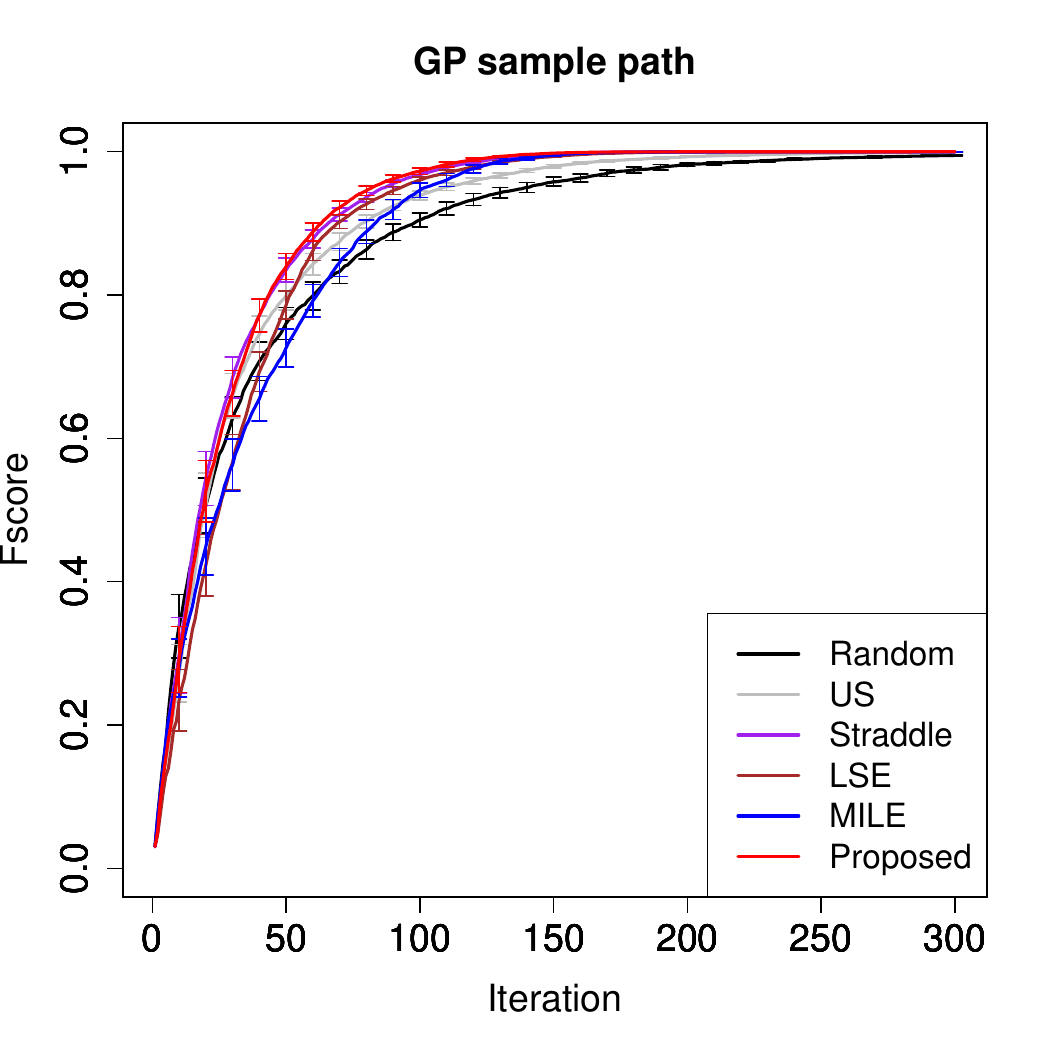} &
 \includegraphics[width=0.33\textwidth]{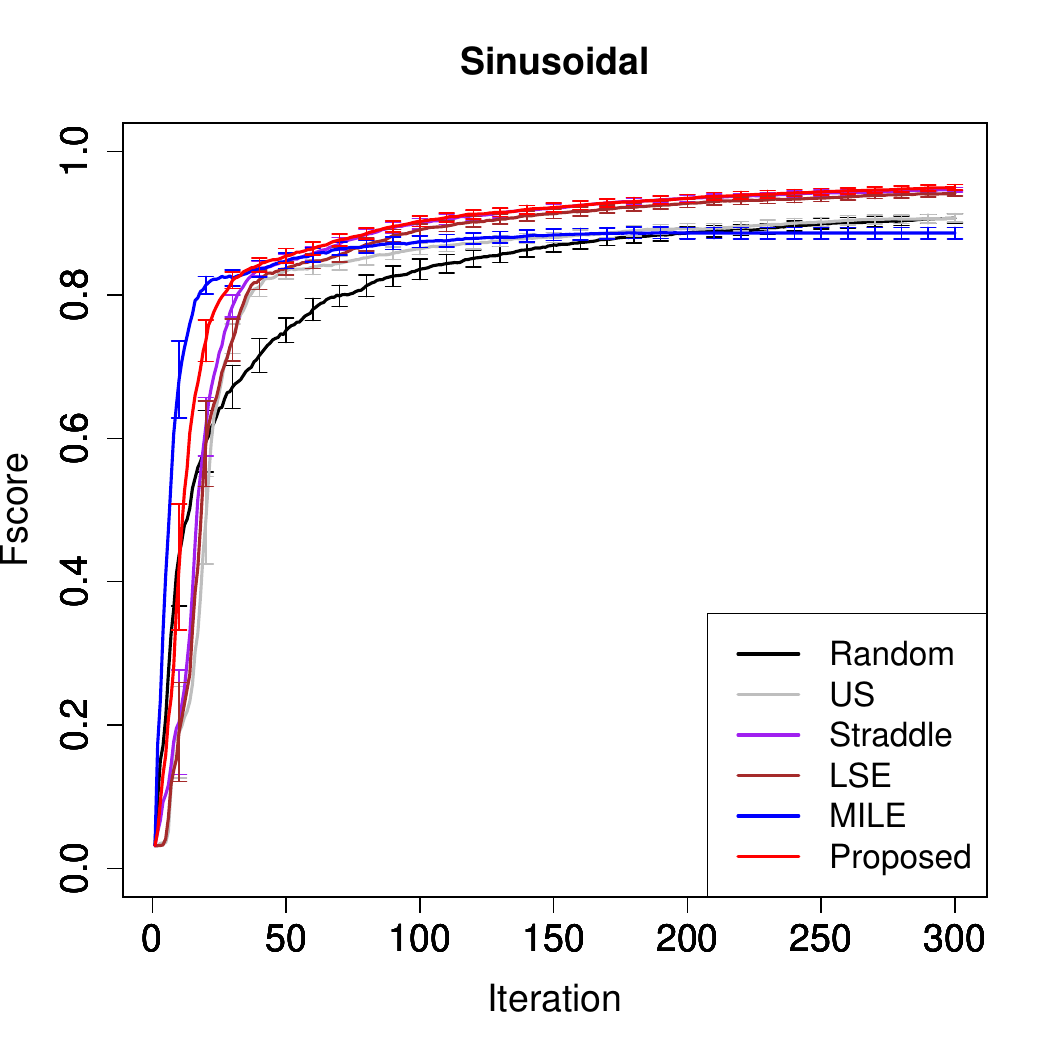} &
 \includegraphics[width=0.33\textwidth]{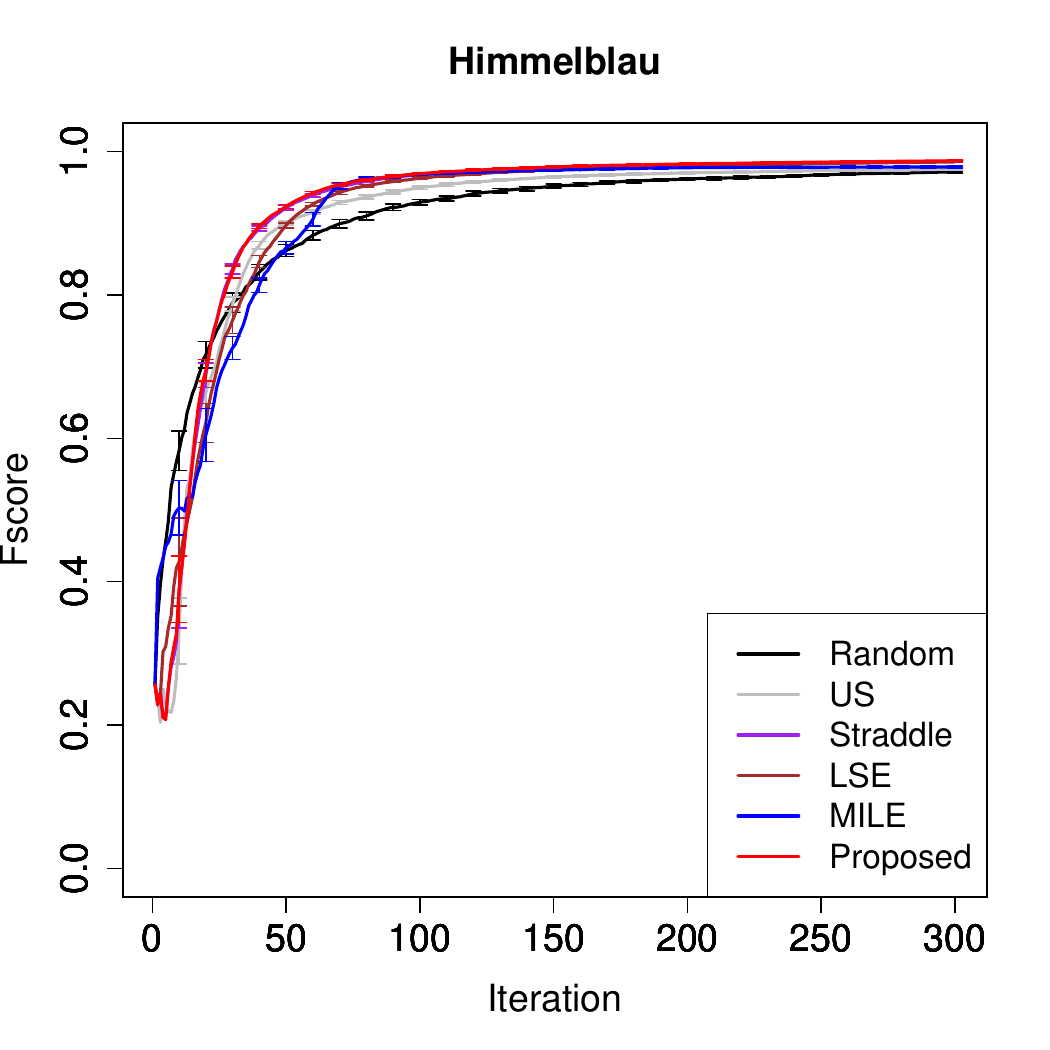} 
 \end{tabular}
\end{center}
 \caption{Averages for the loss $r_t$ and ${\rm Fscore}_t$ for each AF over 100 simulations across different settings when the input space is finite. The top row shows $r_t$, and the bottom row shows ${\rm Fscore}_t$. Error bars represent six times the standard error.}
\label{fig:syn1}
\end{figure}

\subsection{Synthetic Data Experiments when $\mathcal{X}$ is Infinite}\label{sec:5_2}
In this section, we used the region $[-5,5]^5  \subset \mathbb{R}^5 $ as $\mathcal{X}$ and the same kernel as in Section \ref{sec:5_1}. 
As black-box functions, we used the following three synthetic functions:
\begin{itemize}
\item [Case 1] The black-box function $f(x_1,x_2,x_3,x_4,x_5)$ is the following shifted negative sphere function:
$$
f(x_1,x_2,x_3,x_4,x_5 ) = 41.65518 -\left ( \sum_{d=1}^5 x^2_d \right ). 
$$
\item [Case 2] The black-box function $f(x_1,x_2,x_3,x_4,x_5)$ is the following shifted negative Rosenbrock function:
$$
f(x_1,x_2,x_3,x_4,x_5 ) = 53458.91 -   \left  [  \sum_{d=1}^4  \left \{ 100 (x_{d+1} -x^2_d)^2+(1-x_d)^2 \right \}  \right ].
$$
\item [Case 3] The black-box function $f(x_1,x_2,x_3,x_4,x_5)$ is the following shifted negative Styblinski-Tang function: 
$$
f(x_1,x_2,x_3,x_4,x_5 ) = -20.8875 -  \frac{ \sum_{d=1}^5 (x_d^4-16x^2_d+5x_d) }{2}.
$$
\end{itemize}
Additionally, we used the normal distribution with mean 0 and variance $\sigma^2_{{\rm noise}}$ for the observation noise. 
The threshold $\theta $ and parameters used for each setting are summarized in Table \ref{tab:exp_para2}. 
The performance was evaluated using $r_t$ and ${\rm Fscore}_t$.
For each simulation, 100,000 points were randomly selected from $[-5,5]^5$, which were used as the input point set $\tilde{X}$ to calculate $r_t$ and ${\rm Fscore}_t$. 
The values of $r_t$ and ${\rm Fscore}_t$ in $\tilde{X}$ were calculated as approximations of the true values.
As AFs, we compared five methods used in Section \ref{sec:5_1}, except for MILE, which does not handle continuous settings.
We used $\beta^{1/2}_t=3$ as the confidence parameter required for Straddle, and $\beta^{1/2}_t = \sqrt{2 \log (10^{15} \times \pi^2 t^2 /(6 \times 0.05 ) ) }$ for LSE. 
Here, the original LSE algorithm uses the intersection of ${\rm ucb}_{t-1} ({\bm x} )$ and ${\rm lcb}_{t-1} ({\bm x} )$ in the previous iterations given below to calculate the AF:
$$
\tilde{\rm ucb}_{t-1} ({\bm x} ) =  \min_{1 \leq i \leq t }  {\rm ucb}_{i-1} ({\bm x} ), 
\tilde{\rm lcb}_{t-1} ({\bm x} ) =  \max_{1 \leq i \leq t }  {\rm lcb}_{i-1} ({\bm x} ).
$$
Conversely, we did not perform this operation in the infinite set setting, and calculated the AF instead using $\tilde{\rm ucb}_{t-1} ({\bm x} ) = {\rm ucb}_{t-1} ({\bm x} ) $ and $\tilde{\rm lcb}_{t-1} ({\bm x} ) ={\rm lcb}_{t-1} ({\bm x} ) $.
Under this setup, one initial point was chosen at random and the algorithm was run for 500 iterations. 
This simulation was repeated 100 times and the average $r_t$ and ${\rm Fscore}_t$ at each iteration were calculated.

From Fig \ref{fig:syn2}, it can be confirmed that the proposed method has performance equal to or better than the comparison methods in terms of both $r_t$ and ${\rm Fscore}_t$ in the sphere function setting. 
In the case of the Rosenbrock function setting, the proposed method exhibited performance equivalent to or better than the comparison method in terms of $r_t$. 
Moreover, in terms of ${\rm Fscore}_t$, the Random method showed the best performance up to 250 iterations, but the proposed method matched or outperformed the comparison methods by the end of the iterations. 
In the Styblinski-Tang function setting, Random performed best in terms of $r_t$ and ${\rm Fscore}_t$ up to around 300 iterations, but the proposed method equaled or surpassed the comparison methods by the final iterations.

\begin{table}[tb]
  \centering
    \caption{Experimental parameters for each setting in Section \ref{sec:5_2}}
  \begin{tabular}{c||c|c|c|c} \hline 
Black-box function &  $\sigma^2_f$ & $L$ & $\sigma^2_{{\rm noise} }$ & $\theta$ \\ \hline \hline
Sphere &  900 & 40 & $10^{-6}$ & $9.6$ \\ \hline
Rosenbrock &  $30000^2$ & $40$ & $10^{-6}$ & 14800 \\ \hline 
Styblinski-Tang &  $75^2$ & 40 & $10^{-6}$ & 12.3 \\ \hline \hline
  \end{tabular}
\label{tab:exp_para2}
\end{table}

\begin{figure}[tb]
\begin{center}
 \begin{tabular}{ccc}
 \includegraphics[width=0.33\textwidth]{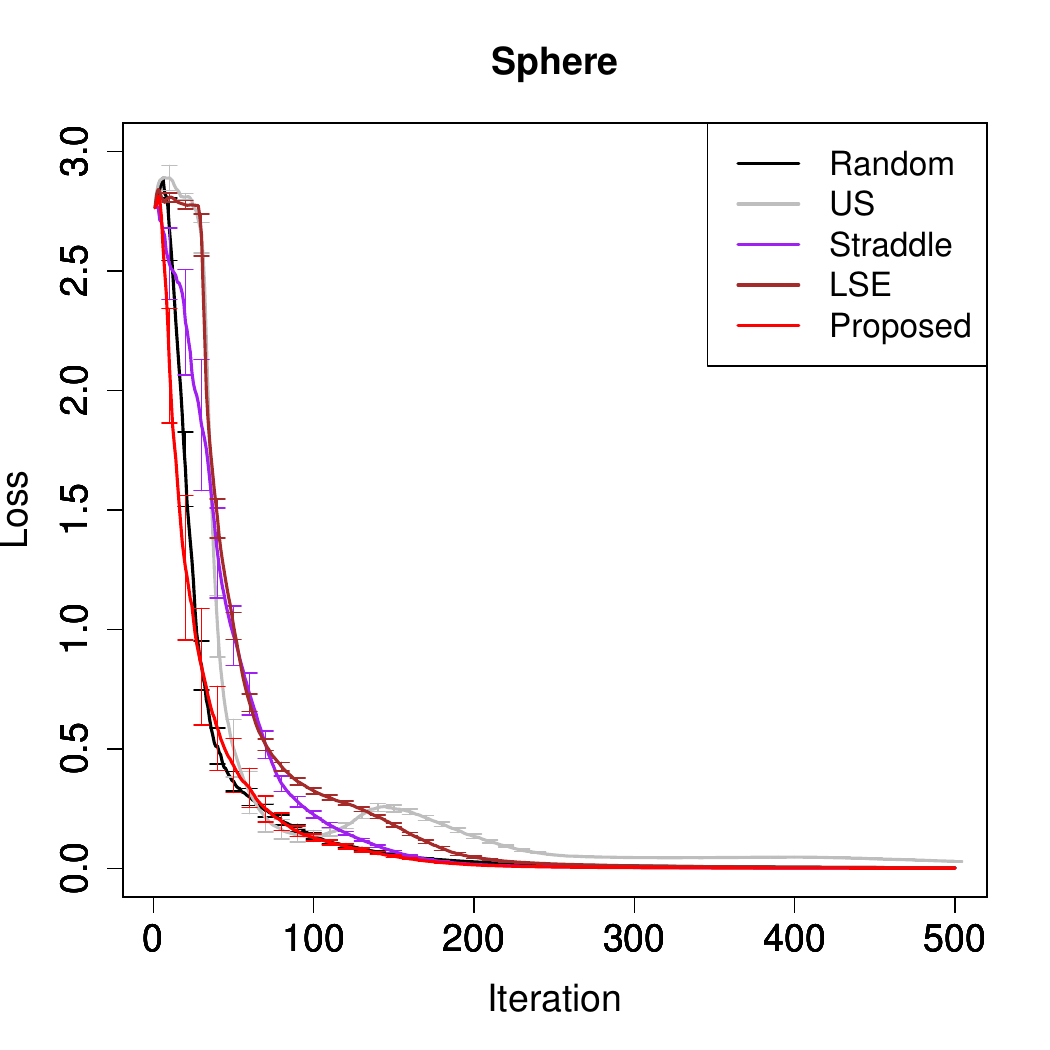} &
 \includegraphics[width=0.33\textwidth]{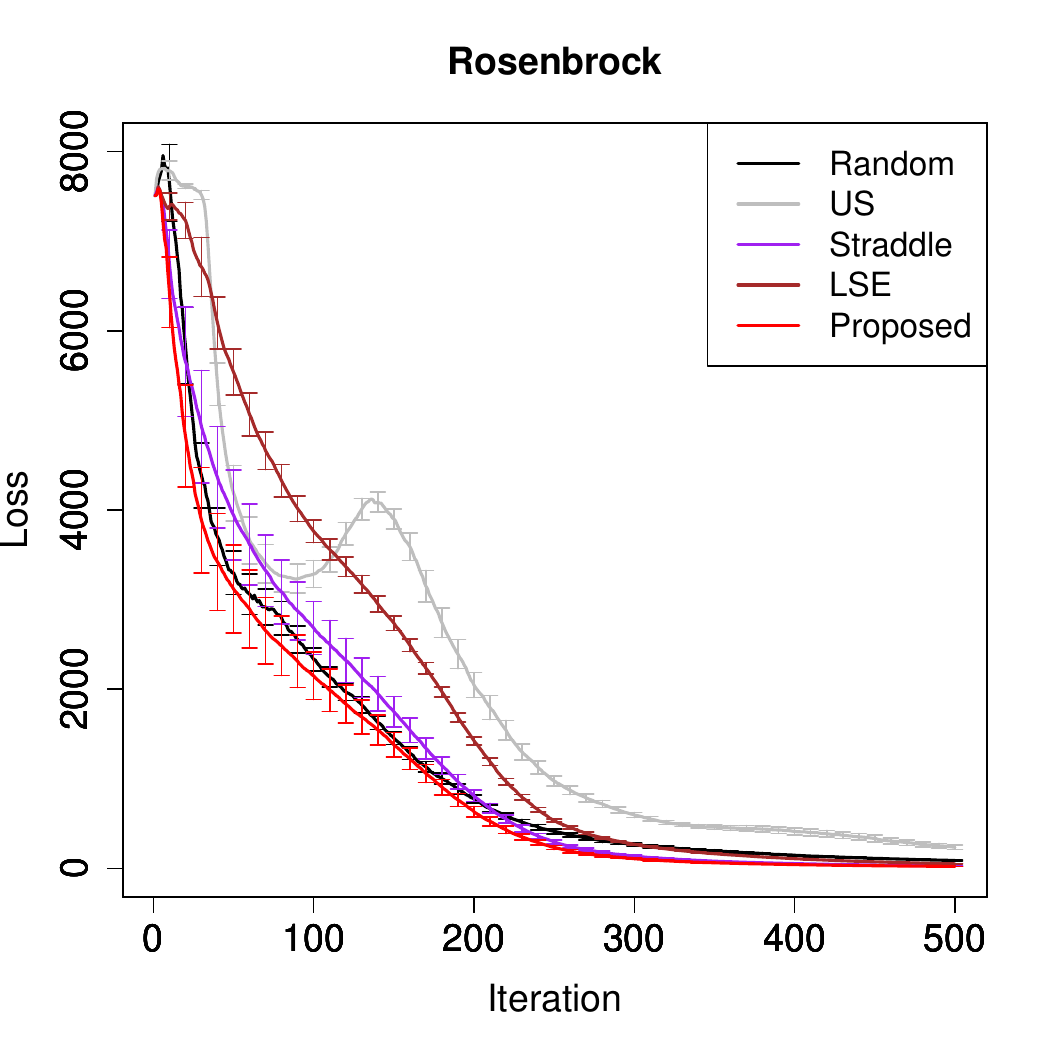} &
 \includegraphics[width=0.33\textwidth]{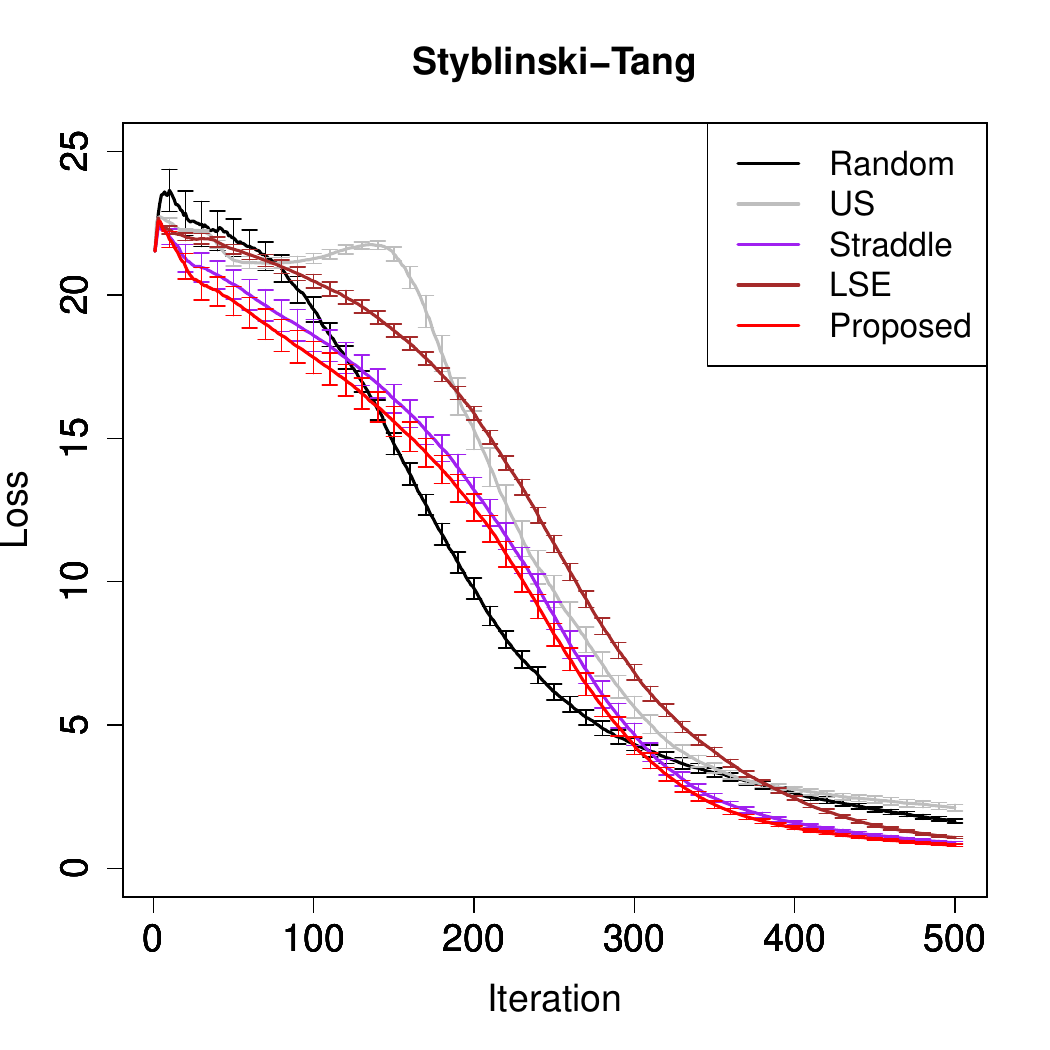} \\
 \includegraphics[width=0.33\textwidth]{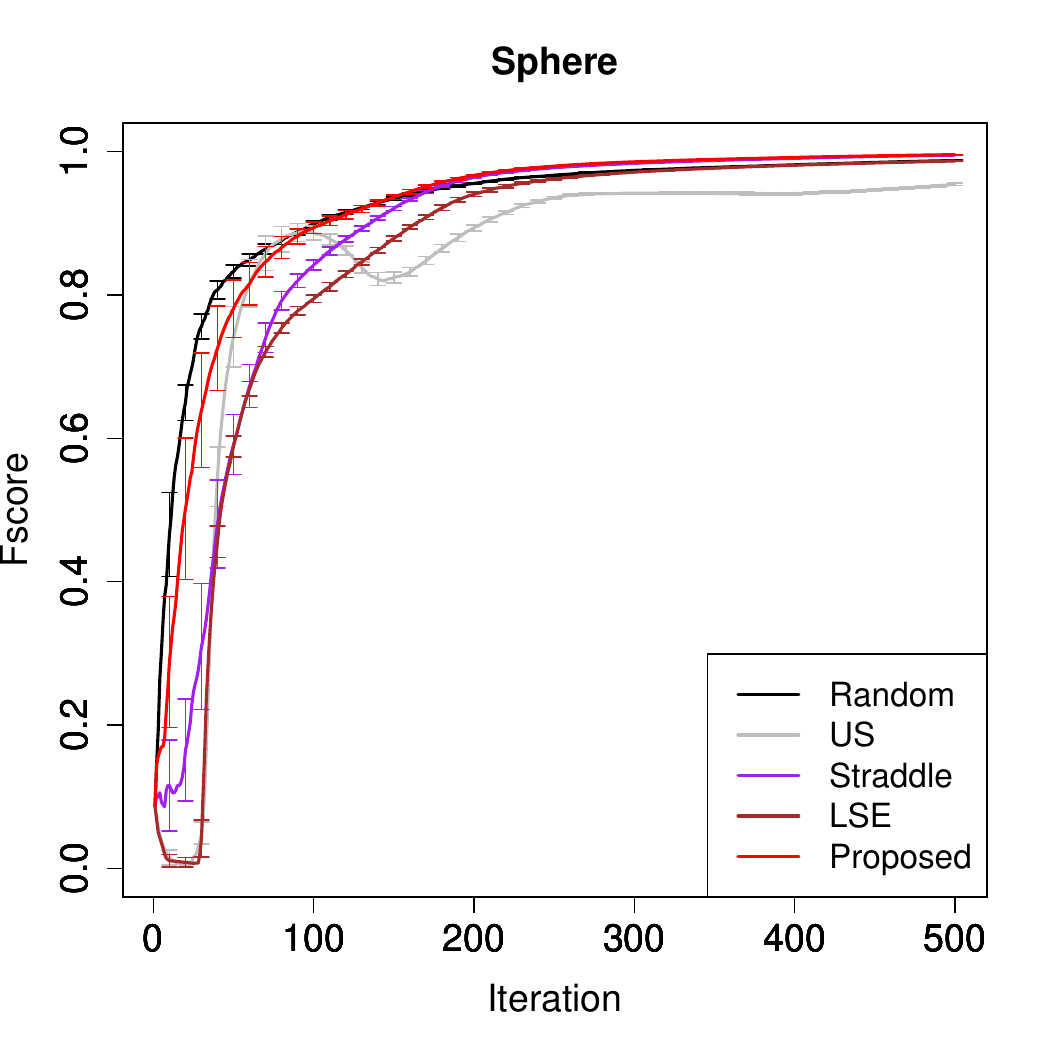} &
 \includegraphics[width=0.33\textwidth]{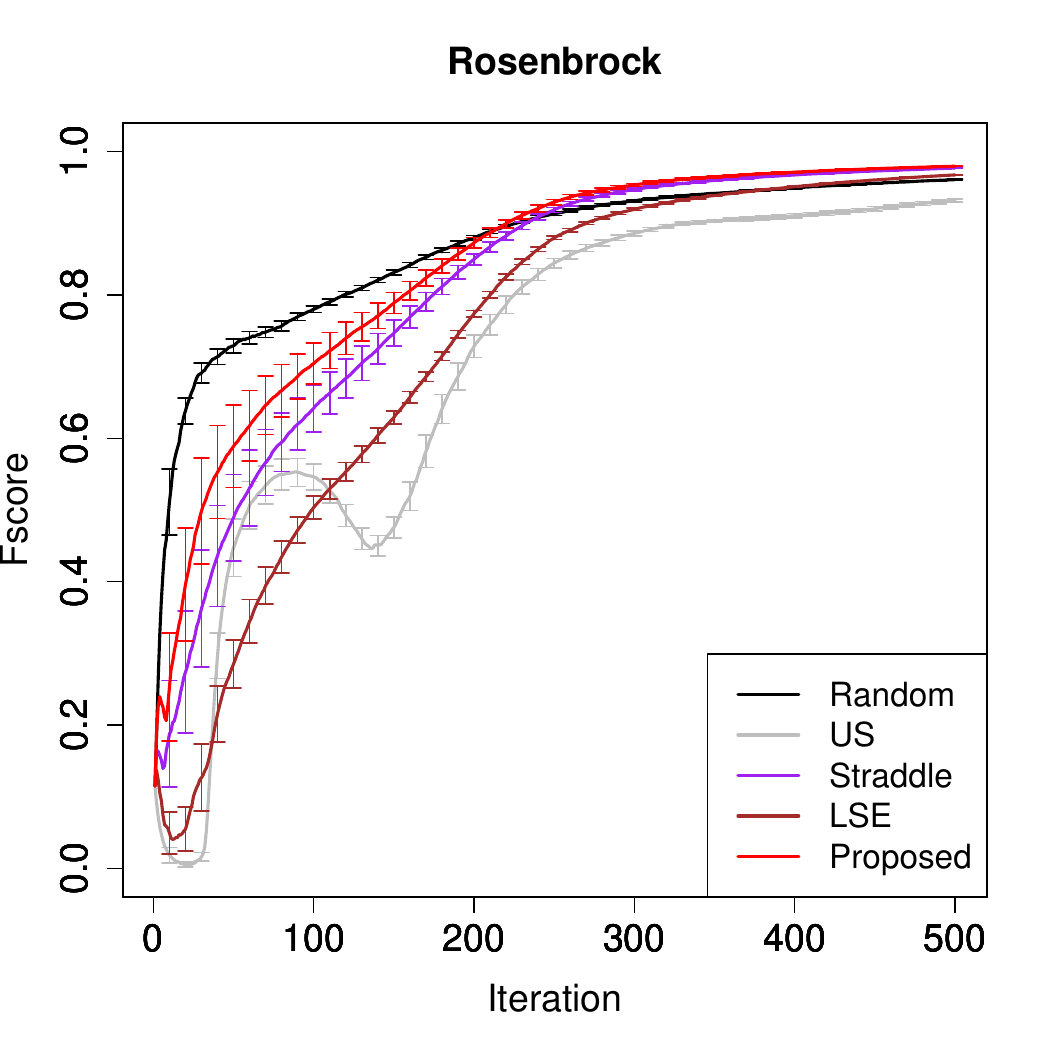} &
 \includegraphics[width=0.33\textwidth]{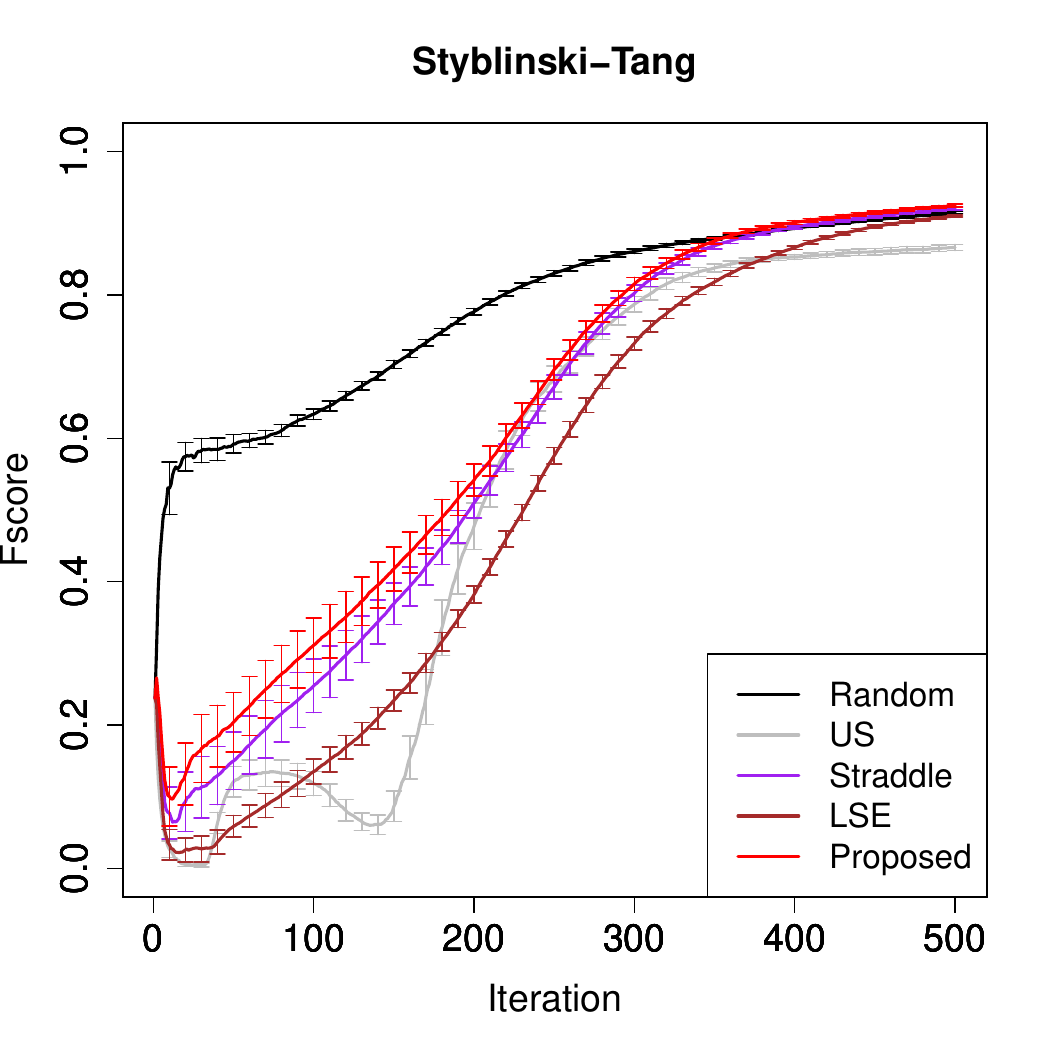} 
 \end{tabular}
\end{center}
 \caption{Averages of the loss $r_t$ and ${\rm Fscore}_t$ for each AF over 100 simulations for each setting when the input space is infinite. The top row shows $r_t$, the bottom row shows ${\rm Fscore}_t$, and each error bar length represents the six times the standard error.}
\label{fig:syn2}
\end{figure}

\subsection{Real-world Data Experiments}\label{sec:5_3}
In this section, we conducted experiments using the carrier lifetime value, a measure of the quality performance of silicon ingots used in solar cells \citep{kutsukake2015characterization}. 
The data we used include the two-dimensional coordinates ${\bm x} =(x_1,x_2) \in \mathbb{R}^2$ of the sample surface and the carrier lifetime values $\tilde{f}({\bm x}) \in [0.091587,7.4613]$ at each coordinate, where $x_1 \in \{ 2a+6 \mid 1 \leq a \leq 89 \} $, $x_2 \in \{ 2a+6 \mid 1 \leq a \leq 74 \} $ and $|\mathcal{X}| = 89 \times 74 =6586$. 
In quality evaluation, identifying defective regions, known as red zones areas where the value of $\tilde{f}({\bm x} )$ falls below a certain threshold is crucial.
In this experiment, the threshold was set to 3, and we focused on identifying regions where $\tilde{f}( {\bm x} )$ is 3 or less. 
We considered $ {f} ({\bm x} ) = - \tilde{f}( {\bm x} ) + 3 $ as the black-box function and performed experiments with $\theta =0$. 
Additionally, the experiment was conducted assuming there was no noise in the observations. 
Moreover, to stabilize the posterior distribution calculation, $\sigma^2_{{\rm noise}} = 10^{-6}$ was used in the calculation. 
We used the following Mat\'{e}rn $3/2$ kernel:
$$
k({\bm x},{\bm x}^\prime ) = 4 \left (1+ \frac{  \sqrt{3} \| {\bm x} - {\bm x}^\prime \|_2  }{25} \right ) \exp \left ( - \frac{  \sqrt{3}   \| {\bm x} - {\bm x}^\prime \|_2    }{25}  \right ).
$$
The performance was evaluated using the loss $r_t$ and ${\rm Fscore}_t$. 
As AFs, we compared six methods used in Section \ref{sec:5_1}. 
We used $\beta^{1/2}_t=3$ as the confidence parameter required for MILE and Straddle, and $\beta^{1/2}_t = \sqrt{2 \log (6586 \times \pi^2 t^2 /(6 \times 0.05 ) ) }$ for LSE. 
Under this setup, one initial point was chosen at random and the algorithm was run for 200 iterations. 
Because the observation noise was set to 0, the experiment was conducted under the setting that a point that had been observed once would not be observed thereafter. 
This simulation was repeated 100 times and the average $r_t$ and ${\rm Fscore}_t$  at each iteration were calculated. 

As shown in Fig. \ref{fig:real1}, the proposed method demonstrates performance that is equal to or better than the comparison methods in terms of both loss $r_t$ and ${\rm Fscore}_t$. 

\begin{figure}[tb]
\begin{center}
 \begin{tabular}{cc}
 \includegraphics[width=0.33\textwidth]{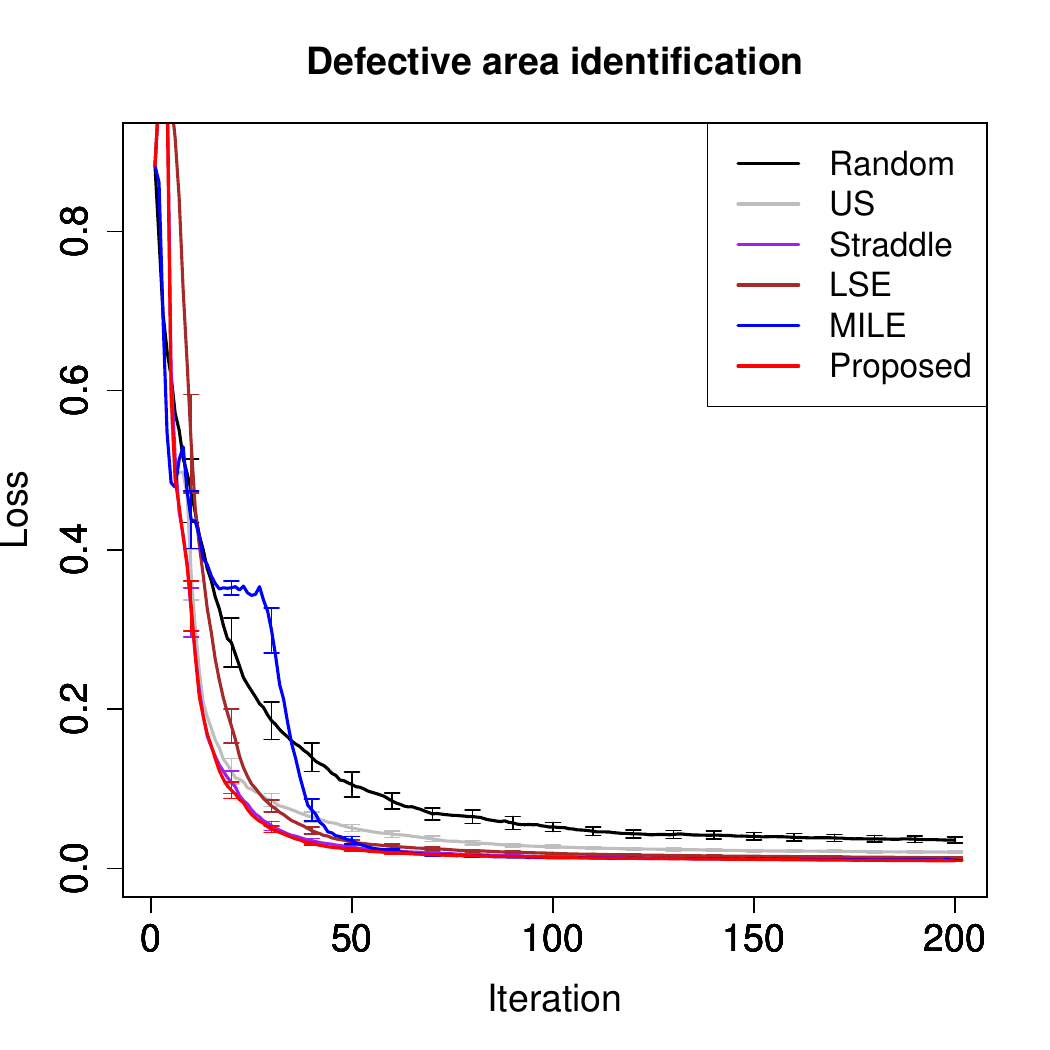} &
 \includegraphics[width=0.33\textwidth]{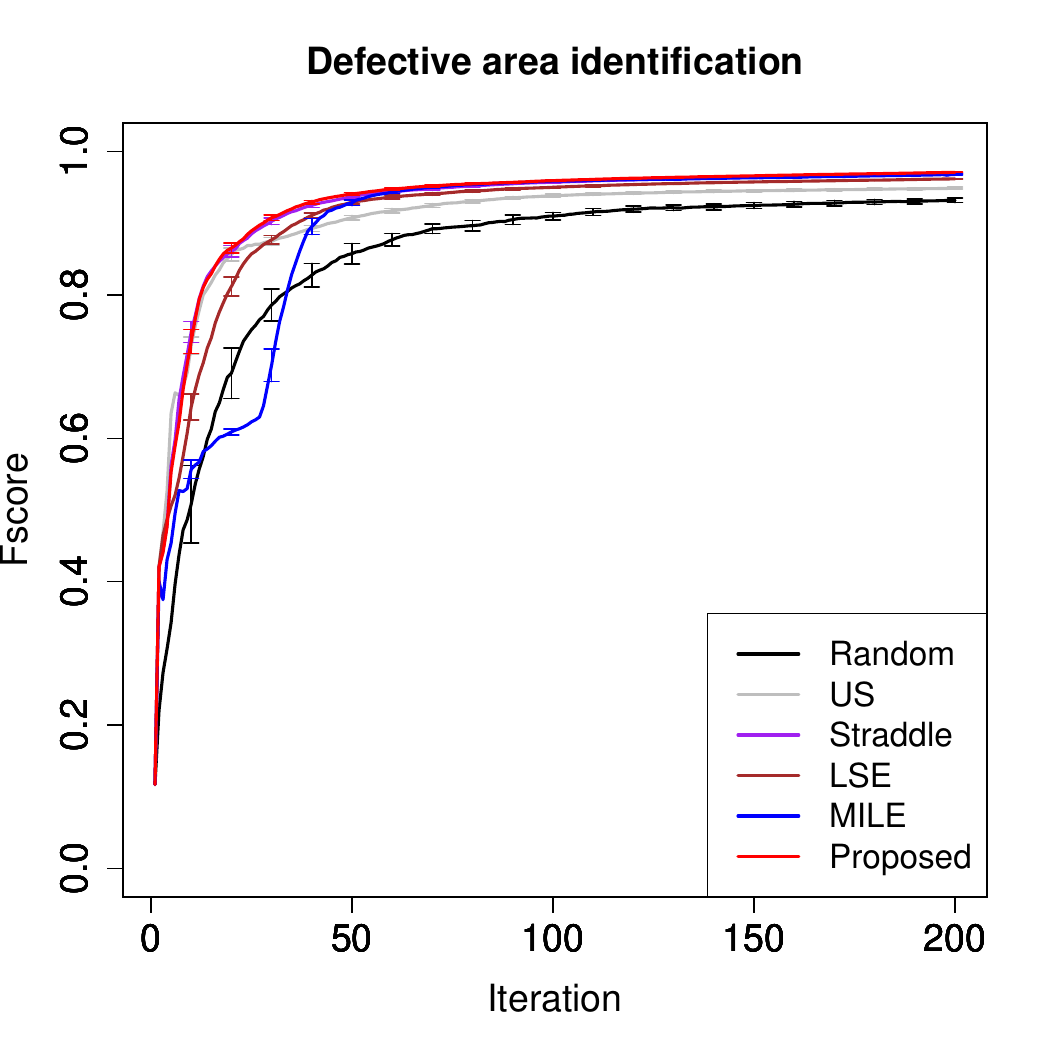} 
 \end{tabular}
\end{center}
 \caption{ Averages of the loss $r_t$ and ${\rm Fscore}_t$ for each AF over 100 simulations using the carrier lifetime data. The left figure shows $r_t$, while the right figure shows ${\rm Fscore}_t$, with error bars representing six times the standard error.}
\label{fig:real1}
\end{figure}
\section{Conclusion}
In this study, we proposed a novel method called the randomized straddle algorithm, an extension of the straddle algorithm for LSE problems in black-box functions. 
The proposed method replaces the value of $\beta_t$ in the straddle algorithm with a random sample from the chi-squared distribution with two degrees of freedom, performing LSE based on the GP posterior mean. 
As mensioned in Section \ref{sec:4}, by considering an appropriate AF for an appropriate loss function and solving the difficult problem of appropriately designing the distribution of the confidence parameters, we proved non-trivial theoretical results that the expected value of the loss in the estimated sets and that of the sum of losses are $O(\sqrt{\gamma_t/t})$ and $O(\sqrt{t \gamma_t})$, respectively. 
Compared to existing methods, the proposed approach offers three key advantages. 
First, most theoretical analyses of existing methods involve confidence parameters that depend on the number of candidate points and iterations, whereas such terms are not present in the proposed method. 
Second, existing methods either do not apply to continuous search spaces or require discretization, with parameters for discretization often being unknown. In contrast, the proposed method is applicable to continuous search spaces without requiring algorithmic adjustments, providing the same theoretical guarantees as for finite search spaces. 
Third, while confidence parameters in existing methods tend to be overly conservative, the expected value of the confidence parameter in the proposed method is $\sqrt{2 \pi}/2 \approx 1.25$, which is not excessively conservative. 
Furthermore, numerical experiments demonstrated that the performance of the proposed method is equal to or better than that of existing methods. 
This indicates that the proposed method performs comparably to heuristic methods while offering the added benefit of theoretical guarantees. 

On the other hand, the proposed method has three drawbacks and limitations. 
First, since the proposed method does not make any significant changes other than adding randomization to the existing straddle, the practical performance is not dramatically improved compared to the case of using a fixed $\beta_t$. For example, if the next point is selected as the point at which $\mathbb{E}[r(H_t,L_t)]$ is the largest, practical performance can be expected to improve, but theoretical analysis in this case is not easy. 
Second, as mentioned in Section \ref{sec:4}, the high-probability bounds derived by Theorem \ref{thm:hpb} are not tight. 
Finally, it is not easy to extend to other settings. As mentioned in Section \ref{sec:4}, the theoretical analysis of the proposed method was achieved by appropriately selecting the loss, AF, and distribution of the confidence parameter. Therefore, simply replacing the parameters used in some AF with a chi-square distribution cannot directly apply the method to, for example, cases where the F-score is used as an evaluation index or to other problem settings. 
However, the insight gained from derivation of the theoretical results in this paper is that even if the problem setting is different and the evaluation index considered changes, if an AF is designed that can bound the evaluation index with high probability, it can be expected to be extended to other problem settings.
 In particular, both the results of \cite{pmlr-v202-takeno23a} and the results of this paper use the property that some loss function can be bounded from above by $\beta^{1/2}_t \sigma_{t-1} ({\bm x}_t)$ with high probability. 
Therefore, it can be said that the key point of the theoretical analysis is to consider a combination of AF and loss that satisfies this property.

Future work includes resolving the above-mentioned drawbacks and limitations. In addition, since BO and LSE for black-box functions become difficult to handle when the input variables are high-dimensional, extending the proposed method to high-dimensional settings is also one of future work.

\section*{Acknowledgement}
This work was partially supported by JSPS KAKENHI (JP20H00601,JP23K16943,JP23K19967,JP24K20847), JST ACT-X (JPMJAX23CD), JST CREST (JPMJCR21D3, JPMJCR22N2), JST Moonshot R\&D (JPMJMS2033-05), JST AIP Acceleration Research (JPMJCR21U2), NEDO (JPNP18002, JPNP20006) and RIKEN Center for Advanced Intelligence Project.

\bibliography{myref}
\bibliographystyle{apalike}

\newpage
\section*{Appendix}

\setcounter{section}{0}
\renewcommand{\thesection}{\Alph{section}}
\renewcommand{\thesubsection}{\thesection.\arabic{subsection}}

\section{Extension to Max-value Loss}\label{app:max_loss}
In this section, we consider the following max-value loss defined based on the maximum value of $ l_t ({\bm x} )$:
$$
r(H_t,L_t) = \max_{ {\bm x} \in \mathcal{X} }  l_t ({\bm x} ) \equiv \tilde{r}_t. 
$$
When $\mathcal{X}$ is finite, we need to modify the definition of the AF and the estimated sets returned at the end of the algorithm. 
Conversely, if $\mathcal{X}$ is an infinite set, the definitions of $H_t$ and $L_t$ should be modified in addition to the above. 
Therefore, we discuss the finite and infinite cases separately. 

\subsection{Proposed Method for Max-value Loss when $\mathcal{X}$ is Finite}
When $\mathcal{X}$ is finite, we propose the following AF with a modified distribution that $\beta_t$ follows.
\begin{definition}[Randomized Straddle for Max-value Loss] \label{def:AF_maxloss}
For each $t \geq 1$, let $\xi_t $ be a random sample from the chi-squared distribution with two degrees of freedom, where 
 $\xi_1 ,\ldots , \xi_t, \varepsilon_1,\ldots, \varepsilon_t, f$ are mutually independent. 
Define $\beta_t = \xi_t + 2 \log (|\mathcal{X}|)$. 
Then, the randomized straddle AF for the max-value loss, $\tilde{a}_{t-1} ({\bm x} )$, is defined as: 
\begin{equation}
\tilde{a}_{t-1} ({\bm x} ) =   \max \{  \min  \{   {\rm ucb}_{t-1} ({\bm x} )  - \theta , \theta - {\rm lcb}_{t-1} ({\bm x} )  \}  ,  0 \}.  \label{eq:AF_def_maxloss}
\end{equation}
\end{definition}
By using  $\tilde{a}_{t-1} ({\bm x} ) $, the next point to be evaluated is selected by ${\bm x}_t = \argmax_{ {\bm x} \in \mathcal{X} }  \tilde{a}_{t-1} ({\bm x} )$. 
Additionally, we change estimation sets returned at the end of iterations $T$ in the algorithm to the following instead of $H_T$ and $L_T$:
\begin{definition}
For each $t$, define 
\begin{equation}
\hat{t} =  \argmin_{ 1 \leq i \leq t }  \mathbb{E}_t [\tilde{r}_i ], \label{eq:def_t_hat_maxloss}
\end{equation}
where  $\mathbb{E}_t [\cdot]$ represents the conditional expectation given $D_{t-1} $. 
Then, at the end of iterations $T$,  we define $H_{\hat{T}}$ and$L_{\hat{T}}$ to be the estimated sets. 
\end{definition}
Finally, we give the pseudocode of the proposed algorithm in Algorithm \ref{alg:2}. 

\begin{algorithm}[t]
    \caption{Randomized Straddle Algorithms for Max-value Loss in the Finite Setting}
    \label{alg:2}
    \begin{algorithmic}
        \REQUIRE GP prior $\mathcal{GP}(0, k)$, threshold $\theta \in \mathbb{R}$
        \FOR { $ t= 1,2,\ldots , T$}
            \STATE Compute $\mu_{t-1} ({\bm x} ) $ and $\sigma^2_{t-1} ({\bm x} )$ for each ${\bm x} \in \mathcal{X}$ by \eqref{eq:post_mean_var}
		\STATE Estimate $H_t$ and $L_t$ by \eqref{eq:LSE}
            \STATE Generate $\xi_t$ from the chi-squared distribution with two degrees of freedom 
		\STATE Compute $\beta_t = \xi_t + 2 \log (|\mathcal{X}|)$, ${\rm ucb}_{t-1} ({\bm x} )$, ${\rm lcb}_{t-1} ({\bm x} )$ and $\tilde{a}_{t-1} ({\bm x} )$
            \STATE Select the next evaluation point $\bm{x}_{t}$ by ${\bm x}_t = \argmax_{{\bm x} \in \mathcal{X}} \tilde{a}_{t-1} ({\bm x} )$  
            \STATE Observe $y_{t} = f(\bm{x}_{t}) + \varepsilon_{t}$  at the point $\bm{x}_{t}$ 
            \STATE Update GP by adding the observed data
        \ENDFOR
        \ENSURE Return $H_{\hat{T}}$ and $L_{\hat{T}}$ as the estimated sets, where $\hat{T}$ is given by \eqref{eq:def_t_hat_maxloss} 
    \end{algorithmic}
\end{algorithm}

\subsubsection{Theoretical Analysis for Max-value Loss when $\mathcal{X}$ is Finite}
For the max-value loss, the following theorem holds under Algorithm \ref{alg:2}. 
\begin{theorem}\label{thm:maxloss_1}
Let $f$ be a sample path from $ \mathcal{G} \mathcal{P} (0, k )$, where $k (\cdot, \cdot)$ is a positive-definite kernel satisfying $k ({\bm x} ,{\bm x} ) \leq 1$ for any 
${\bm x} \in \mathcal{X}$. 
For each $t \geq 1$, let $\xi_t $ be a random sample from the chi-squared distribution with two degrees of freedom, where   $\xi_1,\ldots,\xi_t,\varepsilon_1,\ldots.\varepsilon_t,f$ are mutually independent. 
Define  $\beta_t = \xi_t + 2 \log ( | \mathcal{X} |)$. 
Then,  the following holds for $\tilde{R}_t = \sum_{i=1}^t \tilde{r}_i $: 
$$
\mathbb{E} [  \tilde{R}_t ] \leq \sqrt{  \tilde{C}_1 t  \gamma_t},
$$
where $\tilde{C}_1 =   (4+ 4 \log (|\mathcal{X}|))/ \log(1+ \sigma^{-2}_{{\rm noise}})$ and the expectation is taken with all randomness including $f,\varepsilon_t$ and $\beta_t$. 
\end{theorem}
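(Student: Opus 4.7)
The plan is to adapt the strategy behind Theorem \ref{thm:1} to the max-value loss, exploiting the shift $\beta_t = \xi_t + 2\log|\mathcal{X}|$ so that the union bound over the $|\mathcal{X}|$ candidate points---inevitable when controlling a $\max$-over-$\mathcal{X}$ quantity---can be absorbed through the moment generating function of $\xi_t \sim \chi^2_2$.

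I would first establish the pointwise inequality
\[
l_t({\bm x}) \leq \tilde a_{t-1}({\bm x}) + \bigl(|f({\bm x}) - \mu_{t-1}({\bm x})| - \beta_t^{1/2}\sigma_{t-1}({\bm x})\bigr)_+, \qquad \forall\,{\bm x}\in\mathcal{X},\ t\geq 1.
\]
For correctly classified ${\bm x}$ this is trivial since both sides are nonnegative and $l_t({\bm x})=0$; for misclassified ${\bm x}$, say ${\bm x}\in H^\ast$ and ${\bm x}\in L_t$ (so $f({\bm x})\geq\theta>\mu_{t-1}({\bm x})$), the identity $|f({\bm x})-\mu_{t-1}({\bm x})| = l_t({\bm x}) + |\mu_{t-1}({\bm x})-\theta|$ combined with $\tilde a_{t-1}({\bm x}) \geq \beta_t^{1/2}\sigma_{t-1}({\bm x}) - |\mu_{t-1}({\bm x})-\theta|$ (directly from the definition of $\tilde a_{t-1}$) yields the bound by rearrangement; the other misclassification case is symmetric. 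Taking $\max_{\bm x}$, invoking ${\bm x}_t = \argmax_{{\bm x}\in\mathcal{X}}\tilde a_{t-1}({\bm x})$, and the inequality $\tilde a_{t-1}({\bm x}_t) \leq \beta_t^{1/2}\sigma_{t-1}({\bm x}_t)$ then gives
\[
\tilde r_t \leq \beta_t^{1/2}\sigma_{t-1}({\bm x}_t) + \max_{{\bm x}\in\mathcal{X}}\bigl(|f({\bm x})-\mu_{t-1}({\bm x})| - \beta_t^{1/2}\sigma_{t-1}({\bm x})\bigr)_+.
\]

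For the first term, summing over $i=1,\ldots,t$, applying Cauchy-Schwarz, and invoking the standard information-gain bound $\sum_i \sigma_{i-1}^2({\bm x}_i) \leq 2\gamma_t/\log(1+\sigma^{-2}_{{\rm noise}})$ gives
\[
\sum_{i=1}^t \beta_i^{1/2}\sigma_{i-1}({\bm x}_i) \leq \sqrt{\textstyle\sum_{i=1}^t \beta_i}\,\sqrt{\tfrac{2\gamma_t}{\log(1+\sigma^{-2}_{{\rm noise}})}}.
\]
Taking expectation and applying Jensen to $\sqrt{\cdot}$, together with $\mathbb{E}[\beta_i] = 2 + 2\log|\mathcal{X}|$, reproduces exactly $\sqrt{\tilde C_1 t\gamma_t}$.

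The main obstacle is the overshoot term. Conditioning on $D_{i-1}$, each $Z_{\bm x} = (f({\bm x}) - \mu_{i-1}({\bm x}))/\sigma_{i-1}({\bm x})$ is standard normal and independent of $\beta_i$, so using $\sigma_{i-1}({\bm x})\leq 1$ together with $\max_{\bm x}\leq\sum_{\bm x}$ for nonnegative summands gives
\[
\mathbb{E}\!\Bigl[\max_{{\bm x}\in\mathcal{X}}\bigl(|f({\bm x})-\mu_{i-1}({\bm x})|-\beta_i^{1/2}\sigma_{i-1}({\bm x})\bigr)_+\Bigr] \leq |\mathcal{X}|\,\mathbb{E}[(|Z|-\beta_i^{1/2})_+].
\]
The key computation is then $\mathbb{E}[(|Z|-a)_+] \leq 2\phi(a) = \sqrt{2/\pi}\,e^{-a^2/2}$, which combined with $\mathbb{E}[e^{-\xi_i/2}]=1/2$ (the moment generating function of $\chi^2_2$ at $-1/2$) produces $|\mathcal{X}|\,\mathbb{E}[(|Z|-\beta_i^{1/2})_+] \leq 1/\sqrt{2\pi}$: the $2\log|\mathcal{X}|$ shift in $\beta_t$ is precisely calibrated so that the union-bound factor cancels. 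The hard part, for which I expect a sharper Mills-ratio estimate will be needed, is to ensure that the resulting per-iteration constant overshoot can be absorbed into the leading $\sqrt{\tilde C_1 t\gamma_t}$ rather than contributing an additive $O(t)$ term.
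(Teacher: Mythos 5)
There is a genuine gap, and you have in fact put your finger on it yourself in your last sentence: the overshoot term cannot be absorbed. With your decomposition, the per-iteration overshoot satisfies
\[
\mathbb{E}\Bigl[\max_{{\bm x}\in\mathcal{X}}\bigl(|f({\bm x})-\mu_{i-1}({\bm x})|-\beta_i^{1/2}\sigma_{i-1}({\bm x})\bigr)_+\Bigr]\;\leq\; |\mathcal{X}|\,\sqrt{2/\pi}\;\mathbb{E}\bigl[e^{-\beta_i/2}\bigr] \;=\; \tfrac{1}{\sqrt{2\pi}},
\]
a strictly positive constant that does not decay with $i$ or $t$ (the exact value of $\mathbb{E}[(|Z|-a)_+]=2\{\phi(a)-a(1-\Phi(a))\}$ is positive for every finite $a$, so no sharper Mills-ratio estimate changes this). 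Summing over $i=1,\dots,t$ yields an additive $\Theta(t)$ term, so your route proves at best $\mathbb{E}[\tilde R_t]\leq\sqrt{\tilde C_1 t\gamma_t}+t/\sqrt{2\pi}$, which is strictly weaker than the stated bound $\sqrt{\tilde C_1 t\gamma_t}$ and is not even sublinear when $\gamma_t=o(t)$. The obstruction is structural: once you split $l_t$ into the acquisition value plus a nonnegative overshoot and union-bound the overshoot over $\mathcal{X}$, you are left with a constant expected residual per round.

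The paper avoids the overshoot entirely by a quantile-coupling argument rather than a pointwise decomposition. It first shows, exactly as in Theorem \ref{thm:1}, that for each fixed $\delta\in(0,1)$ the event $\tilde r_t\leq\max_{{\bm x}\in\mathcal{X}}\tilde a_{t-1,\delta}({\bm x})$ (with $\beta_\delta=2\log(|\mathcal{X}|/\delta)$) holds with probability at least $1-\delta$ conditionally on $D_{t-1}$. Writing $F_{t-1}$ for the conditional distribution function of $\tilde r_t$, this reads $F^{-1}_{t-1}(1-\delta)\leq\max_{{\bm x}}\tilde a_{t-1,\delta}({\bm x})$ for \emph{every} $\delta$; drawing $\delta$ uniformly on $(0,1)$ makes the left side equal in distribution to $\tilde r_t$ given $D_{t-1}$ and makes $\beta_\delta=2\log|\mathcal{X}|+2\log(1/\delta)$ equal in distribution to your $\beta_t=\xi_t+2\log|\mathcal{X}|$, whence $\mathbb{E}_t[\tilde r_t]\leq\mathbb{E}_{\beta_t}[\tilde a_{t-1}({\bm x}_t)]$ with no residual. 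From there the Cauchy--Schwarz/H\"older/information-gain chain you describe (which is correct, including $\mathbb{E}[\beta_i]=2+2\log|\mathcal{X}|$) gives $\sqrt{\tilde C_1 t\gamma_t}$. To repair your proof you would need to replace the overshoot decomposition with this inverse-CDF step; the rest of your argument can then be kept as is.
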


From Theorem \ref{thm:maxloss_1}, the following theorem holds. 
\begin{theorem}\label{thm:maxloss_2}
Under the assumptions of Theorem \ref{thm:maxloss_1}, the following inequality holds: 
$$
\mathbb{E}[ r_{\hat{t}}] \leq \sqrt{\frac{\tilde{C}_1 \gamma_t}{t}},
$$
where  $\hat{t}$ and $\tilde{C}_1$ are given in \eqref{eq:def_t_hat_maxloss} and Theorem \ref{thm:maxloss_1}, respectively. 
\end{theorem}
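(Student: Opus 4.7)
The plan is to combine the cumulative bound from Theorem \ref{thm:maxloss_1} with an averaging/minimum argument to control the max-value loss at the selected index $\hat{t}$, and then pay an extra free step that passes from the max-value loss $\tilde{r}_{\hat{t}}$ down to the average loss $r_{\hat{t}}$.

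First I would invoke Theorem \ref{thm:maxloss_1} directly to get $\mathbb{E}[\tilde{R}_t] = \mathbb{E}\bigl[\sum_{i=1}^{t} \tilde{r}_i\bigr] \leq \sqrt{\tilde{C}_1 t \gamma_t}$. Next I would convert this cumulative bound into a bound at the single index $\hat{t}$. Recall $\hat{t} = \argmin_{1 \leq i \leq t} \mathbb{E}_t[\tilde{r}_i]$, where $\mathbb{E}_t[\cdot] = \mathbb{E}[\cdot \mid D_{t-1}]$. In particular, $\hat{t}$ is $D_{t-1}$-measurable, so $\mathbb{E}_t[\tilde{r}_{\hat{t}}]$ is well-defined and equals $\min_{1 \leq i \leq t} \mathbb{E}_t[\tilde{r}_i]$. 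The minimum of $t$ non-negative quantities is at most their average, hence
$$
\mathbb{E}_t[\tilde{r}_{\hat{t}}] \;\leq\; \frac{1}{t}\sum_{i=1}^{t} \mathbb{E}_t[\tilde{r}_i] \;=\; \frac{1}{t}\,\mathbb{E}_t[\tilde{R}_t].
$$
Taking the outer expectation, the tower property yields $\mathbb{E}[\tilde{r}_{\hat{t}}] \leq \tfrac{1}{t}\,\mathbb{E}[\tilde{R}_t] \leq \sqrt{\tilde{C}_1 \gamma_t / t}$.

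The final and conceptually essential step — the one the earlier attempt skipped — is the passage from $\tilde{r}_{\hat{t}}$ to $r_{\hat{t}}$. Inspecting the case analysis defining $l_t(\bm{x})$ shows that $l_t(\bm{x}) \geq 0$ for every $\bm{x} \in \mathcal{X}$ and every $t$: the two nonzero branches are $f(\bm{x}) - \theta$ on $H^\ast$ (where $f(\bm{x}) \geq \theta$) and $\theta - f(\bm{x})$ on $L^\ast$ (where $f(\bm{x}) < \theta$). For a non-negative function on a finite set, the average is at most the maximum, so $r_s \leq \tilde{r}_s$ path-wise for every $s$, and in particular $r_{\hat{t}} \leq \tilde{r}_{\hat{t}}$. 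Taking expectations gives $\mathbb{E}[r_{\hat{t}}] \leq \mathbb{E}[\tilde{r}_{\hat{t}}] \leq \sqrt{\tilde{C}_1 \gamma_t / t}$, which is exactly the claim.

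The main obstacle I foresee is not any deep estimate — Theorem \ref{thm:maxloss_1} does the heavy lifting — but rather being careful about measurability and the use of the conditional expectation: one must check that $\hat{t}$ is measurable with respect to $D_{t-1}$ so that pulling $\hat{t}$ inside $\mathbb{E}_t[\cdot]$ is legitimate, and that the ``min $\leq$ average'' step is applied to the conditional expectations $\mathbb{E}_t[\tilde{r}_i]$ (which are $D_{t-1}$-measurable numbers) rather than to the random variables $\tilde{r}_i$ themselves. Once this bookkeeping is handled, the two-line reduction $r_{\hat{t}} \leq \tilde{r}_{\hat{t}}$ closes the gap flagged in the feedback.
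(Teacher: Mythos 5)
Your proof is correct and follows essentially the same route as the paper, which simply invokes the argument of Lemma \ref{lem:thm2_lem1} (min of the conditional expectations $\leq$ their average, then the tower property combined with Theorem \ref{thm:maxloss_1}). The one thing you add is the pathwise step $r_{\hat{t}} \leq \tilde{r}_{\hat{t}}$ to match the literal statement, which writes $r_{\hat{t}}$ rather than $\tilde{r}_{\hat{t}}$ --- almost certainly a typo in the paper (compare Theorem \ref{thm:maxloss2_infinite}, which states the bound for $\tilde{r}_{\hat{t}}$) --- so your extra reduction is harmless and in fact makes the as-written claim airtight.
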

Comparing Theorems \ref{thm:1} and \ref{thm:maxloss_1}, when considering the max-value loss, $\beta_t$ should be $2 \log (|\mathcal{X}| )$ larger than in the case of $r_t$, and the constant that appears in the upper bound of the expected value of the cumulative loss has the relationship $ \tilde{C}_1 = (1 + \log (|\mathcal{X}|) ) C_1$. 
Note that while the upper bound for $r_t$ does not depend on  $\mathcal{X}$, it depends on the logarithm of the number of elements in $\mathcal{X}$ for the max-value loss.
Also, when comparing Theorem \ref{thm:2} and \ref{thm:maxloss_2}, it is not necessary to consider $\hat{t}$ in $r_t$, whereas it is necessary to consider $\hat{t}$ in the max-value loss.
For the max-value loss, it is difficult to analytically derive $\mathbb{E}_t [ \tilde{r}_i ]$, and hence, it is also difficult to precisely calculate $\hat{t}$. 
Nevertheless, because the posterior distribution of $f$ given $D_{t-1}$ is again a GP, we can generate $M$ sample paths from the GP posterior distribution and calculate the realization $\tilde{r}^{(j)}_i $ of $\tilde{r}_i$ from each sample path $f^{(j)}$, and calculate the estimate $\check{t} $ of $\hat{t}$ as 
$$
\check{t} = \argmin_{ 1 \leq i \leq t}  \frac{1}{M} \sum_{j=1}^M \tilde{r}^{(j)}_i.
$$

\subsection{Proposed Method for Max-value Loss when $\mathcal{X}$ is Infinite} 
In this section, we assume that the input space $\mathcal{X} \subset \mathbb{R}^d $ is a compact set and satisfies $\mathcal{X} \subset [0,r]^d$, where $r >0$. 
Furthermore, we assume the following additional assumption for $f$:
\begin{assumption}\label{assumption:1}
Let $f$ be differentiable with probability 1.  
Assuming positive constants $a,b$ exist, such that 
$$
\mathbb{P} \left (   \sup_{ {\bm x} \in \mathcal{X}  } \left |  \frac{ \partial f  }{ \partial x_j   } \right | > L \right )  \leq a \exp \left (-\left (\frac{L}{b} \right )^2  \right ), \quad j \in [d], 
$$
where $x_j$ is the $j$-th element of ${\bm x}$ and $[d] \equiv \{ 1, \ldots , d \}$. 
\end{assumption}
Next, we provide a LSE method based on the discretization of the input space. 

\subsubsection{Level Set Estimation for Max-value Loss when $\mathcal{X}$ is Infinite}
For each $t \geq 1$, let   $\mathcal{X}_t $ be a finite subset of $\mathcal{X}$. 
Also, for any ${\bm x} \in \mathcal{X}$, let $[{\bm x} ]_t $ be the element of $\mathcal{X}_t$ that has the shortest $L_1$ distance from ${\bm x} $\footnote{If there are multiple ${\bm x} \in \mathcal{X}_t$ with the shortest $L_1$ distance, determine the one that is unique.  %
 For example, we first choose the option with the smallest first component. If a unique determination is not possible, we then select the option with the smallest second component. This process is repeated up to the $d$-th component to achieve a unique determination.}
Then, we define $H_{t} $ and $L_{t}$ as 
\begin{equation}
H_{t} = \{   {\bm x} \in \mathcal{X} \mid \mu_{t-1} ([{\bm x}]_t ) \geq \theta \} , \ 
L_{t} = \{   {\bm x} \in \mathcal{X} \mid \mu_{t-1} ([{\bm x}]_t ) < \theta \} . \label{eq:LSE_infinite}
\end{equation}

\subsubsection{Acquisition Function for Max-value Loss when $\mathcal{X}$ is Infinite}
We define a randomized straddle AF based on $\mathcal{X}_t$:
\begin{definition}\label{def:AF_max_infinite}
For each $t \geq 1$, let $\xi_t $ be a random sample from the chi-squared distribution with two degrees of freedom, where 
 $\xi_1, \ldots, \xi_t, \varepsilon_1, \ldots , \xi_t , f$ are mutually independent. 
Define $\beta_t = 2 \log (|\mathcal{X}_t | ) + \xi_t$. 
Then, the randomized straddle AF for the max-value loss when $\mathcal{X}$ is infinite, $\check{a}_{t-1} ({\bm x} )$, is defined as: 
$$
\check{a}_{t-1} ({\bm x} ) =   \max \{  \min  \{   {\rm ucb}_{t-1} ({\bm x} )  - \theta , \theta - {\rm lcb}_{t-1} ({\bm x} )  \}  ,  0 \}.
$$ 
\end{definition}
The next point to be evaluated is selected by ${\bm x}_t = \argmax_{ {\bm x} \in \mathcal{X} }  \check{a}_{t-1} ({\bm x} )$. 
Finally, we give the pseudocode of the proposed algorithm in Algorithm \ref{alg:3}.

\begin{algorithm}[t]
    \caption{Randomized Straddle Algorithms for Max-value Loss in the Infinite Setting}
    \label{alg:3}
    \begin{algorithmic}
        \REQUIRE GP prior $\mathcal{GP}(0, k)$, threshold $\theta \in \mathbb{R}$, discretized sets $\mathcal{X}_1, \ldots , \mathcal{X}_T $
        \FOR { $ t= 1,2,\ldots , T$}
            \STATE Compute $\mu_{t-1} ({\bm x} ) $ and $\sigma^2_{t-1} ({\bm x} )$ for each ${\bm x} \in \mathcal{X}$ by \eqref{eq:post_mean_var}
		\STATE Estimate $H_t$ and $L_t$ by \eqref{eq:LSE_infinite}
            \STATE Generate $\xi_t$ from the chi-squared distribution with two degrees of freedom
		\STATE Compute $\beta_t = \xi_t + 2 \log (|\mathcal{X}_t|)$, ${\rm ucb}_{t-1} ({\bm x} )$, ${\rm lcb}_{t-1} ({\bm x} )$ and $\tilde{a}_{t-1} ({\bm x} )$
            \STATE Select the next evaluation point $\bm{x}_{t}$ by ${\bm x}_t = \argmax_{{\bm x} \in \mathcal{X}} \check{a}_{t-1} ({\bm x} )$  
            \STATE Observe $y_{t} = f(\bm{x}_{t}) + \varepsilon_{t}$  at the point $\bm{x}_{t}$ 
            \STATE Update GP by adding the observed data
        \ENDFOR
        \ENSURE Return $H_{\hat{T}}$ and $L_{\hat{T}}$ as the estimated sets, where $\hat{T} = \argmin_{1 \leq i \leq T} \mathbb{E}_T [ \tilde{r}_i]$
    \end{algorithmic}
\end{algorithm}

\subsubsection{Theoretical Analysis for Max-value Loss when $\mathcal{X}$ is Infinite}
Under Algorithm \ref{alg:3}, the following theorem holds. 
\begin{theorem}\label{thm:maxloss1_infinite}
Let $\mathcal{X} \subset [0,r]^d$ be a compact set with $r >0$. 
Assume that 
$f$ is a sample path from $ \mathcal{G} \mathcal{P} (0, k )$, where $k (\cdot, \cdot)$ 
is a positive-definite kernel satisfying $k ({\bm x} ,{\bm x} ) \leq 1$ for any ${\bm x} \in \mathcal{X}$. 
Also assume that Assumption \ref{assumption:1} holds. 
Moreover, for each $t \geq 1$, let $\tau_t = \lceil bdrt^2 (\sqrt{\log (ad)} +\sqrt{\pi}/2) \rceil $, and let $\mathcal{X}_t$ be a finite subset of $\mathcal{X}$ satisfying $|\mathcal{X}_t | = \tau^d_t$ and 
$$
\|  {\bm x}  - [{\bm x}]_t \|_1 \leq \frac{dr }{\tau_t} , \quad {\bm x} \in \mathcal{X}  .
$$
Suppose that $\xi_t $ is a random sample from the chi-squared distribution with two degrees of freedom, where   $\xi_1,\ldots, \xi_t, \varepsilon_1,\ldots, \varepsilon_t,f$ are mutually independent. 
Define 
$\beta_t =  2d \log (\lceil bdrt^2 (\sqrt{\log (ad)} +\sqrt{\pi}/2) \rceil  ) + \xi_t$. 
Then, the following holds for $\tilde{R}_t = \sum_{i=1}^t \tilde{r}_i$:
$$
\mathbb{E} [  \tilde{R}_t ] \leq \frac{\pi^2}{6}+ \sqrt{  C_1  t  \gamma_t  (2+s_t)},
$$
where $\check{C}_1 =   2/ \log(1+ \sigma^{-2}_{{\rm noise}})$ and $s_t = 2d \log (\lceil bdrt^2 (\sqrt{\log (ad)} +\sqrt{\pi}/2) \rceil  )$, 
and the expectation is taken with all randomness including $f,\varepsilon_t$ and $\beta_t$. 
\end{theorem}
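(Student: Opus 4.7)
The proof mirrors Theorem \ref{thm:maxloss_1} but with an extra layer that handles the continuous search space via the discretization $\mathcal{X}_t$. The strategy has three stages: (i) bound the Lipschitz-type discretization error under Assumption \ref{assumption:1} and absorb it into the $\pi^2/6$ term; (ii) reduce the max loss on $\mathcal{X}$ to the max loss on $\mathcal{X}_t$; (iii) run the randomized straddle argument on $\mathcal{X}_t$ with the shifted $\beta_t = 2d\log \tau_t + \xi_t = s_t + \xi_t$, then apply Cauchy--Schwarz and the information-gain bound.

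\textbf{Step 1 (discretization).} By Assumption \ref{assumption:1} and a union bound over $j \in [d]$, the global Lipschitz constant $L := \max_{j \in [d]} \sup_{\bm{x} \in \mathcal{X}} |\partial f / \partial x_j|$ satisfies $\mathbb{P}(L > \ell) \leq a d \exp(-(\ell/b)^2)$. Standard tail integration yields $\mathbb{E}[L] \leq b(\sqrt{\log(ad)} + \sqrt{\pi}/2)$. Combined with $\|{\bm x} - [{\bm x}]_t\|_1 \leq dr/\tau_t$ and the prescribed $\tau_t$, this gives $\mathbb{E}[L \cdot dr/\tau_t] \leq 1/t^2$, so the cumulative discretization error contributes at most $\sum_{t \geq 1} 1/t^2 \leq \pi^2/6$.

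\textbf{Step 2 (reduction to $\mathcal{X}_t$).} For any $\bm{x}$ misclassified under $(H_t, L_t)$, say $\bm{x} \in H^*$ with $\mu_{t-1}([\bm{x}]_t) < \theta \leq f(\bm{x})$, writing $f(\bm{x}) - \theta = [f(\bm{x}) - f([\bm{x}]_t)] + [f([\bm{x}]_t) - \theta]$ shows that $l_t(\bm{x}) \leq L \cdot dr/\tau_t + |f([\bm{x}]_t) - \theta|\mathbb{1}_{\{[\bm{x}]_t \text{ misclassified in } \mathcal{X}_t\}}$, because if $f([\bm{x}]_t) < \theta$ the second summand is non-positive. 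The $\bm{x} \in L^*$ case is symmetric. Taking the max over $\bm{x}$ gives
\begin{equation*}
 \tilde{r}_t \;\leq\; L \cdot dr/\tau_t \;+\; \max_{\bm{y} \in \mathcal{X}_t} |f(\bm{y}) - \theta| \, \mathbb{1}_{\bm{y} \text{ misclassified}}.
\end{equation*}

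\textbf{Step 3 (randomized straddle on $\mathcal{X}_t$ and aggregation).} For any misclassified $\bm{y} \in \mathcal{X}_t$, $\theta$ lies between $\mu_{t-1}(\bm{y})$ and $f(\bm{y})$, so $|f(\bm{y}) - \theta| + |\mu_{t-1}(\bm{y}) - \theta| = |f(\bm{y}) - \mu_{t-1}(\bm{y})|$. A case analysis on whether $\theta$ is inside the credible interval yields the key inequality $|f(\bm{y}) - \theta| \mathbb{1}_{\text{mis}} \leq \check{a}_{t-1}(\bm{y}) + (|f(\bm{y}) - \mu_{t-1}(\bm{y})| - \beta_t^{1/2}\sigma_{t-1}(\bm{y}))^+$. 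Maximizing over $\mathcal{X}_t \subset \mathcal{X}$ and using $\check{a}_{t-1}(\bm{y}) \leq \check{a}_{t-1}(\bm{x}_t) \leq \beta_t^{1/2}\sigma_{t-1}(\bm{x}_t)$ produces a per-iteration bound of the shape $\tilde{r}_t \leq L \cdot dr/\tau_t + \beta_t^{1/2}\sigma_{t-1}(\bm{x}_t) + (\text{residual})$. Summing over $i \leq t$, taking expectation, applying Cauchy--Schwarz $\sum_i \beta_i^{1/2}\sigma_{i-1}(\bm{x}_i) \leq \sqrt{t \sum_i \beta_i} \cdot \sqrt{\sum_i \sigma_{i-1}^2(\bm{x}_i)}$ followed by $\mathbb{E}[\sqrt{A}\sqrt{B}] \leq \sqrt{\mathbb{E}[A]\mathbb{E}[B]}$, and invoking $\mathbb{E}[\beta_i] = s_i + 2 \leq s_t + 2$ together with the standard information-gain inequality $\sum_i \sigma_{i-1}^2(\bm{x}_i) \leq 2\gamma_t / \log(1+\sigma^{-2}_{\rm noise})$, delivers the main term $\sqrt{\check{C}_1 \, t \, \gamma_t \, (2 + s_t)}$.

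\textbf{Main obstacle.} The delicate step is controlling the residual $(|f(\bm{y}) - \mu_{t-1}(\bm{y})| - \beta_t^{1/2}\sigma_{t-1}(\bm{y}))^+$ inside the maximum over $\mathcal{X}_t$. This is where the chi-squared randomization is crucial: the shift $s_t = 2\log|\mathcal{X}_t|$ and the identity $\mathbb{E}[\exp(-\beta_t/2)] = (2|\mathcal{X}_t|)^{-1}$ are tailored so that the union-bound cost of $|\mathcal{X}_t|$ correlated Gaussian-like tails is exactly compensated, and the residual can be absorbed either into the discretization budget or into the leading $\sqrt{\cdot}$ term by reusing the very same Cauchy--Schwarz / information-gain pipeline. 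Getting the absolute constants consistent with those stated for $\check{C}_1$, and verifying that the dependence of $\bm{x}_i$ on $\beta_i$ does not harm the expectation splitting, is the main bookkeeping challenge.
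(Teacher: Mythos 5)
Your Steps 1 and 2 coincide with the paper's argument: the same case analysis gives $l_t({\bm x}) \leq l_t([{\bm x}]_t) + |f({\bm x})-f([{\bm x}]_t)|$, the same Lipschitz bound via Assumption \ref{assumption:1} and Lemma H.1 of \cite{pmlr-v202-takeno23a} gives $\mathbb{E}[L_{\max}\, dr/\tau_t] \leq 1/t^2$, and the cumulative discretization error is absorbed into $\pi^2/6$. The divergence, and the gap, is in Step 3. The paper does not use a pathwise decomposition with a residual term at all. Instead it reuses the quantile-coupling device from Theorems \ref{thm:1} and \ref{thm:maxloss_1}: for each fixed $\delta$, the high-probability confidence bound with $\beta_\delta = 2\log(|\mathcal{X}_t|/\delta)$ gives $F_{t-1}\bigl(\max_{\tilde{\bm x}\in\mathcal{X}_t}\check{a}_{t-1,\delta}(\tilde{\bm x})\bigr)\geq 1-\delta$ for the conditional CDF $F_{t-1}$ of $\check{r}_t$ given $D_{t-1}$; taking the generalized inverse and letting $\delta\sim U(0,1)$ (so that $2\log(1/\delta)\sim\chi^2_2$ and $\beta_\delta \stackrel{d}{=}\beta_t$) yields $\mathbb{E}_t[\check{r}_t]\leq \mathbb{E}_{\beta_t}[\check{a}_{t-1}({\bm x}_t)]\leq \mathbb{E}_{\beta_t}[\beta_t^{1/2}\sigma_{t-1}({\bm x}_t)]$ directly, with no leftover term. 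The rest is the Cauchy--Schwarz/H\"older/information-gain pipeline you describe.

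Your route instead produces the residual $\max_{{\bm y}\in\mathcal{X}_t}\bigl(|f({\bm y})-\mu_{t-1}({\bm y})|-\beta_t^{1/2}\sigma_{t-1}({\bm y})\bigr)^{+}$, and you explicitly defer its control to the identity $\mathbb{E}[e^{-\beta_t/2}]=(2|\mathcal{X}_t|)^{-1}$. But carrying that out via the union bound gives, per iteration, $\sum_{{\bm y}\in\mathcal{X}_t}\tfrac{2\sigma_{t-1}({\bm y})}{\sqrt{2\pi}}\,\mathbb{E}[e^{-\beta_t/2}]\leq \max_{{\bm y}}\sigma_{t-1}({\bm y})/\sqrt{2\pi}$, which is $O(1)$ and does not decay in $t$ (posterior variances at unvisited points need not shrink). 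Summed over $t$ iterations this adds an $O(t)$ term to $\mathbb{E}[\tilde{R}_t]$ and destroys the claimed $\sqrt{t\gamma_t(2+s_t)}$ rate; it cannot be ``absorbed into the discretization budget'' nor into the leading term, since the maximizer of the residual need not be ${\bm x}_t$ and so cannot be fed into the information-gain sum. So the obstacle you flag is real and your sketch does not overcome it; the fix is to abandon the pathwise residual decomposition and use the paper's inverse-CDF coupling, which converts the family of $(1-\delta)$-confidence bounds directly into a bound on the conditional expectation.
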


From Theorem \ref{thm:maxloss1_infinite}, the following holds. 
\begin{theorem}\label{thm:maxloss2_infinite}
Under the assumptions of Theorem \ref{thm:maxloss1_infinite}, define
$$
\hat{t} = \argmin_{1 \leq i \leq t}  \mathbb{E}_t [\tilde{r}_i ]. 
$$
Then, the following holds:
$$
\mathbb{E}[ \tilde{r}_{\hat{t}}] \leq \frac{\pi^2}{6t} +  \sqrt{\frac{\check{C}_1  \gamma_t (2+s_t )  }{t}},
$$
where $\check{C}_1$ and $s_t$ are given in Theorem \ref{thm:maxloss1_infinite}. 
\end{theorem}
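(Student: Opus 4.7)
The plan is to derive Theorem \ref{thm:maxloss2_infinite} as a direct averaging consequence of Theorem \ref{thm:maxloss1_infinite}, using the minimum-versus-mean inequality exactly as Theorem \ref{thm:2} follows from Theorem \ref{thm:1}, but being careful about the conditional expectation $\mathbb{E}_t[\cdot]$ and the random index $\hat{t}$.

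First, I would observe that since $H_i$ and $L_i$ depend only on $D_{i-1}$ (through $\mu_{i-1}$), the random variable $\mathbb{E}_t[\tilde{r}_i]$ is a function of $D_{t-1}$ only, and hence the minimizer $\hat{t} = \argmin_{1 \leq i \leq t}\mathbb{E}_t[\tilde{r}_i]$ is $D_{t-1}$-measurable. Therefore, conditioning on $D_{t-1}$ and using that the minimum of a finite collection of numbers is at most their average, I get
\begin{equation*}
\mathbb{E}_t[\tilde{r}_{\hat{t}}] \;=\; \min_{1 \leq i \leq t} \mathbb{E}_t[\tilde{r}_i] \;\leq\; \frac{1}{t}\sum_{i=1}^{t} \mathbb{E}_t[\tilde{r}_i].
\end{equation*}

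Next, I would take the total expectation on both sides and apply the tower property to obtain
\begin{equation*}
\mathbb{E}[\tilde{r}_{\hat{t}}] \;\leq\; \frac{1}{t}\sum_{i=1}^{t} \mathbb{E}[\tilde{r}_i] \;=\; \frac{1}{t}\,\mathbb{E}[\tilde{R}_t].
\end{equation*}
Then I would plug in the bound from Theorem \ref{thm:maxloss1_infinite}, namely $\mathbb{E}[\tilde{R}_t] \leq \pi^2/6 + \sqrt{\check{C}_1\, t\, \gamma_t\, (2+s_t)}$, and divide by $t$, which gives the stated bound $\pi^2/(6t) + \sqrt{\check{C}_1 \gamma_t (2+s_t)/t}$ after recognizing that $\sqrt{\check{C}_1 t \gamma_t (2+s_t)}/t = \sqrt{\check{C}_1 \gamma_t (2+s_t)/t}$.

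The main obstacle, if any, is purely a measurability/justification issue rather than a computational one: one must confirm that $\hat{t}$ is indeed $D_{t-1}$-measurable so that it can be pulled inside $\mathbb{E}_t[\cdot]$, and that $\tilde{r}_{\hat{t}}$ is well defined as a random variable (which follows by writing $\tilde{r}_{\hat{t}} = \sum_{i=1}^{t} \tilde{r}_i \, \mathbb{1}\{\hat{t}=i\}$). Beyond this, the argument is a one-line averaging step followed by a direct invocation of Theorem \ref{thm:maxloss1_infinite}, so no new analytic machinery is required.
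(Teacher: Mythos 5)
Your proposal is correct and follows essentially the same route as the paper, which proves this theorem by the same averaging argument as its Lemma B.1 (minimum of $\mathbb{E}_t[\tilde{r}_i]$ over $i$ is at most the average, then take total expectation and invoke Theorem \ref{thm:maxloss1_infinite}). The only difference is that you spell out the $D_{t-1}$-measurability of $\hat{t}$ and the well-definedness of $\tilde{r}_{\hat{t}}$, which the paper leaves implicit.
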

\section{Proofs}
\subsection{Proof of Theorem \ref{thm:1}}
\begin{proof}
Let $\delta \in (0,1)$.  
For any $t \geq 1$, $D_{t-1}$ and ${\bm x} \in \mathcal{X}$, from the proof of Lemma 5.1 in \cite{Srinivas:2010:GPO:3104322.3104451}, the following holds with probability at least $1-\delta$:
\begin{equation}
{\rm lcb}_{t-1,\delta} ({\bm x}) \equiv \mu_{t-1} ({\bm x} )- \beta^{1/2}_\delta \sigma_{t-1} ({\bm x}) \leq f ({\bm x} ) \leq 
\mu_{t-1} ({\bm x} ) + \beta^{1/2}_\delta \sigma_{t-1} ({\bm x}) \equiv {\rm ucb}_{t-1,\delta} ({\bm x} ), \label{eq:lcbucb_delta}
\end{equation}
where $\beta_\delta =  2 \log ( 1/ \delta )$. 
Here, we consider the case where ${\bm x} \in H_t $. 
If ${\bm x} \in H^\ast $, we have $l_t ({\bm x} ) =0$. 
In contrast, if ${\bm x} \in L^\ast $,  noting that $  {\rm lcb}_{t-1,\delta} ({\bm x}) \leq f({\bm x} )$  by  \eqref{eq:lcbucb_delta} we get 
$$
l_t ({\bm x} ) = \theta - f({\bm x} ) \leq \theta -  {\rm lcb}_{t-1,\delta} ({\bm x}). 
$$
Moreover, the inequality $\mu_{t-1} ({\bm x} ) \geq \theta$ holds because  $ {\bm x} \in H_t $. 
Hence, from the definition of ${\rm lcb}_{t-1,\delta} ({\bm x}) $ and ${\rm ucb}_{t-1,\delta} ({\bm x}) $, we obtain 
$$
\theta - {\rm lcb}_{t-1,\delta} ({\bm x})  \leq {\rm ucb}_{t-1,\delta} ({\bm x}) -\theta. 
$$
Therefore, we get 
\begin{align*}
l_t ({\bm x} )  \leq \theta -  {\rm lcb}_{t-1,\delta} ({\bm x}) &= \min \{  {\rm ucb}_{t-1,\delta} ({\bm x}) -\theta,  \theta - {\rm lcb}_{t-1,\delta} ({\bm x})    \} \\
&\leq \max \{  \min \{  {\rm ucb}_{t-1,\delta} ({\bm x}) -\theta,  \theta - {\rm lcb}_{t-1,\delta} ({\bm x})    \}   ,0 \}
 \equiv a_{t-1,\delta}  ({\bm x} ). 
\end{align*}
Similarly, we consider the case where ${\bm x} \in L_t $. 
If ${\bm x} \in L^\ast $, we obtain $l_t ({\bm x} ) =0$. 
Thus, because $a_{t-1,\delta}  ({\bm x} ) \geq 0$, we get $l_t ({\bm x} ) \leq a_{t-1,\delta}  ({\bm x} )$. 
Moreover, if ${\bm x} \in H^\ast $, noting that $   f({\bm x} ) \leq {\rm ucb}_{t-1,\delta} ({\bm x}) $ by \eqref{eq:lcbucb_delta}, we obtain 
$$
l_t ({\bm x} ) = f({\bm x} ) - \theta  \leq \  {\rm ucb}_{t-1,\delta} ({\bm x}) -\theta. 
$$
Here, the inequality $\mu_{t-1} ({\bm x} ) < \theta$ holds because 
$ {\bm x} \in L_t $.
 Therefore, from the definition of ${\rm lcb}_{t-1,\delta} ({\bm x}) $ and ${\rm ucb}_{t-1,\delta} ({\bm x}) $, we obtain 
$$
 {\rm ucb}_{t-1,\delta} ({\bm x}) -\theta \leq \theta - {\rm lcb}_{t-1,\delta} ({\bm x})  .
$$
Thus, the following inequality holds:
$$
l_t ({\bm x} )  \leq   {\rm ucb}_{t-1,\delta} ({\bm x})  - \theta = \min \{  {\rm ucb}_{t-1,\delta} ({\bm x}) -\theta,  \theta - {\rm lcb}_{t-1,\delta} ({\bm x})    \} \leq a_{t-1,\delta}  ({\bm x} ).
$$
Therefore, for all cases, the inequality $l_t ({\bm x} )  \leq a_{t-1,\delta}  ({\bm x} ) $ holds. 
This indicates that the following inequality holds with probability at least $1-\delta$:
\begin{equation}
 l_t ({\bm x} )     \leq a_{t-1,\delta}  ({\bm x} ) \leq \max_{ \tilde{\bm x} \in \mathcal{X}}  a_{t-1,\delta}  (\tilde{\bm x} ).  \label{eq:rt_bound}
\end{equation}

Next, we consider the conditional distribution of $l_t ({\bm x} )$ given 
$D_{t-1}$. 
Note that this distribution does not depend on $\beta_{\delta}$. 
Let $F_{t-1} (\cdot )$ be a distribution function of $l_t ({\bm x})$ given 
$D_{t-1}$.
Then, from \eqref{eq:rt_bound} we have 
$$
F_{t-1}  \left (
\max_{ \tilde{\bm x} \in \mathcal{X}}  a_{t-1,\delta}  (\tilde{\bm x} )
\right )  \geq 1-\delta. 
$$
Hence, by considering the generalized inverse function of $F_{t-1} (\cdot )$ for both sides, the following inequality holds:   
$$
F^{-1}_{t-1}   (1-\delta )  \leq \max_{ \tilde{\bm x} \in \mathcal{X}}  a_{t-1,\delta}  (\tilde{\bm x} ).
$$ 
Here, if $\delta $ follows the uniform distribution on the interval $(0,1)$, then $1-\delta $ follows the same distribution. 
In this case, the distribution  of 
$F^{-1}_{t-1}   (1-\delta )  $ is equal to the  distribution of $l_t ({\bm x} )$ given $D_{t-1}$. 
This implies that 
$$
\mathbb{E}_t  [l_t ({\bm x} )]  \leq \mathbb{E}_\delta \left [
\max_{ {\bm x} \in \mathcal{X}}  a_{t-1,\delta}  ({\bm x} )
 \right ],
$$
where $\mathbb{E}_{\delta} [ \cdot ] $ means the expectation with respect to 
$\delta$. 
Furthermore, because $2 \log ( 1/\delta ) $ and $\beta_t$ follow the chi-squared distribution with two degrees of freedom, the following holds:  
$$
\mathbb{E}_t  [l_t ({\bm x} )]  \leq \mathbb{E}_{\beta_t} \left [
a_{t-1}   ({\bm x}_t )
 \right ].
$$
Thus, if  $\mathcal{X}$ is finite, from the definition of $r_t$ we obtain 
$$
\mathbb{E}_t [r_t] = \mathbb{E}_t \left [ \frac{1}{|\mathcal{X}|}  \sum_{ {\bm x} \in \mathcal{X} }   l_t ({\bm x} )   \right ] 
=
\frac{1}{|\mathcal{X}|}  \sum_{ {\bm x} \in \mathcal{X} }   \mathbb{E}_t  [  l_t ({\bm x} )    ] 
\leq 
\frac{1}{|\mathcal{X}|}  \sum_{ {\bm x} \in \mathcal{X} }   \mathbb{E}_{\beta_t} \left [
a_{t-1}   ({\bm x}_t )
 \right ] = \mathbb{E}_{\beta_t} \left [
a_{t-1}   ({\bm x}_t )
 \right ].
$$
Similarly, if $\mathcal{X}$ is infinite, from the definition of  $r_t$ and non-negativity of $l_t ({\bm x} )$, using Fubini's theorem we get 
$$
\mathbb{E}_t [r_t] = \mathbb{E}_t \left [ \frac{1}{{\rm Vol} (\mathcal{X})}  \int_{  \mathcal{X} }   l_t ({\bm x} ) {\rm d} {\bm x}   \right ] 
=
 \frac{1}{{\rm Vol} (\mathcal{X})}   \int_{  \mathcal{X} }   \mathbb{E}_t  [  l_t ({\bm x} )    ] {\rm d} {\bm x} 
\leq 
 \frac{1}{{\rm Vol} (\mathcal{X})}   \int_{  \mathcal{X} }   \mathbb{E}_{\beta_t} \left [
a_{t-1}   ({\bm x}_t )
 \right ] {\rm d} {\bm x} = \mathbb{E}_{\beta_t} \left [
a_{t-1}   ({\bm x}_t )
 \right ].
$$
Therefore, the inequality $\mathbb{E}_t [r_t]  \leq \mathbb{E}_{\beta_t} \left [
a_{t-1}   ({\bm x}_t )
 \right ]$ holds for both cases. 
Moreover, from the definition of $a_{t-1} ({\bm x} )$, the following inequality holds: 
$$
a_{t-1} ({\bm x}_t ) \leq  \beta^{1/2}_t \sigma_{t-1}  ({\bm x}_t )
$$
Hence, we get the following inequality:
\begin{align*}
\mathbb{E} [R_t ] =
\mathbb{E} \left [ \sum_{i=1}^t   r_i \right ] & \leq 
\mathbb{E}  \left [ \sum_{i=1}^t    
 \beta^{1/2}_i  \sigma_{i-1}  ({\bm x}_i )
 \right ]\\
\xrightarrow{\text{Cauchy-Schwarz inequality}}&\leq 
\mathbb{E}  \left [ \left ( \sum_{i=1}^t    
 \beta_i  \right )^{1/2}    \left ( \sum_{i=1}^t  \sigma^2_{i-1}  ({\bm x}_i ) \right )^{1/2}
 \right ] \\
\xrightarrow{\text{H\"{o}lder's inequality}}& \leq 
\sqrt{\mathbb{E}  \left [  \sum_{i=1}^t    
 \beta_i \right ] } 
\sqrt{\mathbb{E}  \left [   \sum_{i=1}^t  \sigma^2_{i-1}  ({\bm x}_i )  \right ] } \\
\xrightarrow{ \mathbb{E}[\beta_i] =2    }&= \sqrt{ 2t} \sqrt{\mathbb{E}  \left [  \sum_{i=1}^t \sigma^2_{i-1}  ({\bm x}_i )  \right ] } \\
&\leq \sqrt{ 2t} \sqrt{\mathbb{E}  \left [ \frac{2}{ \log (1+ \sigma^{-2}_{{\rm noise}})  } \gamma_{t}   \right ] } \\
& = \sqrt{ C_1 t \gamma_t}, 
\end{align*}
where the last inequality is derived by the proof of Lemma 5.4 in \cite{Srinivas:2010:GPO:3104322.3104451}. 
\end{proof}

\subsection{Proof of Theorem \ref{thm:2}}
We first give three lemmas  to prove Theorem \ref{thm:2}. Theorem \ref{thm:2} is proved by Lemma \ref{lem:thm2_lem1} and \ref{lem:thm2_lem3}. 

\begin{lemma}\label{lem:thm2_lem1}
Under the assumptions of Theorem \ref{thm:1}, let 
$$
\hat{t} = \argmin_{1 \leq i \leq t}  \mathbb{E}_t [r_i ].
$$
Then, the following inequality holds: 
$$
\mathbb{E} [r_{ \hat{t}} ] \leq \sqrt{\frac{C_1 \gamma_t}{t}} .  
$$
\end{lemma}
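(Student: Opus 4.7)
The plan is to leverage Theorem \ref{thm:1}, which controls $\mathbb{E}[R_t] = \mathbb{E}[\sum_{i=1}^t r_i]$, and combine it with the observation that $\hat{t}$ minimizes a conditional expectation that, by the tower property, averages to the unconditional expectation of each $r_i$. The key structural fact is that the estimators $H_i, L_i$ for every $i \leq t$ are functions of $D_{t-1}$, hence are known once we condition on $\mathcal{F}_t$ (the $\sigma$-algebra generated by $D_{t-1}$). Consequently $\mathbb{E}_t[r_i]$ is $\mathcal{F}_t$-measurable for every $i \leq t$, and so is $\hat{t}$.

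The steps, in order, are as follows. First, I would argue that because $\hat{t}$ is $\mathcal{F}_t$-measurable, it may be treated as a constant under $\mathbb{E}_t[\cdot]$, giving
\begin{equation*}
\mathbb{E}_t[r_{\hat{t}}] \;=\; \min_{1 \leq i \leq t} \mathbb{E}_t[r_i].
\end{equation*}
Second, I would replace the minimum by the average to get
\begin{equation*}
\min_{1 \leq i \leq t} \mathbb{E}_t[r_i] \;\leq\; \frac{1}{t} \sum_{i=1}^{t} \mathbb{E}_t[r_i].
\end{equation*}
Third, I would take the outer expectation and invoke the tower property, $\mathbb{E}[\mathbb{E}_t[r_i]] = \mathbb{E}[r_i]$, to obtain
\begin{equation*}
\mathbb{E}[r_{\hat{t}}] \;\leq\; \frac{1}{t} \sum_{i=1}^{t} \mathbb{E}[r_i] \;=\; \frac{\mathbb{E}[R_t]}{t}.
\end{equation*}
Finally, I would apply Theorem \ref{thm:1}, $\mathbb{E}[R_t] \leq \sqrt{C_1 t \gamma_t}$, to conclude
\begin{equation*}
\mathbb{E}[r_{\hat{t}}] \;\leq\; \frac{\sqrt{C_1 t \gamma_t}}{t} \;=\; \sqrt{\frac{C_1 \gamma_t}{t}}.
\end{equation*}

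The only subtle point, and the one I would be most careful about, is the measurability step that justifies $\mathbb{E}_t[r_{\hat{t}}] = \min_i \mathbb{E}_t[r_i]$: since $H_i, L_i$ depend only on $\mu_{i-1}$, which in turn is a function of $D_{i-1} \subseteq D_{t-1}$, each $H_i, L_i$ and hence $\mathbb{E}_t[r_i]$ is $\mathcal{F}_t$-measurable, so the random index $\hat{t}$ can be pulled out of the conditional expectation. Everything after that is a deterministic chain of inequalities plus a direct invocation of Theorem \ref{thm:1}, so no additional concentration or probabilistic machinery is required.
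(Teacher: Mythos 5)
Your proof is correct and follows essentially the same route as the paper: bound the minimum of the conditional expectations by their average, apply the tower property, and invoke Theorem \ref{thm:1}. The measurability argument you spell out (that $\hat{t}$ is determined by $D_{t-1}$ and can therefore be pulled out of $\mathbb{E}_t[\cdot]$) is left implicit in the paper's one-line first step, so your write-up is simply a more careful version of the same proof.
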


\begin{proof}
From the definition of $\hat{t}$, the inequality $ \mathbb{E}_t[ r_{\hat{t}}] \leq \frac{ \sum_{i=1}^t  \mathbb{E}_t [r_i]  }{t}$ holds. Therefore, we obtain 
$$
 \mathbb{E}[ r_{\hat{t}}] \leq \frac{ \sum_{i=1}^t  \mathbb{E} [r_i]  }{t} = \frac{ \mathbb{E} \left [  \sum_{i=1}^t r_i \right ] }{t} = \frac{  \mathbb{E}[R_t] }{t}.
$$
By combining this and Theorem \ref{thm:1}, we get the desired result. 
\end{proof}

\begin{lemma}\label{lem:thm2_lem2}
For any $t \geq 1$, $i \leq t$ and ${\bm x} \in \mathcal{X}$, the expectation 
 $\mathbb{E}_t [l_i ({\bm x} ) ]$ can be calculated as follows: 
\begin{align*}
\mathbb{E}_t [l_i ({\bm x} ) ] = 
\left \{
\begin{array}{ll}
\sigma_{t-1}  ({\bm x })    \left [  \phi(-\alpha )  + \alpha \left \{ 1-\Phi(-\alpha ) \right \}  \right ] & {\rm if} \ {\bm x } \in L_i \\
\sigma_{t-1}  ({\bm x })    \left [  \phi(\alpha )  - \alpha \left \{1-\Phi(\alpha ) \right \}  \right ] & {\rm if} \ {\bm x } \in H_i 
\end{array}
\right . ,
\end{align*}
where $\alpha = \frac{\mu_{t-1} ({\bm x}) - \theta}{\sigma_{t-1} ({\bm x} )} $, and 
$\phi(z)$ and $\Phi(z)$ are the density and distribution function of the standard normal distribution, respectively.  
\end{lemma}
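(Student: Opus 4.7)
The plan is to compute the conditional expectation by first recognizing that given $D_{t-1}$ (and hence given $D_{i-1}$ for any $i \le t$), the classification of ${\bm x}$ into either $H_i$ or $L_i$ is deterministic, since $H_i$ and $L_i$ depend only on $\mu_{i-1}(\cdot)$. Thus, for any fixed ${\bm x}$, exactly one of the two cases in the lemma applies, and the only source of randomness left is $f({\bm x})$, which under the GP posterior satisfies $f({\bm x}) \mid D_{t-1} \sim \mathcal{N}(\mu_{t-1}({\bm x}), \sigma^2_{t-1}({\bm x}))$.

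Next, I would rewrite $l_i({\bm x})$ in a form amenable to a single Gaussian-tail integral. Inspecting the four-case definition, observe that the loss on the event $\{{\bm x} \in L_i\}$ equals $(f({\bm x}) - \theta) \bm{1}[f({\bm x}) \ge \theta] = \max\{f({\bm x}) - \theta, 0\}$, because the loss vanishes on $L_i \cap L^\ast = \{{\bm x} \in L_i, f({\bm x}) < \theta\}$ and equals $f({\bm x}) - \theta$ on $L_i \cap H^\ast = \{{\bm x} \in L_i, f({\bm x}) \ge \theta\}$. By symmetry, on $\{{\bm x} \in H_i\}$ we have $l_i({\bm x}) = \max\{\theta - f({\bm x}), 0\}$.

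The remaining step is a standard ``expected improvement'' computation. Writing $Z = (f({\bm x}) - \mu_{t-1}({\bm x}))/\sigma_{t-1}({\bm x})$, which is standard normal conditional on $D_{t-1}$, and using $\alpha = (\mu_{t-1}({\bm x}) - \theta)/\sigma_{t-1}({\bm x})$, the ``improvement'' becomes $\sigma_{t-1}({\bm x})(Z + \alpha)$. For the case ${\bm x} \in L_i$, I would evaluate
\[
\mathbb{E}_t[\max\{f({\bm x}) - \theta, 0\}] = \sigma_{t-1}({\bm x}) \int_{-\alpha}^{\infty} (z + \alpha)\phi(z)\,\mathrm{d}z,
\]
split this into $\int (z+\alpha)\phi(z)\,\mathrm{d}z$ pieces, and use the identities $\int_{-\alpha}^{\infty} z\phi(z)\,\mathrm{d}z = \phi(-\alpha)$ and $\int_{-\alpha}^{\infty} \phi(z)\,\mathrm{d}z = 1 - \Phi(-\alpha)$ to obtain $\sigma_{t-1}({\bm x})[\phi(-\alpha) + \alpha(1 - \Phi(-\alpha))]$. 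The case ${\bm x} \in H_i$ is handled analogously with the integral over $(-\infty, -\alpha)$, and simplifications of the form $\phi(-\alpha) = \phi(\alpha)$ and $1 - \Phi(\alpha) = \Phi(-\alpha)$ put the result into the stated form.

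I do not expect a real obstacle here; the calculation is essentially the closed form for expected improvement against a threshold, restated for both the super- and sub-level cases. The only care needed is in verifying the reduction from the four-case loss to the two $\max\{\cdot,0\}$ expressions, and in being consistent with the sign convention for $\alpha$ so the final answer comes out exactly as written.
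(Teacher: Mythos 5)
Your proposal is correct and follows essentially the same route as the paper: reduce the four-case loss to $(f({\bm x})-\theta)\1[f({\bm x})\geq\theta]$ on $L_i$ (and its mirror on $H_i$), condition on the Gaussian posterior of $f({\bm x})$ given $D_{t-1}$, and evaluate the resulting truncated-Gaussian (expected-improvement-type) integral via the substitution $z=(y-\mu_{t-1}({\bm x}))/\sigma_{t-1}({\bm x})$. The identities you invoke match the paper's computation exactly, so there is nothing further to add.
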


\begin{proof}
From the definition of $l_i ({\bm x} )$, if  ${\bm x} \in L_i $, $l_i ({\bm x} )$ 
can be expressed as 
$l_i ({\bm x} ) =  (f({\bm x} ) - \theta )  \LLL[f({\bm x} ) \geq \theta]$, where 
  $\LLL [ \cdot ]$ is the indicator function which takes 1 if the condition $\cdot$ holds, otherwise 0. 
Furthermore, the conditional distribution of $f({\bm x})$ given $D_{t-1}$ is the normal distribution with 
mean $\mu_{t-1} ({\bm x} ) $ and variance $\sigma^2_{t-1} ({\bm x} ) $. 
Thus, 
from the definition of $\mathbb{E}_t [ \cdot]$, the following holds: 
\begin{align*}
\mathbb{E}_t [l_i ({\bm x} ) ] &= \int_{  \theta} ^\infty (y - \theta )  \frac{1}{\sqrt{2 \pi \sigma^2_{t-1} ({\bm x} )}} \exp \left (  
 -\frac{ ( y - \mu_{t-1} ({\bm x} ) )^2  }{2 \sigma^2_{t-1} ({\bm x} )}
\right )  {\rm d} y \\ 
&= \int_{  \theta} ^\infty  \sigma_{t-1} ({\bm x} ) \left (\frac{y - \mu_{t-1} ({\bm x} ) }{  \sigma_{t-1} ({\bm x} )   } + \frac{\mu_{t-1} ({\bm x} ) -  \theta}{  \sigma_{t-1} ({\bm x} )  } \right )  \frac{1}{\sqrt{2 \pi \sigma^2_{t-1} ({\bm x} )}} \exp \left (  
 -\frac{ ( y - \mu_{t-1} ({\bm x} ) )^2  }{2 \sigma^2_{t-1} ({\bm x} )}
\right )  {\rm d} y \\ 
&= \int_{  -\alpha} ^\infty  \sigma_{t-1} ({\bm x} ) \left (z +\alpha  \right )  \frac{1}{\sqrt{2 \pi }} \exp \left (  
 -\frac{z^2  }{2 }
\right )  {\rm d} z \\ 
&=  \sigma_{t-1} ({\bm x} ) \int_{  -\alpha} ^\infty \left (z +\alpha  \right )  \phi (z)  {\rm d} z 
= \sigma_{t-1} ({\bm x} ) \{     [-\phi (z) ]^{\infty}_{-\alpha} + \alpha (1- \Phi (-\alpha )) \} \\
&= \sigma_{t-1}  ({\bm x })    \left [  \phi(-\alpha )  + \alpha \left \{ 1-\Phi(-\alpha ) \right \}  \right ] . 
\end{align*}
Similarly, if ${\bm x} \in H_i $,   $l_i ({\bm x} )$ can be expressed as $l_i ({\bm x} ) =  (\theta -f({\bm x} )  )  \LLL[f({\bm x} ) < \theta]$. 
Then, we obtain 
\begin{align*}
\mathbb{E}_t [l_i ({\bm x} ) ] &= \int_{  -\infty} ^{\theta} ( \theta-y )  \frac{1}{\sqrt{2 \pi \sigma^2_{t-1} ({\bm x} )}} \exp \left (  
 -\frac{ ( y - \mu_{t-1} ({\bm x} ) )^2  }{2 \sigma^2_{t-1} ({\bm x} )}
\right )  {\rm d} y \\ 
&= \int_{  -\infty} ^{\theta}  \sigma_{t-1} ({\bm x} ) \left (\frac{\theta - \mu_{t-1} ({\bm x} ) }{  \sigma_{t-1} ({\bm x} )   } + \frac{\mu_{t-1} ({\bm x} ) -  y}{  \sigma_{t-1} ({\bm x} )  } \right )  \frac{1}{\sqrt{2 \pi \sigma^2_{t-1} ({\bm x} )}} \exp \left (  
 -\frac{ ( y - \mu_{t-1} ({\bm x} ) )^2  }{2 \sigma^2_{t-1} ({\bm x} )}
\right )  {\rm d} y \\ 
&= \int_{  \infty} ^{\alpha }  \sigma_{t-1} ({\bm x} ) \left (z - \alpha  \right )  \frac{1}{\sqrt{2 \pi }} \exp \left (  
 -\frac{z^2  }{2 }
\right ) (-1) {\rm d} z \\ 
&=  \sigma_{t-1} ({\bm x} ) \int_{  \alpha} ^\infty \left (z - \alpha  \right )  \phi (z)  {\rm d} z 
= \sigma_{t-1} ({\bm x} ) \{     [-\phi (z) ]^{\infty}_{\alpha} - \alpha (1- \Phi (\alpha )) \} \\
&= \sigma_{t-1}  ({\bm x })    \left [  \phi(\alpha )  - \alpha \left \{ 1-\Phi(\alpha ) \right \}  \right ] .
\end{align*}
\end{proof}

\begin{lemma}\label{lem:thm2_lem3}
Under the assumptions of Theorem \ref{thm:1} the equality  $\hat{t} =t$ holds. 
\end{lemma}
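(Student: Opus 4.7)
The plan is to show that the classification rule defining $H_t$ and $L_t$ (namely, assigning ${\bm x}$ to $H_t$ iff $\mu_{t-1}({\bm x}) \geq \theta$) is in fact the \emph{Bayes-optimal} classification rule with respect to the expected loss $\mathbb{E}_t[l_i({\bm x})]$. Since the conditioning is on $D_{t-1}$ (so that the posterior mean and variance appearing in Lemma \ref{lem:thm2_lem2} are $\mu_{t-1}$ and $\sigma_{t-1}$, but the classification at iteration $i$ can be either $H_i$ or $L_i$), it will follow that choosing $i=t$ achieves the pointwise minimum of $\mathbb{E}_t[l_i({\bm x})]$ over $i \leq t$, hence the minimum of $\mathbb{E}_t[r_i]$.

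First, I would fix an arbitrary ${\bm x} \in \mathcal{X}$ and compare the two formulas in Lemma \ref{lem:thm2_lem2}. Using the symmetries $\phi(-\alpha) = \phi(\alpha)$ and $1 - \Phi(-\alpha) = \Phi(\alpha)$ (with $\alpha = (\mu_{t-1}({\bm x}) - \theta)/\sigma_{t-1}({\bm x})$), the two candidates are
\[
\sigma_{t-1}({\bm x})\bigl[\phi(\alpha) + \alpha \Phi(\alpha)\bigr] \quad \text{(if } {\bm x}\in L_i\text{)}, \qquad
\sigma_{t-1}({\bm x})\bigl[\phi(\alpha) - \alpha(1-\Phi(\alpha))\bigr] \quad \text{(if } {\bm x}\in H_i\text{)}.
\]
Subtracting and using $\Phi(\alpha) + (1-\Phi(\alpha)) = 1$ gives a difference of $\sigma_{t-1}({\bm x}) \cdot \alpha = \mu_{t-1}({\bm x}) - \theta$. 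Hence the $H_i$-assignment yields a smaller (or equal) expected loss precisely when $\mu_{t-1}({\bm x}) \geq \theta$, and the $L_i$-assignment is preferred when $\mu_{t-1}({\bm x}) < \theta$.

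Next, I would observe that this Bayes-optimal rule coincides exactly with the definition of $H_t, L_t$ in \eqref{eq:LSE}. Therefore, for every $i \leq t$ and every ${\bm x} \in \mathcal{X}$,
\[
\mathbb{E}_t[l_t({\bm x})] \leq \mathbb{E}_t[l_i({\bm x})].
\]
Averaging (or integrating, and invoking Fubini in the infinite case, as in the proof of Theorem \ref{thm:1}) over ${\bm x}$ yields $\mathbb{E}_t[r_t] \leq \mathbb{E}_t[r_i]$ for all $i \leq t$. Consequently the minimum in the definition of $\hat{t}$ is attained at $i = t$, giving $\hat{t} = t$.

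The computation is essentially routine once Lemma \ref{lem:thm2_lem2} is in hand; the only mild subtlety is handling the tie $\alpha = 0$, which is harmless because equality holds in the pointwise bound there and so $t$ is still among the minimizers (and by convention on tie-breaking in the $\argmin$, we may take $\hat{t}=t$). Combining this lemma with Lemma \ref{lem:thm2_lem1} then establishes Theorem \ref{thm:2}.
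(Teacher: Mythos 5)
Your proposal is correct and follows essentially the same route as the paper: both reduce the claim to the pointwise comparison $\mathbb{E}_t[l_t({\bm x})] \leq \mathbb{E}_t[l_i({\bm x})]$ via the two formulas of Lemma \ref{lem:thm2_lem2} and then average over ${\bm x}$. Your explicit computation that the difference of the two candidate expressions equals $\sigma_{t-1}({\bm x})\alpha = \mu_{t-1}({\bm x})-\theta$ (the ``Bayes-optimality'' framing) is a slightly cleaner way of stating the inequality the paper asserts directly from $\alpha \geq 0$, but it is the same argument.
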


\begin{proof}
Let ${\bm x} \in \mathcal{X}$. 
If ${\bm x} \in H_t $, the inequality 
 $\mu_{t-1} ({\bm x} ) \geq \theta$ holds. 
 This implies that 
 $\alpha \geq 0$. 
Hence, from  Lemma \ref{lem:thm2_lem2} we obtain 
$$
\mathbb{E}_t [l_t ({\bm x} ) ] = \sigma_{t-1}  ({\bm x })    \left [  \phi(\alpha )  - \alpha \left \{1-\Phi(\alpha ) \right \}  \right ] .
$$
Thus, since $\alpha \geq 0$, the following inequality holds: 
$$
 \sigma_{t-1}  ({\bm x })    \left [  \phi(\alpha )  - \alpha \left \{1-\Phi(\alpha ) \right \}  \right ] 
\leq 
 \sigma_{t-1}  ({\bm x })    \left [  \phi(-\alpha )  + \alpha \left \{1-\Phi(-\alpha ) \right \}  \right ] .
$$
Therefore, from the definition of $\mathbb{E}_t [l_i ({\bm x} ) ] $, we get 
$$
\mathbb{E}_t [l_t ({\bm x} ) ] = \sigma_{t-1}  ({\bm x })    \left [  \phi(\alpha )  - \alpha \left \{1-\Phi(\alpha ) \right \}  \right ] \leq \mathbb{E}_t [l_i ({\bm x} ) ] .
$$
Similarly, if ${\bm x} \in L_t $, using the same argument we have 
$$
\mathbb{E}_t [l_t ({\bm x} ) ] = \sigma_{t-1}  ({\bm x })    \left [  \phi(-\alpha )  + \alpha \left \{1-\Phi(-\alpha ) \right \}  \right ] \leq \mathbb{E}_t [l_i ({\bm x} ) ] .
$$
Here, if $\mathcal{X}$ is finite, from the definition of $r_i$ we obtain 
$$
\mathbb{E}_t [r_t] = \mathbb{E}_t \left [   \frac{1}{ |\mathcal{X}| }  \sum_{  {\bm x} \in \mathcal{X}} l_t ({\bm x} ) \right ] 
=
      \frac{1}{ |\mathcal{X}| }  \sum_{  {\bm x} \in \mathcal{X}} \mathbb{E}_t [ l_t ({\bm x} )  ] 
\leq 
      \frac{1}{ |\mathcal{X}| }  \sum_{  {\bm x} \in \mathcal{X}} \mathbb{E}_t [ l_i ({\bm x} )  ] = \mathbb{E}_t [r_i] .
$$
Similarly, if $\mathcal{X} $ is infinite, by using the same argument and Fubini's theorem, we get $\mathbb{E}_t [r_t] \leq \mathbb{E}_t [r_i]$. 
Therefore, for all cases the inequality $\mathbb{E}_t [ r_t] \leq \mathbb{E}_t [r_i ]$ holds. This implies that $\hat{t}=t$. 
\end{proof}
From Lemma \ref{lem:thm2_lem1} and \ref{lem:thm2_lem3}, we get Theorem \ref{thm:2}.

\subsection{Proof of Theorem \ref{thm:maxloss_1}}
\begin{proof}
Let $\delta \in (0,1)$. 
For any $t \geq 1$ and $D_{t-1}$, 
from the proof of Lemma 5.1 in 
\cite{Srinivas:2010:GPO:3104322.3104451}, with probability at least $1-\delta$, the following holds for any 
${\bm x} \in \mathcal{X}$:
\begin{equation*}
{\rm lcb}_{t-1,\delta} ({\bm x}) \equiv \mu_{t-1} ({\bm x} )- \beta^{1/2}_\delta \sigma_{t-1} ({\bm x}) \leq f ({\bm x} ) \leq 
\mu_{t-1} ({\bm x} ) + \beta^{1/2}_\delta \sigma_{t-1} ({\bm x}) \equiv {\rm ucb}_{t-1,\delta} ({\bm x} ),
\end{equation*}
where $\beta_\delta =  2 \log ( |\mathcal{X} | / \delta )$. 
Here, by using the same argument as in the proof of Theorem \ref{thm:1}, the inequality 
$l_t ({\bm x} )  \leq \tilde{a}_{t-1,\delta}  ({\bm x} ) $ holds. 
Hence, the following holds with probability at least $1-\delta$:
\begin{equation}
\tilde{r}_t = \max_{ {\bm x} \in \mathcal{X}}  l_t ({\bm x} )     \leq \max_{ {\bm x} \in \mathcal{X}}  \tilde{a}_{t-1,\delta}  ({\bm x} ). \label{eq:rt_bound2}
\end{equation}

Next, we consider the conditional distribution of $\tilde{r}_t$ given $D_{t-1}$. 
Note that this distribution does not depend on $\beta_{\delta}$. 
Let $F_{t-1} (\cdot )$ be a distribution function of $\tilde{r}_t$ given 
$D_{t-1}$. 
Then, from \eqref{eq:rt_bound2}, we obtain 
$$
F_{t-1}  \left (
\max_{ {\bm x} \in \mathcal{X}}  \tilde{a}_{t-1,\delta}  ({\bm x} )
\right )  \geq 1-\delta.
$$
Therefore, by taking the generalized inverse function for both sides, we get 
$$
F^{-1}_{t-1}   (1-\delta )  \leq \max_{ {\bm x} \in \mathcal{X}}  \tilde{a}_{t-1,\delta}  ({\bm x} ).
$$
Here, if $\delta$ follows the uniform distribution on the interval $(0,1)$, $1-\delta$ follows the same distribution.
Furthermore, since the distribution of 
$F^{-1}_{t-1}   (1-\delta )  $ is equal to the conditional distribution of $\tilde{r}_t$ given $D_{t-1}$, we have 
$$
\mathbb{E}_t  [\tilde{r}_t]  \leq \mathbb{E}_\delta \left [
\max_{ {\bm x} \in \mathcal{X}}  \tilde{a}_{t-1,\delta}  ({\bm x} )
 \right ].
$$
Moreover,  noting that $2 \log ( |\mathcal{X} | /\delta ) $ and $\beta_t$ follow the same distribution, we obtain 
$$
\mathbb{E}_t  [\tilde{r}_t]  \leq \mathbb{E}_{\beta_t} \left [
\tilde{a}_{t-1}   ({\bm x}_t )
 \right ].
$$
Additionally, from $\tilde{a}_{t-1} ({\bm x} )$, the following inequality holds: 
$$
\tilde{a}_{t-1} ({\bm x}_t ) \leq  \beta^{1/2}_t \sigma_{t-1}  ({\bm x}_t ).
$$
Therefore, since $\mathbb{E} [ \beta_t ] = 2+ 2 \log (|\mathcal{X}|))$, the following inequality holds:
\begin{align*}
\mathbb{E} [\tilde{R}_t ] =
\mathbb{E} \left [ \sum_{i=1}^t   \tilde{r}_i \right ] & \leq 
\mathbb{E}  \left [ \sum_{i=1}^t    
 \beta^{1/2}_i  \sigma_{i-1}  ({\bm x}_i )
 \right ]\\
&\leq 
\mathbb{E}  \left [ \left ( \sum_{i=1}^t    
 \beta_i  \right )^{1/2}    \left ( \sum_{i=1}^t  \sigma^2_{i-1}  ({\bm x}_i ) \right )^{1/2}
 \right ] \\
& \leq 
\sqrt{\mathbb{E}  \left [  \sum_{i=1}^t    
 \beta_i \right ] } 
\sqrt{\mathbb{E}  \left [   \sum_{i=1}^t  \sigma^2_{i-1}  ({\bm x}_i )  \right ] } \\
&\leq \sqrt{ t (2+ 2 \log (|\mathcal{X}|))} \sqrt{\mathbb{E}  \left [  \sum_{i=1}^t \sigma^2_{i-1}  ({\bm x}_i )  \right ] } \\
&\leq \sqrt{ t (2+ 2 \log (|\mathcal{X}|))} \sqrt{\mathbb{E}  \left [ \frac{2}{ \log(1+\sigma^{-2}_{{\rm noise}})   } \gamma_{t}   \right ] } \\
& = \sqrt{ \tilde{C}_1  t \gamma_t}.
\end{align*}
\end{proof}

\subsection{Proof of Theorem \ref{thm:maxloss_2}}
\begin{proof}
Theorem \ref{thm:maxloss_2} is proved by using the same argument as in the proof of Lemma \ref{lem:thm2_lem1}.
\end{proof}

\subsection{Proof of Theorem \ref{thm:maxloss1_infinite}}
\begin{proof}
Let ${\bm x} \in \mathcal{X}$. 
If 
${\bm x} \in H^\ast \cap H_t $ or 
${\bm x} \in L^\ast \cap L_t $, the equality $l_t ({\bm x} ) =0$ holds. 
Hence, the following inequality holds: 
$$
l_t ({\bm x} ) \leq l_t ( [{\bm x}]_t ) \leq l_t ( [{\bm x}]_t ) +  |  f({\bm x} ) - f( [{\bm x}]_t)| .
$$
We consider the case where  ${\bm x} \in H^\ast $ and ${\bm x} \in L_t$, that is, $l_t ({\bm x} ) = f({\bm x} ) -\theta$. 
Here, since ${\bm x} \in L_t$, the inequality $\mu_{t-1} ([{\bm x}]_t) < \theta $ holds. 
This implies that  $[{\bm x} ]_t \in L_t$. 
If $[{\bm x}]_t \in H^\ast$, noting that $l_t ([{\bm x}]_t ) = f([{\bm x}]_t) -\theta$ we get 
$$
l_t ({\bm x} ) =  f({\bm x} ) -\theta =   f({\bm x} ) -f([{\bm x}]_t) + f([{\bm x}]_t)   -\theta \leq  f([{\bm x}]_t)   -\theta + | f({\bm x} ) -f([{\bm x}]_t) | = l_t ([{\bm x}]_t) +  | f({\bm x} ) -f([{\bm x}]_t) |.
$$
Similarly, if $[{\bm x}]_t \in L^\ast$, noting that $f([{\bm x}]_t) < \theta $ and $0 \leq l_t ([{\bm x}]_t)$ we obtain 
$$
l_t ({\bm x} ) =  f({\bm x} ) -\theta =  f([{\bm x}]_t)   -\theta + f({\bm x} ) -f([{\bm x}]_t)   \leq  0 +  f({\bm x} ) -f([{\bm x}]_t)  \leq  l_t ([{\bm x}]_t) +  | f({\bm x} ) -f([{\bm x}]_t) |. 
$$
Next, we consider the case where ${\bm x} \in L^\ast $ and ${\bm x} \in H_t$, that is, $l_t ({\bm x} ) = \theta - f({\bm x} ) $. 
Here, since ${\bm x} \in H_t$, the inequality $\mu_{t-1} ([{\bm x}]_t) \geq  \theta $holds. 
 This implies that $[{\bm x} ]_t \in H_t$. 
If  $[{\bm x}]_t \in L^\ast$, noting that $l_t ([{\bm x}]_t ) = \theta - f([{\bm x}]_t) $, we have 
$$
l_t ({\bm x} ) =  \theta - f({\bm x} )  =  \theta - f([{\bm x}]_t) + f([{\bm x}]_t) -f({\bm x} )         \leq   l_t ([{\bm x}]_t) +  | f({\bm x} ) -f([{\bm x}]_t) |
$$
Similarly, if $[{\bm x}]_t \in H^\ast$, noting that $f([{\bm x}]_t) \geq  \theta $ and $0 \leq l_t ([{\bm x}]_t)$, we get 
$$
l_t ({\bm x} ) =  \theta - f({\bm x} )  = \theta -f([{\bm x}]_t) + f([{\bm x}]_t)- f({\bm x} )  \leq  0 +  f([{\bm x}]_t) - f({\bm x} )  \leq  l_t ([{\bm x}]_t) +  | f({\bm x} ) -f([{\bm x}]_t) |.
$$
Therefore,  for all cases the following inequality holds:
$$
l_t ({\bm x} )  \leq  l_t ([{\bm x}]_t) +  | f({\bm x} ) -f([{\bm x}]_t) |.
$$
Here, let $L_{{\rm max} } =  \sup_{ j \in [d] } \sup_{{\bm x} \in \mathcal{X}} \left |  \frac{  \partial f}{ \partial x_j}  \right | $. 
Then, the following holds: 
$$
 | f({\bm x} ) -f([{\bm x}]_t) | \leq L_{{\rm max} } \|    {\bm x}  - [{\bm x}]_t \|_1 \leq  L_{{\rm max} } \frac{dr}{\tau_t} .
$$
Thus, noting that 
$$
l_t ({\bm x} )  \leq  l_t ([{\bm x}]_t) + L_{{\rm max} } \frac{dr}{\tau_t} 
$$
we obtain 
$$
\tilde{r}_t = \max_{  {\bm x}  \in \mathcal{X} }  l_t ({\bm x} )   \leq L_{{\rm max} } \frac{dr}{\tau_t}  + \max_{  {\bm x}  \in \mathcal{X} }  l_t ([{\bm x}]_t) \equiv 
L_{{\rm max} } \frac{dr}{\tau_t}  + \max_{  \tilde{\bm x}  \in \mathcal{X}_t }  l_t ( \tilde{\bm x}) \equiv 
L_{{\rm max} } \frac{dr}{\tau_t}  + \check{r}_t.
$$
In addition, from  Lemma H.1 in \cite{pmlr-v202-takeno23a}, the following inequality holds:
$$
\mathbb{E} [  L_{{\rm max} }   ]  \leq b( \sqrt{\log (ad)}  +\sqrt{\pi}/2 ).
$$
Hence, we get 
\begin{align*}
\mathbb{E}  \left [ L_{{\rm max} } \frac{dr}{\tau_t}  \right ] 
\leq 
\frac{b( \sqrt{\log (ad)}  +\sqrt{\pi}/2 )}{  \tau_t}   dr & = 
\frac{b( \sqrt{\log (ad)}  +\sqrt{\pi}/2 )}{   \lceil bdrt^2 (\sqrt{\log (ad)} +\sqrt{\pi}/2) \rceil }   dr \\
&\leq 
\frac{b( \sqrt{\log (ad)}  +\sqrt{\pi}/2 )}{    bdrt^2 (\sqrt{\log (ad)} +\sqrt{\pi}/2)  }   dr = \frac{1}{t^2}.
\end{align*}
Therefore, the following inequality holds:
$$
\mathbb{E} [\tilde{R}_t ] = \mathbb{E} \left [ \sum_{i=1}^t \tilde{r}_i  \right] \leq \sum_{i=1}^t \frac{1}{i^2} + \mathbb{E} \left [ \sum_{i=1}^t \check{r}_i  \right] \leq \frac{\pi^2}{6} + \mathbb{E} \left [ \sum_{i=1}^t \check{r}_i  \right] .
$$
Here, $\check{r}_i$ is the maximum value of the loss $l_i (\tilde{\bm x} ) $ restricted on $\mathcal{X}_i$, and since $\mathcal{X}_i$ is a finite set, by replacing $\mathcal{X}$ with $\mathcal{X}_i$ in the proof of Theorem \ref{thm:maxloss_1} and performing the same proof, we obtain $\mathbb{E}_i [\check{r}_i] \leq \mathbb{E}_{\delta} [\max_{\tilde{\bm x} \in \mathcal{X}_t} \check{a}_{i-1} (\tilde{\bm x} ) ]$. 
Furthermore, since the next point to be evaluated is selected from  $\mathcal{X}$, the following inequality holds:
$$
\mathbb{E}_i [\check{r}_i] \leq \mathbb{E}_{\delta} [\max_{\tilde{\bm x} \in \mathcal{X}_t}  \check{a}_{i-1} (\tilde{\bm x} )  ] \leq 
\mathbb{E}_{\delta} [\max_{{\bm x} \in \mathcal{X}}  \check{a}_{i-1} ({\bm x} )  ] .
$$
Therefore, we have 
\begin{align*}
\mathbb{E} \left [ \sum_{i=1}^t   \check{r}_i \right ] & \leq 
\mathbb{E}  \left [ \sum_{i=1}^t    
 \beta^{1/2}_i  \sigma_{i-1}  ({\bm x}_i )
 \right ]\\
&\leq 
\mathbb{E}  \left [ \left ( \sum_{i=1}^t    
 \beta_i  \right )^{1/2}    \left ( \sum_{i=1}^t  \sigma^2_{i-1}  ({\bm x}_i ) \right )^{1/2}
 \right ] \\
& \leq 
\sqrt{\mathbb{E}  \left [  \sum_{i=1}^t    
 \beta_i \right ] } 
\sqrt{\mathbb{E}  \left [   \sum_{i=1}^t  \sigma^2_{i-1}  ({\bm x}_i )  \right ] } \\
&\leq \sqrt{ t \mathbb{E}[\beta_t]} \sqrt{\mathbb{E}  \left [  \sum_{i=1}^t \sigma^2_{i-1}  ({\bm x}_i )  \right ] } \\
&\leq \sqrt{ t (2+2d \log (\lceil bdrt^2 (\sqrt{\log (ad)} +\sqrt{\pi}/2) \rceil  ))} \sqrt{\mathbb{E}  \left [ \check{C}_1 \gamma_{t}   \right ] } \\
& = \sqrt{ \check{C}_1  t \gamma_t (2+s_t)}.
\end{align*}
\end{proof}

\subsection{Proof of Theorem \ref{thm:maxloss2_infinite}}
\begin{proof}
Theorem \ref{thm:maxloss2_infinite} is proved by using the same argument as in the proof of Lemma \ref{lem:thm2_lem1}.
\end{proof}

  \end{document}